\documentclass[]{IEEEtran}%

\usepackage{hyperref}       
\usepackage{url}            
\usepackage{booktabs}       
\usepackage{amsfonts}       
\usepackage{nicefrac}       
\usepackage{microtype}      
\usepackage{float} 
\usepackage{lipsum}
\usepackage[font=small,skip=10pt]{caption}
\setlength{\textfloatsep}{10pt}
\usepackage{algpseudocode}
\usepackage[ruled,vlined,linesnumbered]{algorithm2e}

\usepackage{stfloats}

\include{pythonlisting}
\usepackage{subcaption}
\usepackage[]{graphicx}
\usepackage{xcolor}
\usepackage{afterpage}

\usepackage{epsfig, graphicx, amsmath, amssymb,cite}
\usepackage{dsfont}
\usepackage{amsthm}
\usepackage{bm}
\usepackage{lipsum}
\usepackage{mwe}

\newtheorem{lemma}{Lemma}
\usepackage{flushend}
\newtheorem{theorem}{Theorem}
\newtheorem{corollary}{Corollary}
\DeclareMathOperator{\Tr}{Tr}
\newtheorem{definition}{\bf Definition}

\DeclareMathOperator*{\argmax}{\arg\!\max}
\DeclareMathOperator*{\argmin}{\arg\!\min}
\IEEEoverridecommandlockouts

\begin{document}
\title{\huge A hybrid model-based and learning-based approach for classification with limited number of data samples}
\author{\IEEEauthorblockN{Alireza Nooraiepour, Waheed U. Bajwa and Narayan B. Mandayam\\} 
\thanks{The authors are with WINLAB, Department of Electrical and Computer Engineering, Rutgers University, NJ, USA. Emails: {$\{$alinoora,narayan$\}$}@winlab.rutgers.edu, {waheed.bajwa}@rutgers.edu.

This work was supported by the National Science Foundation (NSF) under grants ECCS-2028823 and OAC-1940074, and in part by ACI-1541069.
}
    }
\date{}
\maketitle
\vspace{-3em}
\begin{abstract}
The fundamental task of classification is considered for physical systems with known parametric statistical models given a limited number of data samples. The proposed solution, \textsf{\textsc{HyPhyLearn}}, is a hybrid classification method, which exploits both the physics-based statistical models and the learning-based classifiers. Notably, the standalone learning-based and statistical classifiers face major challenges towards the fulfillment of the classification task. Specifically, the physics-based statistical models usually suffer from the inability to properly tuning the underlying unobservable parameters, which would lead to a mismatched representation of the system's behaviors. Learning-based classifiers, on the other hand, rely on a large number of data from the underlying physical process which might not be accessible in most practical scenarios. In this vein, we conjecture that \textsf{\textsc{HyPhyLearn}} would alleviate the difficulties associated with each approach by fusing their individual strength. The proposed hybrid approach estimates the the unobservable model parameters using the available (suboptimal) estimation procedures, and subsequently use the physics-based statistical models to generate \textit{synthetic data}. Then, the data samples are incorporated with the synthetic data in a learning-based classifier powered from adversarial training of the neural networks. In particular, to address the mismatch problem the classifier learns a mapping to a common feature space from the data samples and the synthetic data. Simultaneously, the classifier is trained to find discriminative features from this space in order to fulfill the classification task. Two case studies revolving around two concrete communication problems are presented in order to highlight the applicability of \textsf{\textsc{HyPhyLearn}}. Numerical results demonstrate that the proposed approach leads to major classification improvements in comparison to the existing standalone or hybrid classification methods.
\end{abstract}
\section{Introduction}
\label{section:Introduction}
We revisit the problem of classification with limited number of training data samples in this paper. The fundamental task of classification comes up in various fields and is traditionally tackled within two frameworks: $1)$ statistical setting, and $2)$ fully data-driven setting. In the first case, the main assumption is that data generation adheres to a known probabilistic model of the underlying physical process. Subsequently, the classification problem is usually dealt with within a hypothesis testing (HT) framework aimed at testing between two (or more) hypotheses. Here, optimality in both the Bayesian sense and the Neyman--Pearson sense relies on computation of the \textit{likelihood-ratio} terms, which requires clairvoyant knowledge of the probabilistic models under different hypotheses \cite{lehmann2005testing}. However, accurate modeling of the physical processes in increasingly complex engineered systems is either not tractable or it relies on a large number of unobservable parameters, estimation of which from limited number of data samples could be a major hurdle \cite{meroune,Fink}. As a result, a mismatch between the physics-based statistical models and the real physical processes is inevitable. This precludes exact computation of the {likelihood-ratio} values, which deteriorates the classification performance \cite{mismatch-sufficient}. The fully data-driven (i.e., learning based) setting, on the other hand, relies on a large number of data samples for finding an optimal mapping from the data samples to the corresponding labels. But availability of such data in many real-world problems, e.g., channel-based spoofing detection \cite{CFR1} and signal identification \cite{SignalIdentification}, is generally limited, which might lead to learning of a suboptimal map. Moreover, one should always expect mislabeled data in many applications, since the employed labeling procedures might not be error free. Consequently, classification performance of data-driven models can be seriously limited for many real-world applications.


The overarching objective of this paper is to develop an algorithmic framework for classification from limited number of training data samples in applications in which neither model-based nor learning-based approaches alone result in very good classification performance. To this end, note that learning-based approaches traditionally tend to disregard the physics-based models developed to describe the physical phenomena through tractable mathematical analysis. For instance, in the context of wireless communications, numerous theoretical models for channels and resource management have been developed over the years \cite{meroune,CFR1,CFR2}. Despite being approximations in many cases, these models provide important prior information about the corresponding physical systems that might be utilized to facilitate the subsequent classification tasks. At the same time, physics-based models consist of numerous unobservable parameters, the tuning of which is a major hurdle for complex systems \cite{Fink}. For example, physical channel models in the multi-input multi-output (MIMO) and 5G communications scenarios rely on a large number of multidimensional parameters that are defined over a mixed set of discrete and continuous spaces \cite{Rappaport-model,ImpactofIncomplete}. In such cases, the maximum likelihood estimation (MLE) of the parameters could incur a formidable computational cost \cite{ImpactofIncomplete,DMC-in11GHz,Buzzi0}. 
Our goal in this context is to develop a classification framework that can deal with these practical considerations through a hybrid approach that consolidates physics-based and fully data-driven classification approaches. The expectation is that the hybrid approach would fuse the strengths of the two approaches towards achieving an overall superior classification performance. 

Our proposed hybrid approach first employs the (necessarily) suboptimal parameter estimation methods to estimate the unobservable parameters. Then, it utilizes them in the physics-based models to generate \textit{synthetic data}, which enables us to leverage learning-based classification approaches. The mismatch between the physics-based models and the underlying physical process is addressed in a learning setting. Specifically, a neural network is trained to map the training and synthetic data to a common discriminative feature space, which is often referred to as domain-invariant space in the domain adaptation literature \cite{domain-invariant,DA-overview}. Meanwhile, a neural network-based classifier is trained on the mapped synthetic data to extract class-specific discriminative features from them. The resulting classifier in this way is expected to perform well on both synthetic and training data distributions.

\subsection{Relation to prior works}
In the realm of statistical model-based classifiers, the difficulties associated with estimating the parameters of the physics-based models are recognized in various works \cite{mismatch2001,mismatch-sufficient}. This is mainly attributed to the inherent difficulties associated with determining probability distributions from only a limited number of data samples. Along these lines, classification under the assumption of mismatched models is considered in several works \cite{mismacth-1980,mismatch2001,novelTight,mismatch-sufficient}. Specifically, \cite{mismatch2001,novelTight} derive bounds on the probability of classification error in the presence of mismatch via
the $f$-divergence between the true and mismatched distributions. In contrast to these bounds that are general in the sense that no assumption is made regarding the underlying distributions, \cite{mismatch-sufficient} considers data that are contained in a linear subspace. This enables the authors to derive an upper bound on the classification error of the mismatched model that predicts the presence/absence of an error floor. The analyses in these works, however, do not lead to a classification algorithm for the mismatched setting as they merely analyze the mismatch problem itself.


The mismatch problem for the learning-based classifiers corresponds to the cases where the distribution of the available training data is different from that of the test data. Such mismatches are primarily studied in the transfer learning (TL) and the data-shift literature \cite{DA-overview}. In particular, covariate shift \cite{covsh}, which is also studied under the name of transductive TL \cite{TLSurvey}, refers to the case where the underlying data distributions for the test and training data are different. Concept shift \cite{ConceptShift}, also known as inductive TL \cite{TLSurvey}, on the other hand, deals with situations in which the posterior distribution of the labels given the data is not the same for the training and the test data. A wide range of algorithms have been proposed in order to alleviate the performance loss due to such shifts. For example, importance-weighting technique \cite{importancewighting,KDE} is proposed for the covariate shift scenario to remove the bias from the training data. Furthermore, algorithms based on subspace mapping \cite{subspace} and learning domain-invariant representations \cite{domain-invariant} have also been proposed in the literature to address the mismatch problem. The authors in \cite{subspace} propose a transfer component analysis method aimed at finding a transformation under which the maximum mean discrepancy between the true and mismatched distributions is small. The work in \cite{domain-invariant} aims at finding a representation that is invariant for the training and test distributions in order to mitigate the effect of discrepancies in the subsequent learning tasks. For the specific task of classification, the authors in \cite{DANN} introduce the domain-adversarial neural network (DANN) framework, which extracts domain-invariant representations via (deep) neural networks that are discriminative for the training data in order to devise a classifier on the test data.

Deep transfer learning (DTL) is another prime subject related to our work that studies the transfer learning concept in the context of deep neural networks (DNNs). DTL considers a DNN that has been pre-trained on the training data as transferable knowledge useful for the test data. This knowledge can be transferred based on different strategies. The pre-trained DNNs can either be used directly for the test data, or serve as an intermediate feature extracting step {that} could facilitate the subsequent learning process for the test data. In another DTL strategy called \textit{fine-tuning}, the pre-trained DNN or, certain parts of it, is refined using the available test data to further improve the effectiveness of transfer knowledge. We refer the reader to \cite{TLSurvey-wirelesscomm,TL-Srurvey} for a survey on DTL methods.

Model-based deep learning is another related line of work that aims at designing systems whose operation combines physics-based models (domain knowledge) and data. To this end, two main strategies are typically exploited in such works, known as model-aided networks and DNN-aided inference. The former results in specialized DNN architectures by identifying structures in a model-based algorithm; e.g., an iterative structure for the case of deep unfolding \cite{hershey2014deep}. The latter primarily utilizes model-based methods for inference, but replaces explicit domain-specific computations with dedicated DNNs in order to facilitate operation in complex environments; e.g., using generative models for compressed sensing applications \cite{bora2017compressed}. We refer the readers to \cite{shlezinger2021modelbased} and references therein for the state-of-the-art strategies in model-based deep learning methods.

There also have been previous attempts to incorporate physics-inferred information in the fully data-driven setting. In the field of wireless communications, for instance, the authors in \cite{meroune} employ DTL to solve a specific resource management problem. Similarly, the task of signal classification is tackled via DTL under different practical assumptions, such as real propagation effects \cite{DTL-Signal-propagation-effect}, hardware impairments \cite{DTL-Signal-hardware} and weak received signal strength \cite{DTL-Signal-Weak-RSS}. These works utilize abundant data from an approximate model along with limited data from the real-world model in the DTL fine-tuning approach. More closely to the idea of physics-guided machine learning (ML), a recurrent neural network (RNN) is modified in \cite{RNN-physics} to incorporate information from the physics-based model as an internal state of the RNN. Furthermore, parameters of the physics-based models are combined with sensor
readings and used as input to a DNN to develop a hybrid prognostics model in \cite{Fink}. 

 We note that the aforementioned works in domain adaptation literature do not employ any available physics-based statistical models and, consequently, rely on large number of training data samples for dealing with the mismatch problem. In addition, model-based deep learning strategies might not be applicable to the statistical classification problem in general due to the lack of algorithmic structure such as an iterative structure. Equally importantly, DTL fine-tuning and physics-guided learning approaches do not consider the difficulties associated with estimating the physics-based parameters, which would indeed lead to inaccurate physics-based statistical models. The resulting discrepancy between the model and the underlying physical process necessitates a learning-based classifier that is capable of leveraging the data in a way to alleviate this mismatch problem.
\subsection{Our contributions}
The main contributions of this work are as follows.
\begin{itemize}
    \item We focus on the task of classification for a physical process assuming that a limited number of training data samples, with possibly mislabeled instances, is available. We consider the case where the physical process (or its approximation) can be described by physics-based parametric statistical models. As these models tend to be complex in general, estimation of the unknown model parameters using the maximum likelihood estimation (MLE) procedure could be computationally prohibitive.\footnote{As discussed later in Section \ref{section:Problem statement}, even using the MLE does not {always} provide any optimality {guarantees} in general for the classification problem in a HT setting \cite{GLRT-different-fields}.} We instead propose \textsf{\textsc{HyPhyLearn}}---a novel hybrid classification method---as a solution, which exploits both physics-based statistical models and learning-based classifiers. This approach {makes} use of (necessarily suboptimal) parameter estimation algorithms/heuristics to obtain (approximate) parameter estimates. Next, plugging in these estimates in the physics-based statistical models enables us to generate \textit{synthetic} data. \textsf{\textsc{HyPhyLearn}} then relies on neural networks (NNs), which are powerful tools for finding a discriminative feature space, towards {obtaining} a learning-based classifier. Specifically, the learning process involves training a NN to map the training and synthetic data to a common space under which they are not distinguishable. In the mean time, a learning-based classifier is trained on the synthetic data mapped to the new space to find discriminative class-level features. Indeed, learning the common feature space addresses the {distribution} mismatch problem between the {training} data samples and the generated synthetic data {due to the errors in parameter estimation}. It is then expected that the classifier trained on the mapped synthetic data will perform well on both data distributions. We repurpose theories from the domain adaptation literature based on learning invariant representations for our specific problem to justify the proposed hybrid approach. A schematic of \textsf{\textsc{HyPhyLearn}} for a binary classification example is illustrated in Fig. \ref{Fig:graphic}.
  \begin{figure*}[ht]
        \centering
         \includegraphics[width=14cm]{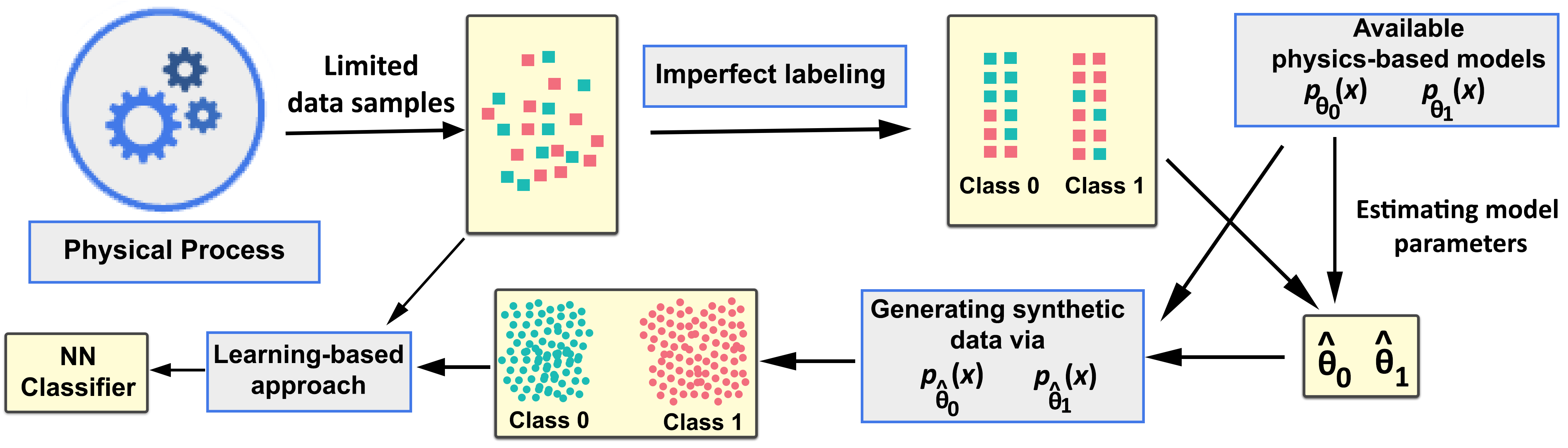} 
        \caption{A schematic of our proposed hybrid classification approach (\textsf{\textsc{HyPhyLearn}}) illustrated for a binary classification setting, which exploits both physics-based statistical models and learning-based classifiers.}
        \label{Fig:graphic}
 \end{figure*}
 
\item We also consider two prototypical problems from the wireless communications literature to investigate the performance of our proposed approach and show its superiority in comparison to the stand-alone statistical model-based classifiers as well as the fine-tuning approach as the best existing hybrid approach applicable to these problems. We first consider the problem of channel spoofing in the wireless communications setting, where an adversary (Eve) spoofs a legitimate transmitter (Alice) and sends a message to a legitimate receiver (Bob) \cite{CFR1,SurveyPLS,TCCN}. The {spoofing detection} at Bob involves making a decision on whether an incoming message corresponds to Alice or Eve. This can be cast as a binary classification problem at Bob. Second, we revisit the problem of multi-user detection (MUD) in the uplink of a cellular network, where different users are asynchronously sharing a channel with a base station \cite{Buzzi0}. For a $K$-user system, MUD is basically a $2^K$-ary classification problem {in which the} goal is to infer $K$ binary information bits from a given observation. By obtaining likelihood ratio test (LRT) for each problem, we show that statistical model-based classifiers rely heavily on the wireless channel parameters in the above problems. However, estimation performance of these parameters  suffers from both the paucity of training data and complexity of the physics-based statistical models. In fact, these models are complex {in the sense} that MLEs of the corresponding parameters {require} an exhaustive search over the space of the parameters, which is not feasible for many communication scenarios including MIMO transmissions in a 5G setting \cite{Rappaport-model}. 
For both problems, numerical results show that \textsf{\textsc{HyPhyLearn}} provides major improvements in terms of the classification accuracy in comparison to the best existing approaches.
\end{itemize}
\subsection{Notation and organization}
Throughout the paper, vectors are denoted with lowercase bold letters, while uppercase bold letters are reserved for matrices. Furthermore, equality by definition is expressed through the symbol $\overset{\bigtriangleup}{=}$. Non-bold letters are used to denote scalar values and calligraphic letters denote sets. Furthermore, the cardinality of a set $\mathcal{S}$ is denoted by $\lvert \mathcal{S}\rvert$. The spaces of real and complex vectors of length $d$ are denoted by $\mathbb{R}^d$ and $\mathbb{C}^d$, respectively. The $m$th element of a vector $\mathbf{u}$ and the trace of a matrix $\mathbf{U}$ {are} shown by $\mathbf{u}[m]$ and $\Tr(\mathbf{U})$, respectively. Also, real and imaginary parts of a complex number $a$ are denoted by $\Re\{a\}$ and $\Im\{a\}$, respectively. The probability density function and expectation of a random variable $w$ are denoted by $p(w)$ and $\mathbb{E}_p(w)$, respectively, while $\mathbb{P}[\cdot]$ is used to denote the probability of an event. The Gaussian and circularly-symmetric complex Gaussian distributions are denoted by $\mathcal{N}$ and $\mathcal{CN}$, respectively, while the uniform distribution {supported} between two real numbers $a$ and $b$ is denoted by $\text{unif}(a,b)$. We denote the $k$th standard basis vector of length $N$ in $\mathbb{R}^N$ by $\mathbf{e}_k$, and use $\mathbf{\|\mathbf{u}\|}$ to refer to the Euclidean norm of the vector $\mathbf{u}$. We refer to identity matrix of size $N$ and the indicator function by $\mathbf{I}_{N}$ and $\mathds{1}_{\mathcal{A}}(\mathbf{x})\overset{\bigtriangleup}{=}\begin{cases}
1, \mathbf{x}\in \mathcal{A}\\
0, \mathbf{x}\notin \mathcal{A}
\end{cases}$, respectively. Transpose and conjugate transpose of $\mathbf{u}$ are denoted by $\mathbf{u}^T$ and $\mathbf{u}^H$, respectively. Furthermore, $\mathbf{e}_n(y)$ refers to a \textit{one-hot} encoded version of a non-negative integer $y$, which equals to an all-zero vector of length $n$ except for the $y$th element which is set to $1$. Also, $\circ$ and
$\odot$ denote the Schur componentwise and the Khatri-Rao product, respectively, while $\otimes$ is reserved for the Kronecker product. Finally, given two vectors $\mathbf{a}$ and $\mathbf{b}$ of length $M$, Toeplitz matrix of size $M\times M$ is defined as
$\text{toep}(\mathbf{a},\mathbf{b})\overset{\bigtriangleup}{=}\begin{bmatrix}\mathbf{a}[1]&\mathbf{b}[2]&\dots &\mathbf{b}[M]\\\mathbf{a}[2]&\ddots&\ddots&\mathbf{b}[M-1]\\\vdots&\ddots&\ddots&\vdots\\\mathbf{a}[M]&\mathbf{a}[M-1]&\dots&\mathbf{a}[1]\end{bmatrix}$.\normalsize

The rest of the paper is organized as follows. The problem is formally posed in Section \ref{section:Problem statement}. Our proposed solution is described in {Section}~\ref{Section:proposed solution}{, which discusses} various pieces of \textsf{\textsc{HyPhyLearn}} approach. We introduce the first case study involving the spoofing detection problem in Section \ref{Section:Case study 1: Spoofing detection via channel frequency response}. The second case study, which concerns the multi-user detection problem, is presented in Section \ref{section:Case study 2: Multi-user detection}. We present numerical {results} concerning the application of our proposed approach in the above two case studies in Section \ref{section:Nuemrial Results}, and contrast it with the existing methods. Finally, the paper is concluded in Section \ref{section:Conclusions}.
\section{Problem Formulation}
\label{section:Problem statement}
Consider a physical process consisting of $C$ distinct behaviors where the physics-based parametric statistical {model} for the $i$th behavior is available in the form of a parametric probability density function (PDF) denoted by the conditional prior $p_i(\mathbf{x};{\bm{\theta}_i})$ {on observations $\mathbf{x}$ that belong to an observation space $\mathcal{X}$}. Assuming the true underlying parameter for the $i$th behavior is $\bm{\theta}^*_i$, the data for this behavior is generated by drawing independent and identically distributed (i.i.d.) samples from $p_i(\mathbf{x};{\bm{\theta}^*_i})$. {Assuming further} that the $i$th behavior is chosen with a prior probability $\pi_i$, our goal is to devise a decision rule to determine a given sample $\mathbf{x}=[x_1,\dots,x_n]^T$ is generated under which behavior. {Clearly}, this can be cast as a $C$-ary classification problem via
$
    H_i:\mathbf{x}\sim p_i(\mathbf{x};{\bm{\theta}^*_i}),\ i=0,\dots,C-1
$. We consider the case where this decision is made by a classifier {$h_{\bm\phi}(\cdot)$} parameterized by $\bm{\phi}\in\mathbb{R}^d$, $h_{\bm\phi}(\mathbf{x}):\mathcal{X}\rightarrow\{0,\dots,C-1\}$, which partitions $\mathcal{X}$ into $C$ disjoint sets, $\{\mathcal{X}_i\}$, and decides in favor of $H_i$ if $\mathbf{x}\in\mathcal{X}_i$. {Defining} ${\bm{\theta}^*} {\overset{\bigtriangleup}{=}} [{\bm{\theta}^*_0},\dots,{\bm{\theta}^*_{C-1}}]$, we denote {the} probability of error associated with $h_{\bm\phi}(\mathbf{x})$ by $ \mathbb{P}_{\bm{\theta}^*}[{e}_\mathbf{\phi}]$, which can be computed as
\begin{align}
\label{eq:err-probability}
    \mathbb{P}_{\bm{\theta}^*}[{e}_{\bm{\phi}}]=\sum_{i=0}^{C-1}\pi_i\int_{\mathcal{X}}p_i(\mathbf{x};{\bm{\theta}^*_i})\mathds{1}_{\{h_{\bm{\phi}}(\mathbf{x})\neq i\}}(\mathbf{x})d\mathbf x, 
\end{align}
where ${e}_{\bm{\phi}}$ indicates the event that $h_{\bm\phi}(\mathbf{x})$ makes an erroneous decision. The optimal classifier $h_{\bm\phi^*}(\mathbf{x})$ that minimizes the error probability is given by the Bayes decision rule, i.e., $
    h_{\phi^*}(\mathbf{x})=\argmax_{i=0,\dots,C-1}\ \pi_i p_i(\mathbf{x};{\bm{\theta}^*_i})$ \cite{lehmann2005testing}. For the specific case of $C=2$, this rule takes the famous form of the likelihood ratio test, $\frac{ p_1(\mathbf{x};{\bm{\theta}^*_1})}{ p_0(\mathbf{x};{\bm{\theta}^*_0})}\underset{y=0}{\overset{y=1}{\gtreqless}}\frac{\pi_0}{\pi_1}$, where $y=i$ implies making a decision in favor of the $i$th behavior. 

We focus {in this paper} on the case where although the parametric model $p_i(\mathbf{x};{\bm{\theta}_i})$ is known for the $i$th behavior, one does not have access to the corresponding underlying true parameter $\bm{\theta}^*_i$. Instead, only a {small} number of training data generated in an i.i.d.\ manner from $p_i(\mathbf{x};{\bm{\theta}^*_i}), \forall i,$ {are available}. Specifically, we {denote} the available dataset by {$\mathcal{D}_r=\{\mathbf{x}_{r,n}\}_{n=1}^{N_{r}}$}, where $N_{r}$ is the total number of data samples. Also, the corresponding ground-truth label for the {$n$th sample is denoted by $y_{r,n}$} which is only given for $N_{r,l}$ number of data samples where $N_{r,l}\leq N_{r}$. Furthermore, we consider the case where the model $p_i(\mathbf{x};{\bm{\theta}_i})$ under the $i$th behavior is a {non-trivial} function of the underlying parameter for which conventional estimation procedures {such as} maximum likelihood estimation (MLE) are either not available or {are} computationally prohibitive to implement. {The implication of this aspect of the problem formulation is that the} performance of {any} suboptimal parameter estimation {method is bound to be} limited. As a result, statistical model-based classifiers, which plug-in these estimates in $p_i(\mathbf{x};{\bm{\theta}_i})$, would have a deteriorated performance as well. 

Unlike these classifiers that rely heavily on the knowledge of {the} parametric statistical models and the estimated parameters, a purely data-driven approach can result in a classifier that disregards the available parametric models. However, as the data generation processes are governed {by non-trivial} models, a large number of data is needed in this case to extract related patterns from each behavior {that would lead to a highly} discriminative feature space. By noting that the performance of the fully data-driven and the statistical model-based classifiers is particularly curbed when they are used in {a} stand-alone fashion, we conjecture that fusing the {strengths} of the two can lead to a superior classification algorithm in our setting, as described in the next section. 

Before delving into the proposed solution {for the described problem setting}, we {discuss further} two existing approaches towards obtaining a statistical model-based classifier {for the benefit of the reader}. {Recall that within the framework of statistical model-based classification}, one would {first} estimate the {unknown model} parameters as $\widehat{\bm{\theta}}_i$'s, $i=1,\dots,C$, and plug them in the available models to obtain $p_i(\mathbf{x};{\widehat{\bm{\theta}}}_i)$. The resulting plug-in {models are} then used in practice in lieu of the {true models within the} optimal Bayes decision rule. The parameters{, $\bm{\phi}$,} of {the resulting \emph{plug-in}} classifier consist solely of the parameters of physics-based statistical models, i.e., $\bm{\phi}={\bm{\theta}}=[{\bm{\theta}}_0,\dots,{\bm{\theta}}_{C-1}]$.\footnote{For notational simplicity and without loss of generality, we have not included the priors as part of the unknown parameters in the current discussion.} Based on this fact, we denote the \emph{plug-in} classifier by $h_{\bm{\theta}}(\mathbf{x})$ in the remainder of this section. The unknown model parameters can be estimated {using numerous approaches}. In the following, we discuss two of the most {popular} ways to estimate {them as well as the shortcomings of these approaches that warrant a new approach to classification}.

    \textbf{Empirical error minimizer}: Given a set of training data with their corresponding labels, {$\{\mathbf{x}_{r,n},y_{r,n}\}_{n=1}^{N_r}$}, the most natural approach for parameter estimation {corresponds to the setting in which} the resulting plug-in classifier, {$h_{{\bm{\theta}}}(\mathbf{x})$}, minimizes the \textit{empirical} error probability defined by {$\widehat{\mathbb{P}}^{N_r}[{e}_{\bm{\theta}}]\overset{\bigtriangleup}{=}\frac{1}{N_r}\sum_{n=1}^{N_r}\mathds{1}_{\{h_{\bm{\theta}}(\mathbf{x}_{r,n})\neq y_{r,n}\}}$}. Specifically, for the case of $C=2$ consider the family of the classifiers
$
    {h_{{\bm{\theta}}}(\mathbf{x})}=\begin{cases}
    0, \quad   \pi p_{{\bm\theta}_0}(\mathbf{x})>(1-\pi) p_{{\bm\theta}_1}(\mathbf{x}),\\
    1, \quad \text{otherwise},
    \end{cases}
$ for which the parameter values ${{\bm\theta}_0}$ and ${{\bm\theta}_1}$ are chosen from a space $\bm{\Theta}$. The parameter estimates {that} minimize the empirical error are obtained as $\widehat{\bm{\theta}}=[\widehat{\bm{\theta}}_0,\widehat{\bm{\theta}}_1] \in \argmin_{{\bm{\theta}}} \widehat{\mathbb{P}}^{N_r}[{e}_{{\bm{\theta}}}]$. The following lemma, which is a direct result of Corollary $16.1$ in \cite{pbook}, presents an upper bound on the performance of the Bayes decision rule in terms of that of the plug-in classifier {that is} obtained {using} empirical error minimization.
 \begin{lemma}
 If $\bm{\theta}^*_0,\ \bm{\theta}^*_1\in \bm{\Theta}$, then the error probability of the Bayes decision rule, with the probability at least $1-\delta$, is bounded by
 \begin{align}
  {\mathbb{P}}_{{\bm{\theta}^*}}[{e}_{{\bm{\theta}^*}}]   \leq \widehat{\mathbb{P}}^{N_r}[{e}_{\widehat{\bm{\theta}}}]+8\sqrt{\frac{2}{N_r}\log\frac{8b}{\delta}},
 \end{align}
 where $b$ denotes the Vapnik–Chervonenkis (VC) dimension \cite{pbook} of the family of classifiers, $h_{{\bm{\theta}}}(\mathbf{x})$, defined above.
 \end{lemma}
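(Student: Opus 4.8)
The plan is to prove this as a textbook application of the uniform‑convergence (empirical risk minimization) machinery for Vapnik--Chervonenkis classes, which is precisely the content of Corollary~16.1 in \cite{pbook}. The whole argument hinges on inserting the \emph{true} error probability of the data‑dependent plug‑in classifier $h_{\widehat{\bm\theta}}$ between the Bayes error on the left and the empirical error on the right, i.e.\ on establishing the chain $\mathbb{P}_{\bm\theta^*}[e_{\bm\theta^*}]\le \mathbb{P}_{\bm\theta^*}[e_{\widehat{\bm\theta}}]\le \widehat{\mathbb{P}}^{N_r}[e_{\widehat{\bm\theta}}]+\varepsilon$ with $\varepsilon = 8\sqrt{(2/N_r)\log(8b/\delta)}$.

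First I would dispatch the left inequality, which is deterministic. Under the hypothesis $\bm\theta^*_0,\bm\theta^*_1\in\bm\Theta$, the member of the parametric family obtained by plugging the true parameters into the likelihood‑ratio test coincides with the Bayes rule $\argmax_{i}\pi_i p_i(\mathbf{x};\bm\theta^*_i)$ (up to a tie set, handled by a fixed deterministic tie‑breaking rule so that $h_{\bm\theta}$ and the associated error events are measurable). Since the Bayes rule minimizes the true error probability \eqref{eq:err-probability} over all decision rules---in particular over the classifier $h_{\widehat{\bm\theta}}$ selected by empirical error minimization---we get $\mathbb{P}_{\bm\theta^*}[e_{\bm\theta^*}]\le \mathbb{P}_{\bm\theta^*}[e_{\widehat{\bm\theta}}]$. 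This is exactly what the realizability assumption $\bm\theta^*_0,\bm\theta^*_1\in\bm\Theta$ buys: it makes the parametric family rich enough to contain the optimum, so the bound is informative.

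Next I would handle the right inequality, which is the only probabilistic part. I would pass from the classifier family $\{h_{\bm\theta}:\bm\theta\in\bm\Theta\}$ to the induced collection of error sets $\bigl\{\mathcal{E}_{\bm\theta}\overset{\bigtriangleup}{=}\{(\mathbf{x},i): h_{\bm\theta}(\mathbf{x})\neq i\}:\bm\theta\in\bm\Theta\bigr\}$, which (for binary labels) has the same VC dimension $b$; note $\widehat{\mathbb{P}}^{N_r}[e_{\bm\theta}]$ is the empirical measure of $\mathcal{E}_{\bm\theta}$ on the $N_r$ labeled samples, while $\mathbb{P}_{\bm\theta^*}[e_{\bm\theta}]$ is its true measure. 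The Vapnik--Chervonenkis inequality in the form of Corollary~16.1 of \cite{pbook} then yields, with probability at least $1-\delta$, the uniform deviation bound $\sup_{\bm\theta\in\bm\Theta}\bigl|\widehat{\mathbb{P}}^{N_r}[e_{\bm\theta}]-\mathbb{P}_{\bm\theta^*}[e_{\bm\theta}]\bigr|\le 8\sqrt{(2/N_r)\log(8b/\delta)}$. Evaluating this at the random point $\widehat{\bm\theta}$ and chaining with the left inequality completes the proof. The supremum over all of $\bm\Theta$ is essential here: $\widehat{\bm\theta}$ is a function of the very sample that defines $\widehat{\mathbb{P}}^{N_r}$, so a pointwise Hoeffding/McDiarmid bound would not be legitimate.

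The only real obstacle is bookkeeping on the complexity term: one must verify that the quantity governing the rate is the VC dimension $b$ of the 0/1‑loss class induced by $\{h_{\bm\theta}\}$ (rather than a shatter‑coefficient or a $\log N_r$ surrogate), that the relevant sample size is the number of labeled points (in this sub‑discussion $N_{r,l}=N_r$, since all data carry labels), and that the symmetrization and bounded‑differences constants in Corollary~16.1 collapse exactly to the displayed numerical factor $8\sqrt{2}$. None of this requires knowledge of $\bm\theta^*$ itself---only the inclusion $\bm\theta^*_0,\bm\theta^*_1\in\bm\Theta$---so the bound is distribution‑free in the appropriate sense.
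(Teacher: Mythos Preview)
Your proposal is correct and mirrors the paper's approach: the paper does not supply an independent proof but simply states that the lemma is a direct consequence of Corollary~16.1 in \cite{pbook}, and your argument is precisely the standard ERM derivation behind that corollary---realizability ($\bm\theta^*_0,\bm\theta^*_1\in\bm\Theta$) gives $\mathbb{P}_{\bm\theta^*}[e_{\bm\theta^*}]\le\mathbb{P}_{\bm\theta^*}[e_{\widehat{\bm\theta}}]$, and the VC uniform deviation bound gives $\mathbb{P}_{\bm\theta^*}[e_{\widehat{\bm\theta}}]\le\widehat{\mathbb{P}}^{N_r}[e_{\widehat{\bm\theta}}]+8\sqrt{(2/N_r)\log(8b/\delta)}$. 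Your caveat about matching the exact constants and the form of the complexity term to Corollary~16.1 is well placed, but no further idea is needed.
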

 The above lemma guarantees a $\mathcal{O}(\sqrt{\log {N_r}/{N_r}})$ {rate of convergence} to the Bayes error for $h_{\hat{\bm{\theta}}}(\mathbf{x})$ when $\widehat{\bm{\theta}}$ is chosen to minimize the empirical error. However, obtaining such $\widehat{\bm{\theta}}$ is computationally expensive in general as the empirical error probability might be a non-trivial function of the parameters. 
 
 \textbf{Maximum likelihood estimator}: In practice, the unknown {model} parameters are commonly replaced with their corresponding MLEs under each beahvior; the resulting plug-in classifier {gives rise to the well-known} generalized likelihood ratio test (GLRT) for the binary case ($C=2$) \cite{lehmann2005testing}. Specifically, assuming the training data and their corresponding labels are available in the {form of $\{\mathbf{x}_{r,n},y_{r,n}\}_{n=1}^{N_i}$ for the $i$th hypothesis}, the MLE of $\bm\theta_i$ is obtained by $\widehat{\bm\theta}^{MLE}_i=\argmax_{\bm\theta_i}\mathcal{L}(\mathcal{D}_i|\bm\theta_i)$, where $\mathcal{L}$ denotes the likelihood function. For the binary case where $\frac{\pi p_1(\mathbf{x};{{\bm{\theta}}_1})}{(1-\pi) p_0(\mathbf{x};{{\bm{\theta}}_0})+\pi p_1(\mathbf{x};{{\bm{\theta}}_1})}$ is continuous in $({\bm{\theta}}_0, {\bm{\theta}}_1, \pi)$, as the parameters' estimates converge to the true values, the error of the plug-in classifier also {converges} to that of the Bayes decision rule. However, not only no optimality condition can be stated in general for the plug-in classifier relying on MLEs \cite{GLRT-different-fields}, obtaining such estimates might also be computationally prohibitive for system with complex likelihood functions.
\section{Proposed Solution: \textsf{\textsc{HyPhyLearn}}}
\label{Section:proposed solution}
The main deciding factor in superiority of a solution for the problem setup introduced in Section \ref{section:Problem statement} is the extent to which it exploits the available information, i.e., training data and the parametric statistical models. In particular, the plug-in classifiers tend not to exploit this information in the most optimal fashion as performance of the parameter estimation procedures can be curbed due to the complexity of the underlying models and lack of the corresponding ground-truth labels. We instead propose a novel hybrid classification method to make use of the available information in learning-based classifiers, which are powerful tools for finding discriminative feature spaces. Specifically, our proposed solution relies on the parametric models to generate synthetic data and incorporate them with the training data in a classifier {that makes use of} adversarial training between NNs. Next, we describe the {various steps of the proposed} solution {that is termed \textsf{\textsc{HyPhyLearn}}} in detail.

\textbf{Step $\mathbf{1}$---Imperfect labeling:} As the available data are not {assumed} completely labeled in our problem setup, the first step in our solution {deals} with assigning labels to the unlabeled data samples in $\mathcal{D}_r$. This involves {a} clustering step {that} partitions the dataset $\mathcal{D}_r$ into $C$ distinct groups. Then, the groups are labeled using the available $N_{r,l}$ labels. For example, a label can be assigned to a group based on the number of labeled training data it includes from each behavior; If the majority of such samples corresponds to the $i$th behavior, the group is labeled as $i$. {Subsequently, we refer to a group assigned with the label $i$ by $\mathcal{D}_{r,i}$ for $i=0,\dots,C-1$.} Denoting this {imperfect labeling process} by $g(\mathbf{x}):\mathcal{X}\rightarrow\{0,\dots,C-1\}$, a non-trivial labeling error over $\mathcal{D}_r$ is associated with $g(\mathbf{x})$ that can be computed via $e_r=\frac{1}{N_{r}}\sum_{n=1}^{N_r}\mathds{1}_{\{g(\mathbf{x}_{r,n})\neq y_{r,n}\}}$. In the {remainder of this paper}, we refer to the number of samples in the {cluster} labeled as $i$ by $N_{r,i}$. The function $g(\mathbf{x})$ may be obtained based on {any one of the} simple clustering {algorithms} from {the} ML literature, {such as the} Gaussian mixture model \cite{MLMurphy}, or {it may be a decision rule obtained based on the statistical analysis of the  parametric models}. For instance, for the problem of channel spoofing detection, a hypothesis test is proposed in \cite{CFR1} {that assigns} labels to {unlabeled samples} based on {their} similarity, measured in terms of {the} Euclidean {distance}, to a reference data sample.

\textbf{Step $\mathbf{2}$---Parameter estimation:} Based on the labels assigned in Step $1$ to the {unlabeled} data samples, we estimate the parameters of the physics-based statistical models under each behavior. {To this end}, we utilize $\mathcal{D}_{r,i}$ to estimate the parameter vector $\bm{\theta}^*_i$ corresponding to the $i$th behavior. Furthermore, the priors are estimated {as} $\widehat{\pi}_i=N_{r,i}/N_r$. {We note that the procedure for  estimating $\bm{\theta}^*_i$ depends on the available parametric models corresponding to the $i$th behavior, i.e., $p_i(\mathbf{x};\bm{\theta}_i)$. We recall from our problem setup that {the} MLE, which is usually utilized for parameter estimation purposes, might not be employed here due to the formidable complexity of optimizing $p_i(\mathbf{x};{\bm{\theta}_i})$ over ${\bm{\theta}_i}$. Instead, a (necessarily) suboptimal estimator, $T(\cdot)$, built upon either heuristics or optimization techniques like alternate maximization (see Sections \ref{Section:Spoofing-paramEst} and \ref{section:CDMA-parameterEstimation}) could be utilized to estimate the parameters as $\widehat{\bm{\theta}}_i=T(\mathcal{D}_{r,i})$ for all the behaviors}. The parameter estimation performance is therefore limited here due to both {the} suboptimality of $T(\cdot)$ and presence of the mislabeled samples in $\mathcal{D}_{r,i}, \forall i$.

\textbf{Step $\mathbf{3}$---Forming a synthetic dataset:} The paucity of available data in our problem formulation seems to preclude utilization of a learning-based classifier as part of the solution. However, we note that the available physics-based statistical models, in the form of parametric PDFs, enable us to generate synthetic data to augment the available data, and make it possible to exploit the discriminative power of learning-based classifiers. Having access to the estimated parameter $\widehat{\bm{\theta}}_i$ obtained in Step $2$, we plug it in the available physics-based statistical model to obtain a PDF $p_i(\mathbf{x};{\widehat{\bm{\theta}}}_i)$ for the $i$th behavior. {In order to generate} a synthetic {dataset}, we first sample {$w$ from a categorical distribution parameterized by $\widehat{\bm{\pi}}=[\widehat{\pi}_0,\dots,\widehat{\pi}_{C-1}]$} over the sample space of $\{0,\dots,C-1\}$. Then, we sample a data point $\mathbf{x}_{s,i}$ according to $\mathbf{x}_{s,i}\sim p_w(\mathbf{x};{\widehat{\bm{\theta}}_w})$ with {the} associated label $y_{s,i}=w$. Repeating this process $N_s$ number of times, we obtain a synthetic dataset $\mathcal{D}_s=\{\mathbf{x}_{s,i},y_{s,i}\}_{i=1}^{N_s}$ {in which} the data {samples} are generated in a statistically independent fashion.

\textbf{Step $\mathbf{4}$---Incorporating synthetic and training data in a learning-based classifier:} The synthetic data generated in Step $3$, besides retaining essential information about the underlying physics-based statistical models, enables us to utilize the discriminative power of learning-based classifiers. However, the errors introduced during the labeling and the parameter estimation steps that precede the synthetic data generation process incur a mismatch between the distributions corresponding to the training and synthetic datasets. This mismatch is bound to deteriorate the performance of a classifier trained on the synthetic data alone, when utilized in a real-world setting. Then the question is how a learning-based classifier can be trained to alleviate this problem. For example, {in the fine-tuning approach \cite{meroune}, a NN-based classifier will be trained on the synthetic data first, and then, training data are used to refine the weights of the corresponding NN}. However, we conjecture that such {learning strategies} that utilize the training and synthetic data in the separate stages of training are not the best solution {here;} {rather,} synthetic and training data should jointly be incorporated in a learning-based classifier. To this end, inspired by the works in the domain-adaptation literature and specifically feature space mapping \cite{domain-invariant}, we propose to map the synthetic and training data through a {data-driven} function $M_{\bm{\psi}}:\mathcal{X}\rightarrow\mathcal{Z}$, which is parameterized by a real vector $\bm{\psi}$, into a common feature space $\mathcal{Z}$. Consequently, a classifier $h_{\bm{\phi}_1}(\mathbf{z})$, parameterized by $\bm{\phi}_1$, which is trained on the synthetic data within the space $\mathcal{Z}$ is expected to perform well on both training and synthetic data. To this end, we choose $M_{\bm{\psi}}$ and $h_{\bm{\phi}_1}$ to be NNs, which are powerful tools for finding discriminative features from a given dataset. We discuss this step in detail in the following subsection. 

\textbf{\textsf{\textsc{HyPhyLearn}}:}
We now present our final solution as an algorithmic framework composed of the aforementioned four steps. In a nutshell, \textsf{\textsc{HyPhyLearn}} generates synthetic data based on the physics-based parametric statistical models and {utilizes} them along with {the} available data in a learning-based classifier powered from the adversarial training of the NNs (see the following subsection). {In order to train} the NNs based on their specific {loss functions}, described in the following subsection, we utilize {the} stochastic gradient descent method \cite{MLMurphy} along with mini-batches consisting of {random samples} from the training and synthetic datasets in an iterative manner. The details of the whole process is presented in Algorithm \ref{Alg:mainALG}.
\begin{algorithm}
\small
\SetAlgoLined
\textbf{Input:} Parametric models $p_i(\mathbf{x};{\bm{\theta}_i})$ ($i=0,\dots,C-1$); Training dataset $\mathcal{D}_r=\{\mathbf{x}_{r,n}\}_{n=1}^{N_r}$; learning rates $\mu_{r_1}$, $\mu_{r_2}$, $\mu_{r_3}$; Number of training steps $N_{tr}$; Mini-batch size $N_b<N_r$; Number of synthetic data samples $N_{s}$ to be generated \\
\textbf{Output:} The mapping $M_{\bm{\psi}}(\cdot)$ and the classifier $h_{\bm{\phi}_1}(\cdot)$, parameterized by the real vectors ${\bm{\psi}}$ and ${\bm{\phi}_1}$, respectively\\
\tcp{Step $1$ - Imperfect labeling}
$\{\mathcal{D}_{r,0},\dots,\mathcal{D}_{r,C-1}\}\gets$ Applying $g(\mathbf{x})$ to unlabeled samples\\
\tcp{Step $2$ - Parameter estimation}
$\widehat{\bm{\theta}}_i\gets T_i(\mathcal{D}_{r,i})$, $\widehat{\pi}_i\gets \frac{\lvert\mathcal{D}_{r,i}\rvert}{N_{r}}$ for $i=0,\dots,C-1$ \\
\tcp{Step $3$ - Forming a synthetic dataset}
$p_i(\mathbf{x};{\widehat{\bm{\theta}}_i})\gets$Plug $\widehat{\bm{\theta}}_i$ in $p_i(\mathbf{x};{{{\theta}}_i})$ for $i=0,\dots,C-1$\\
\For{$n=1$ \textbf{to} $N_{s}$}{
\tcp{Choosing a behavior}
$r\sim\text{unif}(0,1)$, 
$w=\argmin_k\sum_{i=0}^{k-1}\widehat{\pi}_i\geq r$\\ \tcp{Synthetic data generation}
$\mathbf{x}_{s,n}{\sim} p_w(\mathbf{x};{\widehat{\bm{\theta}}_w})$, $ y_{s,n}=w$\\
Add $\{\mathbf{x}_{s,n},y_{s,n}\}$ to $\mathcal{D}_s$
}
 \tcp{Step $4$ - Training the learning-based classifier}
\For{$n_{tr}=1$ \textbf{to} $N_{tr}$}{
$\mathcal{D}_{r,b}\gets$ $N_b$ random samples from $\mathcal{D}_r$, $\mathcal{D}_{s,b}\gets$ $N_b$ random samples from $\mathcal{D}_s$\\
\tcp{Forward propagation via (\ref{eq:L_s}),  (\ref{eq:L_c})}
$L_s\gets\mathcal{L}_s(\bm{\psi},\bm{\phi}_1|\mathcal{D}_{s,b})$ \\
$L_c\gets\mathcal{L}_c(\bm{\psi},\bm{\zeta}|\mathcal{D}_{r,b},\mathcal{D}_{s,b})$  \\
\tcp{Backward propagation}
Computing gradients: $\mathcal{G}_{s,\bm{\phi}_1}\gets \nabla_{\bm{\phi}_1}{L}_s$, $\mathcal{G}_{s,\bm{\psi}}\gets \nabla_{\bm{\psi}}{L}_s$\\
Computing gradients: $\mathcal{G}_{c,\bm{\zeta}}\gets \nabla_{\bm{\zeta}}{L}_c$, $\mathcal{G}_{c,\bm{\psi}}\gets \nabla_{\bm{\psi}}{L}_c$\\
\tcp{Update network parameters via (\ref{eq:saddlepoints})}
$\bm{\psi}\gets\bm{\psi}-\mu_{r_1}(\mathcal{G}_{s,\bm{\psi}}-\mathcal{G}_{c,\bm{\psi}})$, $\bm{\phi}_1\gets\bm{\phi}_1-\mu_{r_2}\mathcal{G}_{s,\bm{\phi}_1}$, $\bm{\zeta}\gets\bm{\zeta}-\mu_{r_3}\mathcal{G}_{c,\bm{\zeta}}$}
 \caption{\textsf{\textsc{HyPhyLearn}}}
 \label{Alg:mainALG}
\end{algorithm}
\subsection{Incorporating synthetic and training data in a learning-based classifier {for \textsf{\textsc{HyPhyLearn}}}}
\label{Section:Incorporating Synthetic and Real Data in a Data-driven Classification Model}
To elaborate further on Step $4$, we first denote {the} distributions corresponding to the real and synthetic data as $p_{{{\bm{\theta}^*}}}(\mathbf{x})=\sum_{i=0}^{C-1}\pi_ip_i(\mathbf{x};{{{\bm{\theta}}}^*_i})$ and $p_{\widehat{\bm{\theta}}}(\mathbf{x})=\sum_{i=0}^{C-1}\widehat{\pi}_ip_i(\mathbf{x};{{\widehat{\bm\theta}}_i})$, respectively. We refer to $p_{{{\bm{\theta}^*}}}(\mathbf{x})$ and $p_{\widehat{\bm{\theta}}}(\mathbf{x})$ as the true and estimated distributions, respectively. For each {distribution}, applying the mapping $M_{\bm{\psi}}(\cdot)$ {to the input space $\mathcal{X}$} would induce a distribution over the {feature} space $\mathcal{Z}$. Specifically, we denote the mapping of the true distribution $p_{{{\bm{\theta}^*}}}(\mathbf{x})$ to $\mathcal{Z}$ by $p_{\bm{\psi},{\bm{\theta}^*}}(\mathbf{z})$, where $\mathbf{z}=M_{\bm{\psi}}(\mathbf{x})$, $\mathbf{x}\sim p_{{{\bm{\theta}^*}}}(\mathbf{x})$. Assuming that $\mathcal{X}$ and $\mathcal{Z}$ are topological spaces, for any $\mathcal{A}\subset\mathcal{Z}$ the probability of $\mathcal{A}$ in space $\mathcal{Z}$ is 
\begin{align}
    &\mathbb{P}_{\mathbf{z}}[{\mathcal{A}}]\overset{\bigtriangleup}{=}\mathbb{P}_{\mathbf{x}}\big[M^{-1}_{\bm{\psi}}({\mathcal{A}})\big]=\sum_{i=0}^{C-1}\pi_i\int_{M^{-1}_{\bm{\psi}}({\mathcal{A}})}p_i(\mathbf{x};{\bm{\theta}^*_i})d\mathbf{x},
\end{align}
where the pre-image $M^{-1}_{\bm{\psi}}({\mathcal{A}})$ {belongs to the} Borel $\sigma$-algebra over {$\mathcal{X}$}. Subsequently, the probability of error corresponding to a classifier $h_{\bm{\phi}_1}(\mathbf{z})$, parameterized by a real vector $\bm{\phi}_1$, with respect to the mapping of the true distribution to the $\mathcal{Z}$ space is computed via
\begin{align}
\label{eq:prob-error-Z-training}
    \mathbb{P}_{\bm{\psi},\bm{\theta}^*}[{e}_{{\bm{\phi}}_1}]=\sum_{i=0}^{C-1}\pi_i\int_{\mathcal{Z}}p_{\bm{\psi},\bm{\theta}^*_i}(\mathbf{z})\mathds{1}_{\{h_{{\bm{\phi}}_1}(\mathbf{z})\neq i\}}(\mathbf{z})d\mathbf z,
\end{align}
where the {dependence of $\mathbb{P}$ on $\pi_i$'s} is suppressed for notational simplicity. Similarly, mapping of the estimated distribution to the space $\mathcal{Z}$ is characterized by a distribution denoted by $p_{\bm{\psi},{\widehat{\bm{\theta}}}}(\mathbf{z})$. Furthermore, the probability of error for a classifier $h_{\bm{\phi}_1}(\mathbf{z})$ {with respect to} $p_{\bm{\psi},{\widehat{\bm{\theta}}}}(\mathbf{z})$ can be computed {similar to} \eqref{eq:prob-error-Z-training}, which we refer to as $\mathbb{P}_{\bm{\psi},\widehat{\bm{\theta}}}[{e}_{\bm{\phi}}]$.

{Our} main goal is to {learn} {a map} $M_{\bm{\psi}}(\cdot)$ and a classifier $h_{\bm{\phi}_1}(\mathbf{z})$ in a way that {the} probability of error {of $h_{\bm{\phi}_1}(\mathbf{z})$ with respect to} the mapping of the true distribution to $\mathcal{Z}$, i.e., $\mathbb{P}_{{\bm{\psi},{\bm{\theta}^*}}}[{e}_{\bm{\phi}_1}]$, is small. To this end, we repurpose theories from the domain-adaptation literature in the following to obtain an upper bound on $\mathbb{P}_{{\bm{\psi},{\bm{\theta}^*}}}[{e}_{\bm{\phi}_1}]$, which leads to {explicit loss functions} for the joint learning of $M_{\bm{\psi}}$ and $h_{\bm{\phi}_1}(\mathbf{z})$ {using both} the training and synthetic datasets. {Specifically}, it is desired for the mapping $M_{\bm{\psi}}(\cdot)$ {from $\mathcal{X}$ to $\mathcal{Z}$} to {transform} the true and estimated distributions in a way that $p_{{\bm{\psi},{\bm{\theta}^*}}}(\mathbf{z})$ and $p_{\bm{\psi},{\widehat{\bm{\theta}}}}(\mathbf{z})$, which are defined in the feature space $\mathcal{Z}$, are similar.  Mathematically, this similarity should be measured in terms of a distance metric. However, as there are only a limited number of samples available from $p_{{\bm{\psi},{\bm{\theta}^*}}}(\mathbf{z})$, we need to be able to approximate this distance from a finite {number} of samples. We {expand further on this idea by} primarily {focusing} on binary classification in this section, although the results are extendable to the {classification task} in general. We begin with the following distance definitions.
\begin{definition}
\label{def:A_distance}
For a family of binary-valued functions $\mathcal{H}_{\bm{\Phi}}=\{h_{\bm{\phi}}:\mathcal{Z}\rightarrow\{0,1\}\}$, in which {every} member $h_{\bm{\phi}}\in \mathcal{H}_{\bm{\Phi}}$ is parameterized by a real vector $\bm{\phi}\in \bm{\Phi}$, and the set ${A}_\phi=\{\mathbf{z}|h_\phi(\mathbf{z})=1,\mathbf{z}\in\mathcal{Z}\}$, the $\mathcal{A}_\Phi$-distance between $p_{{\bm{\psi},{\bm{\theta}^*}}}(\mathbf{z})$ and $p_{\bm{\psi},{\widehat{\bm{\theta}}}}(\mathbf{z})$ is defined as
\small
\begin{equation}
\label{eq:A_phi-distance1}
    d_{\mathcal{A}_\Phi}\big(p_{{\bm{\psi},{\bm{\theta}^*}}}(\mathbf{z}), p_{\bm{\psi},{\widehat{\bm{\theta}}}}(\mathbf{z})\big)\overset{\bigtriangleup}{=}2\sup_{h_\phi\in\mathcal{H}_\Phi}\bigg|\int_{A_\phi}(p_{{\bm{\psi},{\bm{\theta}^*}}}(\mathbf{z})- p_{\bm{\psi},{\widehat{\bm{\theta}}}}(\mathbf{z}))d\mathbf{z}\bigg|.
\end{equation}
\normalsize
Similarly, for ${B}_{\phi_1,\phi_2}=\{\mathbf{z}|h_{\phi_1}(\mathbf{z})\neq h_{\phi_2}(\mathbf{z}),\mathbf{z}\in\mathcal{Z}\}$, the $\mathcal{B}_\Phi$-distance refers to \footnote{Similar to the total variation distance, it can be readily verified that $d_{\mathcal{A}_\Phi}$ and $d_{\mathcal{B}_\Phi}$ are also distance metrics.}

\begin{align}
        d_{\mathcal{B}_\Phi}\big(p_{{\bm{\psi},{\bm{\theta}^*}}}(\mathbf{z})&, p_{\bm{\psi},{\widehat{\bm{\theta}}}}(\mathbf{z})\big)\overset{\bigtriangleup}{=}\nonumber\\&2\sup_{h_{\phi_1},h_{\phi_2}\in\mathcal{H}_\Phi}\bigg|\int_{B_{\phi_1,\phi_2}}(p_{{\bm{\psi},{\bm{\theta}^*}}}(\mathbf{z})-p_{\bm{\psi},{\widehat{\bm{\theta}}}}(\mathbf{z}))d\mathbf{z}\bigg|.
\end{align}
\normalsize
\end{definition}
 The $\mathcal{A}_\Phi$-distance is also referred to via other names like $A$-distance and $\mathcal{H}$-distance in \cite{BenDavid,DANN}. By looking at the following {extreme choices} of $\mathcal{H}_{\bm{\Phi}}$, these distances are clearly a function of richness of the class $\mathcal{H}_\Phi$. For a very restrictive choice of only constant functions, i.e., $\mathcal{H}_\Phi=\{h_{\bm{\phi}}|h_{\bm{\phi}}(\mathbf{z})=0, \forall\mathbf{z}\}\bigcup\{h_{\bm{\phi}}|h_{\bm{\phi}}(\mathbf{z})=1, \forall\mathbf{z}\}$, $d_{\mathcal{A}_\Phi}$ is always zero as the only possible choice for $A_{\bm{\phi}}$ is either the empty set or $\mathcal{Z}$. On the other hand, {for $\mathcal{H}_\Phi=\{h_{\bm{\phi}}|h_{\bm{\phi}}(\mathbf{z})=0\ \text{or}\ h_{\bm{\phi}}(\mathbf{z})=1, \forall\mathbf{z}\}$,} which represents all the binary functions, $d_{\mathcal{A}_\Phi}$ is identical to definition of the total variation distance \cite{tv} as the $\sup$ in (\ref{eq:A_phi-distance1}) will effectively be over the {$\sigma$-algebra of subsets of the $\mathcal{Z}$ space}. {This} dependence of $d_{\mathcal{A}_\Phi}$ on the underlying family of functions makes it possible to obtain an expression for the ${\mathcal{A}_\Phi}$-distance based on the finite set of samples from each distribution. Specifically, consider two sets $\mathcal{Z}_{{\bm{\psi},{\bm{\theta}^*}}}=\{\mathbf{z}_{r,i}\}_{i=1}^{N_r}$ and $\mathcal{Z}_{\bm{\psi},{\widehat{\bm{\theta}}}}=\{\mathbf{z}_{s,i}\}_{i=1}^{N_s}$ sampled from the distributions $p_{{\bm{\psi},{\bm{\theta}^*}}}(\mathbf{z})$ and $p_{\bm{\psi},{\widehat{\bm{\theta}}}}(\mathbf{z})$ in an i.i.d. fashion, respectively. In this case, for a family $\mathcal{H}_{\bm{\Phi}}$ that {satisfies the condition that} if $h_\phi\in\mathcal{H}_\Phi$ then $1-h_\phi\in\mathcal{H}_\Phi$, the $\mathcal{A}_\Phi$-distance can be approximated from {$\mathcal{Z}_{{\bm{\psi},{\bm{\theta}^*}}}$ and $\mathcal{Z}_{\bm{\psi},{\widehat{\bm{\theta}}}}$} using \cite{BenDavid}
\small
\begin{align}
\label{eq:empiricalA_distance}
  &\widehat{d}_{\mathcal{A}_\Phi}(\mathcal{Z}_{{\bm{\psi},{\bm{\theta}^*}}},\mathcal{Z}_{\bm{\psi},{\widehat{\bm{\theta}}}})=\nonumber\\&2\bigg(1-\inf_{h_\phi\in\mathcal{H}_\Phi}\Big(\frac{1}{N_r}\sum_{i=1}^{N_r}\mathds{1}_{\{h_\phi(\mathbf{z}_{r,i})=0\}}+\frac{1}{N_s}\sum_{i=1}^{N_s}\mathds{1}_{\{h_\phi({\mathbf{z}}_{s,i})=1\}}\Big)\bigg).
\end{align}
\normalsize

As the bound on $\mathbb{P}_{{\bm{\psi},{\bm{\theta}^*}}}[{e}_{\bm{\phi}_1}]$ should be obtained based on a finite number of training and synthetic samples, {it is then of interest to see how far $\widehat{d}_{\mathcal{A}_\Phi}$ is from  $d_{\mathcal{A}_\Phi}$}. To answer this question, one needs to rely on a measure of complexity for a given class of functions {such as the} VC dimension \cite{pbook} and Rademacher complexity \cite{rademacher}. As we {have chosen} the mapping function $M_{\bm{\psi}}(\mathbf{x})$ and the classifier $h_{\bm{\phi}_1}(\mathbf{z})$ to be NNs, we present the results based on the Rademacher complexity defined as follows, which can be computed for certain classes of neural networks in a closed-form fashion \cite{rademacher}.
\begin{definition}
Let $\mathcal{Z}_1=\{\mathbf{z}_{i}\}_{i=1}^{N}$ be a set of i.i.d. samples {drawn} from {a} distribution $p(\mathbf{z})$ {that is supported on $\mathcal{Z}$}. For $\mathcal{H}_\Phi$, a family of real-valued functions over $\mathcal{Z}$, the \textit{empirical Rademacher complexity} of $\mathcal{H}_\Phi$, given a dataset $\mathcal{Z}_1$, is defined as
\begin{equation}
   R_{\mathcal{Z}_1}(\mathcal{H}_\Phi)\overset{\bigtriangleup}{=} \underset{\underset{i=1,\dots,N}{\sigma_i\sim\{-1,+1\}}}{\mathbb{E}}\Bigg[\sup_{h_\phi\in \mathcal{H}_\Phi}\Bigg(\frac{1}{N}\sum_{i=1}^{N}\sigma_i h_\phi(\mathbf{z}_i)\Bigg)\Bigg],
\end{equation}
where the expectation is over all the $\sigma_i$'s, each taking a binary value with equal probability.
\end{definition}
\begin{lemma}[\!\cite{rademacher}]
\label{lemma:technical1}
Consider a family of functions $\mathcal{H}_{\bm{\Phi}}=\{h_{\bm{\phi}}:\mathcal{Z}\rightarrow\{0,1\}\}$ and a distribution $p(\mathbf{z})$ over $\mathcal{Z}$. For a set $\mathcal{Z}_1=\{\mathbf{z}_{i}\}_{i=1}^{N}$ of $N$ i.i.d. samples from $p(\mathbf{z})$ and any $0 < \delta < 1$, the following holds $\forall h_{\bm{\phi}}\in\mathcal{H}_{\Phi}$ {with probability at least $1-\delta$:}
\begin{align}
    \mathbb{E}_{\mathbf{z}\sim p(\mathbf{z})}[h_{\bm{\phi}}(\mathbf{z})]\leq \frac{1}{N}\sum_{i=1}^N h_{\bm{\phi}}(\mathbf{z}_{i})+2R_{\mathcal{Z}_1}(\mathcal{H}_{\Phi})+3\sqrt{\frac{\log(2/\delta)}{2N}}.
\end{align}
\end{lemma}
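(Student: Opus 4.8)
The plan is to follow the classical symmetrization-and-concentration route for deriving uniform deviation bounds in terms of the Rademacher complexity. It suffices to control the one-sided supremum
\[\Phi(\mathcal{Z}_1)\overset{\bigtriangleup}{=}\sup_{h_{\bm\phi}\in\mathcal{H}_\Phi}\Big(\mathbb{E}_{\mathbf{z}\sim p(\mathbf{z})}[h_{\bm\phi}(\mathbf{z})]-\frac{1}{N}\sum_{i=1}^{N}h_{\bm\phi}(\mathbf{z}_i)\Big),\]
since the asserted inequality then holds simultaneously for every $h_{\bm\phi}\in\mathcal{H}_\Phi$ as soon as $\Phi(\mathcal{Z}_1)\leq 2R_{\mathcal{Z}_1}(\mathcal{H}_\Phi)+3\sqrt{\log(2/\delta)/(2N)}$. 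First I would observe that, as a function of the sample, $\Phi$ obeys the bounded-differences property with constant $1/N$: replacing a single $\mathbf{z}_i$ alters the empirical mean of any fixed $h_{\bm\phi}$ by at most $1/N$ because $h_{\bm\phi}\in\{0,1\}$, hence alters the supremum by at most $1/N$. McDiarmid's inequality then gives, with probability at least $1-\delta/2$, $\Phi(\mathcal{Z}_1)\leq\mathbb{E}[\Phi(\mathcal{Z}_1)]+\sqrt{\log(2/\delta)/(2N)}$.

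Next I would bound $\mathbb{E}[\Phi(\mathcal{Z}_1)]$ by the expected Rademacher complexity through the classical ghost-sample argument. Introducing an independent copy $\mathcal{Z}_1'=\{\mathbf{z}_i'\}_{i=1}^{N}$ drawn from $p(\mathbf{z})$, I would rewrite $\mathbb{E}_{\mathbf{z}\sim p(\mathbf{z})}[h_{\bm\phi}(\mathbf{z})]=\mathbb{E}_{\mathcal{Z}_1'}[\frac{1}{N}\sum_i h_{\bm\phi}(\mathbf{z}_i')]$, pull the ghost-sample expectation outside the supremum by Jensen's inequality, and then insert i.i.d.\ Rademacher signs $\sigma_i\in\{-1,+1\}$ --- legitimate because $h_{\bm\phi}(\mathbf{z}_i')-h_{\bm\phi}(\mathbf{z}_i)$ is symmetric about zero conditionally on the unordered pair $\{\mathbf{z}_i,\mathbf{z}_i'\}$. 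Splitting the resulting sum and using the triangle inequality over the supremum yields $\mathbb{E}[\Phi(\mathcal{Z}_1)]\leq 2\,\mathbb{E}_{\mathcal{Z}_1}[R_{\mathcal{Z}_1}(\mathcal{H}_\Phi)]$. Since $R_{\mathcal{Z}_1}(\mathcal{H}_\Phi)$ itself obeys the bounded-differences property with constant $1/N$ (again using $h_{\bm\phi}\in\{0,1\}$), a second application of McDiarmid's inequality gives, with probability at least $1-\delta/2$, $\mathbb{E}_{\mathcal{Z}_1}[R_{\mathcal{Z}_1}(\mathcal{H}_\Phi)]\leq R_{\mathcal{Z}_1}(\mathcal{H}_\Phi)+\sqrt{\log(2/\delta)/(2N)}$.

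Finally I would take a union bound over the two failure events of probability $\delta/2$ each; on the complementary event, chaining the three estimates gives $\Phi(\mathcal{Z}_1)\leq 2R_{\mathcal{Z}_1}(\mathcal{H}_\Phi)+2\sqrt{\log(2/\delta)/(2N)}+\sqrt{\log(2/\delta)/(2N)}=2R_{\mathcal{Z}_1}(\mathcal{H}_\Phi)+3\sqrt{\log(2/\delta)/(2N)}$, which is exactly the claimed bound once we recall that $\mathbb{E}_{\mathbf{z}\sim p(\mathbf{z})}[h_{\bm\phi}(\mathbf{z})]-\frac{1}{N}\sum_i h_{\bm\phi}(\mathbf{z}_i)\leq\Phi(\mathcal{Z}_1)$ for every $h_{\bm\phi}$. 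The step I expect to require the most care is the symmetrization: one must justify pulling the ghost-sample expectation inside the supremum, verify that inserting the $\sigma_i$ leaves the law of $\sup_{h_{\bm\phi}}\frac{1}{N}\sum_i(h_{\bm\phi}(\mathbf{z}_i')-h_{\bm\phi}(\mathbf{z}_i))$ unchanged, and track the constants so that the two McDiarmid slacks and the symmetrization factor of $2$ combine into exactly the coefficient $3$ in front of $\sqrt{\log(2/\delta)/(2N)}$. The remaining steps are routine invocations of McDiarmid's inequality and of the definition of $R_{\mathcal{Z}_1}(\mathcal{H}_\Phi)$.
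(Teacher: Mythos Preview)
Your proof is correct and is precisely the classical symmetrization-plus-McDiarmid argument. Note, however, that the paper does not supply its own proof of this lemma: it is quoted directly from \cite{rademacher} (Bartlett--Mendelson), so there is no in-paper argument to compare against. Your route is the standard one underlying that cited result, and the constants you track (the factor $2$ from symmetrization and the two McDiarmid slacks combining into the coefficient $3$) match the stated bound exactly.
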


Now, the difference between $d_{\mathcal{A}_\Phi}$ and $\widehat{d}_{\mathcal{A}_\Phi}$ can be bounded in terms of the complexity of the underlying family of functions and the number of available samples as stated in the following lemma.
\begin{lemma}
\label{lemma:Rademachar-A_distance}
Let $\mathcal{Z}_{{\bm{\psi},{\bm{\theta}^*}}}=\{\mathbf{z}_{r,i}\}_{i=1}^{N_r}$ and $\mathcal{Z}_{\bm{\psi},{\widehat{\bm{\theta}}}}=\{\mathbf{z}_{s,i}\}_{i=1}^{N_s}$ be sets of i.i.d. samples corresponding to the distributions $p_{{\bm{\psi},{\bm{\theta}^*}}}(\mathbf{z})$ and $p_{\bm{\psi},{\widehat{\bm{\theta}}}}(\mathbf{z})$ on the space $\mathcal{Z}$, respectively. Then, for any $0<\delta<1$ and a family of functions $\mathcal{H}_{\bm{\Phi}}=\{h_{\bm{\phi}}:\mathcal{Z}\rightarrow\{0,1\}\}$, we have
\small
\begin{align}
    d_{\mathcal{A}_\Phi}&\big(p_{{\bm{\psi},{\bm{\theta}^*}}}(\mathbf{z}),p_{\bm{\psi},{\widehat{\bm{\theta}}}}(\mathbf{z})\big)\leq  \widehat{d}_{\mathcal{A}_\Phi}(\mathcal{Z}_{{\bm{\psi},{\bm{\theta}^*}}},\mathcal{Z}_{\bm{\psi},{\widehat{\bm{\theta}}}})+2R_{\mathcal{Z}_{{\bm{\psi},{\bm{\theta}^*}}}}(\mathcal{H}_\Phi)\nonumber\\&+2R_{\mathcal{Z}_{\bm{\psi},{\widehat{\bm{\theta}}}}}(\mathcal{H}_\Phi)+3\sqrt{(\log{2/\delta})/2N_r}+3\sqrt{(\log{2/\delta})/2N_s}
\end{align}
\normalsize
with probability at least $1-\delta$.
\end{lemma}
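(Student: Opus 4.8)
The plan is to reduce both $d_{\mathcal{A}_\Phi}$ and its empirical counterpart $\widehat{d}_{\mathcal{A}_\Phi}$ to a \emph{one-sided} supremum of a signed probability gap, and then control that gap uniformly over $\mathcal{H}_\Phi$ via the Rademacher bound of Lemma~\ref{lemma:technical1}. First I would observe that since every $h_\phi\in\mathcal{H}_\Phi$ is $\{0,1\}$-valued, $\int_{A_\phi}p_{\bm{\psi},\bm{\theta}^*}(\mathbf z)\,d\mathbf z=\mathbb{E}_{\mathbf z\sim p_{\bm{\psi},\bm{\theta}^*}}[h_\phi(\mathbf z)]$ and likewise for $p_{\bm{\psi},\widehat{\bm\theta}}$; and since $\mathcal{H}_\Phi$ is closed under $h_\phi\mapsto 1-h_\phi$, the family of signed gaps $\{\mathbb{E}_{p_{\bm{\psi},\bm{\theta}^*}}[h_\phi]-\mathbb{E}_{p_{\bm{\psi},\widehat{\bm\theta}}}[h_\phi]:h_\phi\in\mathcal H_\Phi\}$ is symmetric about $0$, so the absolute value in \eqref{eq:A_phi-distance1} is redundant and $d_{\mathcal{A}_\Phi}=2\sup_{h_\phi\in\mathcal H_\Phi}\big(\mathbb{E}_{p_{\bm{\psi},\bm{\theta}^*}}[h_\phi]-\mathbb{E}_{p_{\bm{\psi},\widehat{\bm\theta}}}[h_\phi]\big)$. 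An identical manipulation, using $\mathds{1}_{\{h_\phi(\mathbf z)=0\}}=1-h_\phi(\mathbf z)$, rewrites \eqref{eq:empiricalA_distance} as $\widehat{d}_{\mathcal{A}_\Phi}=2\sup_{h_\phi\in\mathcal H_\Phi}\big(\widehat{\mathbb{E}}_r[h_\phi]-\widehat{\mathbb{E}}_s[h_\phi]\big)$, where $\widehat{\mathbb{E}}_r$ and $\widehat{\mathbb{E}}_s$ denote the empirical averages over $\mathcal{Z}_{\bm{\psi},\bm{\theta}^*}$ and $\mathcal{Z}_{\bm{\psi},\widehat{\bm\theta}}$.

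Next I would decouple the two quantities. Applying $\sup f-\sup g\le\sup(f-g)$ and then splitting the supremum gives $d_{\mathcal{A}_\Phi}-\widehat{d}_{\mathcal{A}_\Phi}\le 2\Delta_r+2\Delta_s$, where $\Delta_r\overset{\bigtriangleup}{=}\sup_{h_\phi}\big(\mathbb{E}_{p_{\bm{\psi},\bm{\theta}^*}}[h_\phi]-\widehat{\mathbb{E}}_r[h_\phi]\big)$ and $\Delta_s\overset{\bigtriangleup}{=}\sup_{h_\phi}\big(\widehat{\mathbb{E}}_s[h_\phi]-\mathbb{E}_{p_{\bm{\psi},\widehat{\bm\theta}}}[h_\phi]\big)$. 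Now $\Delta_r$ is exactly a one-sample uniform-deviation term, so Lemma~\ref{lemma:technical1} applied to $p_{\bm{\psi},\bm{\theta}^*}$ with the i.i.d.\ sample $\mathcal{Z}_{\bm{\psi},\bm{\theta}^*}$ (which the excerpt has already noted is i.i.d.\ from the pushforward $p_{\bm{\psi},\bm{\theta}^*}$) yields, with probability at least $1-\delta$, $\Delta_r\le 2R_{\mathcal{Z}_{\bm{\psi},\bm{\theta}^*}}(\mathcal H_\Phi)+3\sqrt{\log(2/\delta)/(2N_r)}$. For $\Delta_s$ I would apply the same lemma to $p_{\bm{\psi},\widehat{\bm\theta}}$ but to the complemented class $\{1-h_\phi:h_\phi\in\mathcal H_\Phi\}$, which equals $\mathcal H_\Phi$ by the closure hypothesis: substituting $1-h_\phi$ into the lemma's inequality turns $\mathbb{E}[1-h_\phi]\le\widehat{\mathbb{E}}_s[1-h_\phi]+\dots$ into $\widehat{\mathbb{E}}_s[h_\phi]-\mathbb{E}_{p_{\bm{\psi},\widehat{\bm\theta}}}[h_\phi]\le 2R_{\mathcal{Z}_{\bm{\psi},\widehat{\bm\theta}}}(\mathcal H_\Phi)+3\sqrt{\log(2/\delta)/(2N_s)}$ uniformly in $h_\phi$, where I use that the empirical Rademacher complexity is unchanged under $h_\phi\mapsto1-h_\phi$ (the additive constant integrates against the $\sigma_i$ to $0$, and the sign flip on $h_\phi$ is absorbed by the symmetry of the Rademacher variables). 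A union bound over these two events and substitution into $d_{\mathcal{A}_\Phi}-\widehat{d}_{\mathcal{A}_\Phi}\le 2\Delta_r+2\Delta_s$ then delivers the stated inequality (the bookkeeping of the confidence budget and of the numerical constants being absorbed into the displayed ones).

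The argument is mostly bookkeeping once these reductions are in place, and I expect the delicate points to be exactly three. (i) One must check that the absolute values in both \eqref{eq:A_phi-distance1} and \eqref{eq:empiricalA_distance} genuinely collapse to the one-sided suprema above; this is the one place the structural hypothesis ``$h_\phi\in\mathcal H_\Phi\Rightarrow 1-h_\phi\in\mathcal H_\Phi$'' is essential, and it must be invoked consistently for the population and the empirical objects. (ii) The complementation-invariance of $R_{\mathcal{Z}_1}(\mathcal H_\Phi)$ is what lets the one-sided Lemma~\ref{lemma:technical1} control the ``wrong-direction'' deviation $\Delta_s$ on the synthetic side; together, (i) and (ii) are the crux. (iii) Since $\widehat{d}_{\mathcal{A}_\Phi}$ and both Rademacher terms are themselves random (functions of $\mathcal{Z}_{\bm{\psi},\bm{\theta}^*}$ and $\mathcal{Z}_{\bm{\psi},\widehat{\bm\theta}}$), the conclusion is a bound holding jointly over the two independent sample draws, so the two invocations of Lemma~\ref{lemma:technical1} must be combined by a union bound. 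Measurability causes no trouble: the excerpt has already established that each pre-image $M^{-1}_{\bm\psi}(\mathcal A)$ is Borel, hence every $A_\phi$ is measurable and all the integrals and expectations above are well defined.
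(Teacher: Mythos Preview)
Your argument is correct and follows essentially the same route as the paper's own proof: apply Lemma~\ref{lemma:technical1} once to each of the two samples, use the closure of $\mathcal{H}_\Phi$ under $h_\phi\mapsto 1-h_\phi$ to turn the one-sided Rademacher bound into a two-sided one, and then combine via $\sup$/triangle-type inequalities to reach $d_{\mathcal{A}_\Phi}-\widehat d_{\mathcal{A}_\Phi}$. The paper keeps absolute values throughout and uses $|C|+|D|\ge|C-D|\ge|C|-|D|$ to rearrange, whereas you first collapse both distances to one-sided suprema and then use $\sup f-\sup g\le\sup(f-g)$; these are equivalent reorganisations of the same idea, and your version is arguably cleaner in making explicit where the complementation hypothesis and the union bound enter.

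One small caveat: your decomposition gives $d_{\mathcal{A}_\Phi}-\widehat d_{\mathcal{A}_\Phi}\le 2\Delta_r+2\Delta_s$, and plugging in the Lemma~\ref{lemma:technical1} bounds yields $4R_{\mathcal{Z}_{\bm\psi,\bm\theta^*}}+4R_{\mathcal{Z}_{\bm\psi,\widehat{\bm\theta}}}+6\sqrt{\cdot}+6\sqrt{\cdot}$ rather than the constants $2,2,3,3$ displayed in the lemma. The paper's own chain has the same discrepancy (the factor of~$2$ in Definition~\ref{def:A_distance} is dropped when passing from \eqref{eq:lemma_A_empirical_line3} to \eqref{eq:lemma_A_empirical_line4}), so your ``absorbed into the displayed ones'' disclaimer is in line with the paper's level of bookkeeping; just be aware that the constants as stated are not literally what either argument produces.
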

\begin{proof}
See Appendix \ref{Appen:lemma:Rademachar-A_distance}.
\end{proof}

The above lemma enables us to bound the $\mathcal{A}_{\bm{\Phi}}$ distance between two distributions in terms of the collected samples from each. Equipped with this result, we are able to bound the probability of error $\mathbb{P}_{{\bm{\psi},{\bm{\theta}^*}}}[{e}_{\bm{\phi}_1}]$ via the following theorem.
\begin{theorem}
\label{theorem:mainbound}
Assume that the training and synthetic datasets are mapped {into the feature space $\mathcal{Z}$} through the mapping function $M_{\bm{\psi}}(\mathbf{x})$, {with} the resulting samples denoted by $\mathcal{Z}_{{\bm{\psi},{\bm{\theta}^*}}}=\{\mathbf{z}_{r,i}\}_{i=1}^{N_r}$ and $\mathcal{Z}_{\bm{\psi},{\widehat{\bm{\theta}}}}=\{\mathbf{z}_{s,i}\}_{i=1}^{N_s}$, respectively. Then, for any $0<\delta<1$ and a family of functions $\mathcal{H}_{\bm{\Phi}}:\mathcal{Z}\rightarrow\{0,1\}$, {$\mathbb{P}_{{\bm{\psi},{\bm{\theta}^*}}}[{e}_{\bm{\phi}_1}], \forall h_{\bm{\phi}_1}\in\mathcal{H}_{\bm{\Phi}}$} is bounded by
\small
\begin{align}
    \mathbb{P}&_{{\bm{\psi},{\bm{\theta}^*}}}[{e}_{\bm{\phi}_1}]\leq
    \mathbb{P}_{\bm{\psi},{\widehat{\bm{\theta}}}}[{e}_{\bm{\phi}_1}]+\frac{1}{2}\widehat{d}_{\mathcal{A}_\Phi}(\mathcal{Z}_{{\bm{\psi},{\bm{\theta}^*}}},\mathcal{Z}_{\bm{\psi},{\widehat{\bm{\theta}}}})+R_{\mathcal{Z}_{{\bm{\psi},{\bm{\theta}^*}}}}(\mathcal{H}_{\bm{\Phi}})+\nonumber\\&R_{\mathcal{Z}_{\bm{\psi},{\widehat{\bm{\theta}}}}}(\mathcal{H}_{\bm{\Phi}})+\frac{3}{2}\sqrt{(\log{2/\delta})/2N_r}+\frac{3}{2}\sqrt{(\log{2/\delta})/2N_s)}.
\end{align}
\normalsize
\end{theorem}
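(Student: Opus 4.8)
The plan is to reduce \textbf{Theorem~\ref{theorem:mainbound}} to a population-level inequality of the form
$\mathbb{P}_{\bm{\psi},\bm{\theta}^*}[e_{\bm{\phi}_1}]\le \mathbb{P}_{\bm{\psi},\widehat{\bm{\theta}}}[e_{\bm{\phi}_1}]+\tfrac12\, d_{\mathcal{A}_\Phi}\big(p_{\bm{\psi},\bm{\theta}^*}(\mathbf{z}),p_{\bm{\psi},\widehat{\bm{\theta}}}(\mathbf{z})\big)$
and then to replace the population $\mathcal{A}_\Phi$-distance by its empirical surrogate via Lemma~\ref{lemma:Rademachar-A_distance}. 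First I would fix an arbitrary $h_{\bm{\phi}_1}\in\mathcal{H}_{\bm\Phi}$ and, specializing \eqref{eq:prob-error-Z-training} to $C=2$, write the errors with respect to the mapped true and mapped estimated mixtures as integrals over the acceptance region $A_{\bm{\phi}_1}=\{\mathbf{z}:h_{\bm{\phi}_1}(\mathbf{z})=1\}$ and its complement, the latter being again an acceptance region because $\mathcal{H}_{\bm\Phi}$ is assumed closed under complementation. Subtracting, the error gap becomes a signed integral of $p_{\bm{\psi},\bm{\theta}^*}(\mathbf{z})-p_{\bm{\psi},\widehat{\bm{\theta}}}(\mathbf{z})$ over a region built from $A_{\bm{\phi}_1}$; bounding it by its absolute value and then by the supremum over all $h_\phi\in\mathcal{H}_{\bm\Phi}$ in Definition~\ref{def:A_distance} yields the factor $\tfrac12 d_{\mathcal{A}_\Phi}$.

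The second half is mechanical. I would invoke Lemma~\ref{lemma:Rademachar-A_distance} to upper bound $d_{\mathcal{A}_\Phi}\big(p_{\bm{\psi},\bm{\theta}^*}(\mathbf{z}),p_{\bm{\psi},\widehat{\bm{\theta}}}(\mathbf{z})\big)$ by $\widehat{d}_{\mathcal{A}_\Phi}(\mathcal{Z}_{\bm{\psi},\bm{\theta}^*},\mathcal{Z}_{\bm{\psi},\widehat{\bm{\theta}}})+2R_{\mathcal{Z}_{\bm{\psi},\bm{\theta}^*}}(\mathcal{H}_\Phi)+2R_{\mathcal{Z}_{\bm{\psi},\widehat{\bm{\theta}}}}(\mathcal{H}_\Phi)+3\sqrt{\log(2/\delta)/2N_r}+3\sqrt{\log(2/\delta)/2N_s}$, which holds with probability at least $1-\delta$ over the two i.i.d.\ sample draws. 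Carrying the global factor $\tfrac12$ through this bound turns each $2R$ into $R$ and each $3\sqrt{\cdot}$ into $\tfrac32\sqrt{\cdot}$, and moving $\mathbb{P}_{\bm{\psi},\widehat{\bm{\theta}}}[e_{\bm{\phi}_1}]$ to the right-hand side produces exactly the claimed inequality. Since $h_{\bm{\phi}_1}$ was arbitrary and the supremum inside $d_{\mathcal{A}_\Phi}$ ranges over the whole of $\mathcal{H}_{\bm\Phi}$, the bound holds for every $h_{\bm{\phi}_1}\in\mathcal{H}_{\bm\Phi}$ simultaneously, as stated.

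The routine parts are the algebraic rewriting of the error integrals and the constant bookkeeping. The delicate step is the passage from the error gap to $\tfrac12 d_{\mathcal{A}_\Phi}$: in the generic domain-adaptation argument this step also produces the error of the best joint hypothesis (the ``$\lambda$'' term), which I would need to argue away here by using that the true and synthetic distributions share the same class (labeling) structure in $\mathcal{Z}$ --- they differ only through $\bm{\theta}^*$ versus $\widehat{\bm{\theta}}$ --- and that the prior mismatch is suppressed as noted after \eqref{eq:prob-error-Z-training} (if one does not suppress it, an extra additive $\sum_i|\pi_i-\widehat{\pi}_i|$ should be carried). One must also check that the disagreement region associated with $h_{\bm{\phi}_1}$ is measurable and expressible through acceptance regions of $\mathcal{H}_{\bm\Phi}$, which is precisely why the closure of $\mathcal{H}_{\bm\Phi}$ under complementation is assumed; everything downstream --- the i.i.d.\ hypotheses and the union bound --- is inherited from Lemma~\ref{lemma:Rademachar-A_distance} and its proof.
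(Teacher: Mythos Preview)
Your proposal follows the same two–stage scheme as the paper: first obtain a population-level inequality relating the two error probabilities through a distributional distance, then invoke Lemma~\ref{lemma:Rademachar-A_distance} to pass to the empirical $\widehat d_{\mathcal{A}_\Phi}$ with Rademacher slack. The constants bookkeeping in your second paragraph matches the paper exactly.

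The one place where your route differs from the paper is the intermediate distance. You try to bound $\big|\mathbb{P}_{\bm{\psi},\bm{\theta}^*}[e_{\bm{\phi}_1}]-\mathbb{P}_{\bm{\psi},\widehat{\bm{\theta}}}[e_{\bm{\phi}_1}]\big|$ directly by $\tfrac12\,d_{\mathcal{A}_\Phi}$, arguing that the error region is an acceptance set (using closure under complementation). The paper instead first bounds this gap by $\tfrac12\,d_{\mathcal{B}_\Phi}$, i.e.\ the distance built from \emph{disagreement} sets $B_{\phi_1,\phi_2}=\{\mathbf z:h_{\phi_1}(\mathbf z)\neq h_{\phi_2}(\mathbf z)\}$ of Definition~\ref{def:A_distance}, and only then passes to $d_{\mathcal{A}_\Phi}$ by asserting $d_{\mathcal{B}_\Phi}\le d_{\mathcal{A}_\Phi}$ before applying Lemma~\ref{lemma:Rademachar-A_distance}. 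The reason for the detour is the one you yourself flag as ``delicate'': under a common (implicit) labeling function $f$, the error region is $\{h_{\bm{\phi}_1}\neq f\}$, which is naturally a $B$-set, not an $A$-set; your closure-under-complementation argument gives you $A_{\bm{\phi}_1}^c$ but does not by itself turn the class-conditional error difference into a single integral of the \emph{mixture} difference over an $A_\phi$. So your plan is correct in spirit and your caveat about the $\lambda$ term is apt, but to match the paper you should route the population step through $d_{\mathcal{B}_\Phi}$ and then upper-bound it by $d_{\mathcal{A}_\Phi}$ rather than attempt the direct jump.
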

\begin{proof}
See Appendix \ref{Appen:theorem:mainbound}.
\end{proof}

The above theorem bounds the probability of error {with respect to} $p_{\bm{\psi},\bm{\theta}^*}(\mathbf{z})$ associated with a classifier $h_{\bm{\phi}_1}(\mathbf{z})$ in terms of the quantities that {do} not depend on the the unknown true parameters $\bm{\theta}^*$. As our primary goal is to make $ \mathbb{P}_{{\bm{\psi},{\bm{\theta}^*}}}[{e}_{\bm{\phi}_1}]$ as small as possible, the mapping function $M_{\bm{\psi}}(\mathbf{x})$ and the classifier $h_{\bm{\phi}_1}(\mathbf{z})$ should be chosen in a way to minimize the above upper bound. We note that the complexity related terms {in the above bound} are fixed for a chosen family of the functions and the bound is primarily controlled by the first two terms. In other words, $M_{\bm{\psi}}(\mathbf{x})$ and $h_{\bm{\phi}_1}(\mathbf{z})$ should be chosen such that the probability of classification error {with respect to} the mapping of the estimated distribution in the $\mathcal{Z}$ space, i.e., $\mathbb{P}_{\bm{\psi},{\widehat{\bm{\theta}}}}[{e}_{\bm{\phi}_1}]$, and {the} approximated $\mathcal{A}_{\bm{\Phi}}$-distance between the synthetic and training datasets are minimized simultaneously. To achieve this goal, we {restrict ourselves to} $M_{\bm{\psi}}(\mathbf{x})$ and $h_{\bm{\phi}_1}(\mathbf{z})$ {that correspond to} NNs {that} are trained to minimize a loss function in accordance with the first two terms of the above bound. One can efficiently solve {the resulting optimization} problem via {the} stochastic gradient descent method as described in the following.

{\textbf{Joint learning of the feature map and the classifier:}} In terms of specifics, we assume $M_{\bm{\psi}}(\mathbf{x})$ and $h_{\bm{\phi}_1}(\mathbf{z})$ belong {to} the class of feed-forward (deep) NNs whose parameters, i.e., $\bm{\psi}$ and $\bm{\phi}_1$, correspond to the weights and biases of each network. The input and output layers of the NNs corresponding to $M_{\bm{\psi}}$ have $n_x$ and $n_z$ number of neurons, respectively, which denote the dimensions of the spaces $\mathcal{X}$ and $\mathcal{Z}$, respectively. We note that $n_x$ is chosen according to the length of the observation vector as part of the problem formulation, while $n_z$ can be picked as a hyper-parameter to facilitate the training process. Subsequently, the input layer of $h_{\bm{\phi}_1}$ has $n_z$ neurons while its output layer contains $C$ neurons whose activation function is chosen to be the softmax function $\bm{\sigma}(\mathbf{z})$ for which the $i$th element is given by $\frac{e^{\mathbf{z}[i]}}{\sum_{i=1}^{n_z} e^{\mathbf{z}[i]}}$. In this way, the $i$th component of the vector
${\mathbf{y}}_{\bm{\psi},\bm{\phi}_1,\mathbf{x}}\overset{\bigtriangleup}{=}h_{\bm{\phi}_1}\big(M_{\bm{\psi}}(\mathbf{x})\big)$ denotes the probability that the classifier assigns to the input $\mathbf{x}$ {that it belongs} to the $i$th class for $i=0,\dots,C-1$. Consequently, the averaged cross-entropy loss, minimizing of which leads to minimizing the classification error associated with $h_{\bm{\phi}_1}$, over the synthetic dataset $\mathcal{D}_s$ equals
\begin{align}
\label{eq:L_s}
    \mathcal{L}_s({\bm{\psi},\bm{\phi}_1}|\mathcal{D}_s)=\frac{1}{n_s}\sum_{n=1}^{n_s}\sum_{i=1}^{C} \mathbf{l}_{s,n}[n]\log{{\mathbf{y}}_{\bm{\psi},\bm{\phi}_1,\mathbf{x}_{s,n}}[n]},
\end{align}
 where {$\mathbf{l}_{s,n}=\mathbf{e}_{C}({y}_{s,n})$ denotes the one-hot encoded version of the label ${y}_{s,n}$ corresponding to the $n$th sample}. Regrading the computation of $\widehat{d}_{\mathcal{A}_\Phi}$ between the two sets $\mathcal{Z}_{{\bm{\psi},{\bm{\theta}^*}}}$ and $\mathcal{Z}_{\bm{\psi},{\widehat{\bm{\theta}}}}$, it is suggested by the authors in \cite{BenDavid,DANN} that the classification accuracy corresponding to a classifier trained to distinguish between the samples from the two sets can be used as a surrogate for the {$\inf$ part} in (\ref{eq:empiricalA_distance}) that can be readily computed during the learning process. To train such classifier, we consider a NN $d_{\bm{\zeta}}$ with $n_z$ input neurons and $2$ output neurons with softmax activation function, which is trained to distinguish between $\mathcal{Z}_{{\bm{\psi},{\bm{\theta}^*}}}$ and $\mathcal{Z}_{\bm{\psi},{\widehat{\bm{\theta}}}}$ labeled as $0$ and $1$, respectively. Consequently, by defining a two-dimensional vector $d_{\bm{\psi},{\bm{\zeta}},\mathbf{x}}\overset{\bigtriangleup}{=}d_{\bm{\zeta}}\big(M_{\bm{\psi}}(\mathbf{x})\big)$, the $\widehat{d}_{\mathcal{A}_\Phi}$ term can be {approximated} by the cross-entropy loss associated with $d_{\bm{\zeta}}$ as follows:
  \begin{align}
    &\mathcal{L}_d({\bm{\psi},\bm{\zeta}}|\mathcal{D}_s,\mathcal{D}_r)=2\big(1-2\mathcal{L}_c({\bm{\psi},\bm{\zeta}}|\mathcal{D}_s,\mathcal{D}_r)\big),\label{eq:L_d}\\
    &\mathcal{L}_c({\bm{\psi},\bm{\zeta}}|\mathcal{D}_s,\mathcal{D}_r)=\frac{1}{n_r}\sum_{i=1}^{n_r}\log{d_{\bm{\psi},{\bm{\zeta}},\mathbf{x}_{r,n}}[1]}+\nonumber\\&\ \ \ \ \ \ \ \ \ \ \  \ \ \ \ \ \ \ \ \ \ \ \ \ \ \ \ \ \ \ \ \ \ \ \ \ \ \frac{1}{n_s}\sum_{n=1}^{n_s}\log{d_{\bm{\psi},{\bm{\zeta}},\mathbf{x}_{s,n}}[2]} \label{eq:L_c}.
\end{align}

Now, using Theorem \ref{theorem:mainbound} the training goal for the constituent NNs is set to simultaneously minimize the classification error corresponding to the synthetic data and the distance between the real and synthetic data, both measured in the mapped space $\mathcal{Z}$. Specifically, the NNs $M_{\bm{\psi}}$ and $h_{\bm{\phi}_1}$ should be trained to minimize the sum of the losses in (\ref{eq:L_s}) and (\ref{eq:L_d}), while the classifier $d_{\bm{\zeta}}$ is trained to minimize (\ref{eq:L_c}). As $M_{\bm{\psi}}$ is trained to maximize $\mathcal{L}_c({\bm{\psi},\bm{\zeta}})$ despite $d_{\bm{\zeta}}$'s goal to minimize $\mathcal{L}_c({\bm{\psi},\bm{\zeta}})$, the learning process involves adversarial training between these two NNs. Based on the approach taken in \cite{DANN} for adversarial training in the context of domain adaptation, we train the above three NNs for finding the saddle points $\widehat{\bm{\psi}}$, $\widehat{\bm{\phi}}_1$ and $\widehat{\bm{\zeta}}$, such that
\begin{align}
\small
\label{eq:saddlepoints}
&\widehat{\bm{\psi}}, \widehat{\bm{\phi}}_1=\argmin_{{\bm{\psi}}, {\bm{\phi}}_1} \mathcal{L}_t({\bm{\psi},\bm{\phi}_1,\widehat{\bm{\zeta}}}|\mathcal{D}_s,\mathcal{D}_r),\\& \widehat{\bm{\zeta}}=\argmin_{{\bm{\zeta}}}- \mathcal{L}_t({\widehat{\bm{\psi}},\widehat{\bm{\phi}}_1,\bm{\zeta}}|\mathcal{D}_s,\mathcal{D}_r),\\
&\mathcal{L}_t({\bm{\psi},\bm{\phi}_1,\bm{\zeta}}|\mathcal{D}_s,\mathcal{D}_r)=\mathcal{L}_s({\bm{\psi},\bm{\phi}_1}|\mathcal{D}_s)+\mathcal{L}_d({\bm{\psi},\bm{\zeta}}|\mathcal{D}_s,\mathcal{D}_r),
\end{align}
\normalsize
 which {can be} achieved {by} utilizing the stochastic gradient descent algorithm for each minimization task. To this end, the minimization is performed over the NN's parameters, $\bm{\psi}$, $\bm{\phi}_1$ and $\bm{\zeta}$, that are real vectors whose dimensions are determined by the architecture of each network.

\subsection{{An illustrative example: The case of two-dimensional Gaussian data}}
\label{section:Two-dimensional Gaussian}
Next, we show how the learning-based classifier in Section~\ref{Section:Incorporating Synthetic and Real Data in a Data-driven Classification Model} performs on simple training and synthetic datasets in an illustrative manner. To this end, we consider a toy example where the true and estimated distributions are a mixture of two bivariate Gaussian distributions with full-rank covariance matrix each. In particular, we focus on the problem of binary classification where the distribution for the $i$th class is denoted by $p_i(\mathbf{x};{{{\bm{\theta}}}^*_i})=\mathcal{N}(\bm{\mu}_i,\bm{\Sigma})$ for $i=0, 1$, $\bm{\mu}_i\in\mathbb{R}^{2\times1}$, $\bm{\Sigma}\in\mathbb{R}^{2\times2}$, and equal priors. In order to investigate the effect of mismatch between only mean parameters, the corresponding estimated distributions are assumed to have the same {covariance} but different means, i.e., $p_i(\mathbf{x};{\widehat{\bm{\theta}}}_i)=\mathcal{N}(\widehat{\bm{\mu}}_i,\bm{\Sigma})$ for $i=0, 1$ and equal priors. For two multivariate Gaussian distributions, the authors in \cite{tv} have proposed a bound for the corresponding total variation as part of the following theorem.
\begin{figure*}[h]
  \begin{subfigure}{0.34\textwidth}
      \captionsetup{width=0.95\textwidth}
    \includegraphics[width=\linewidth]{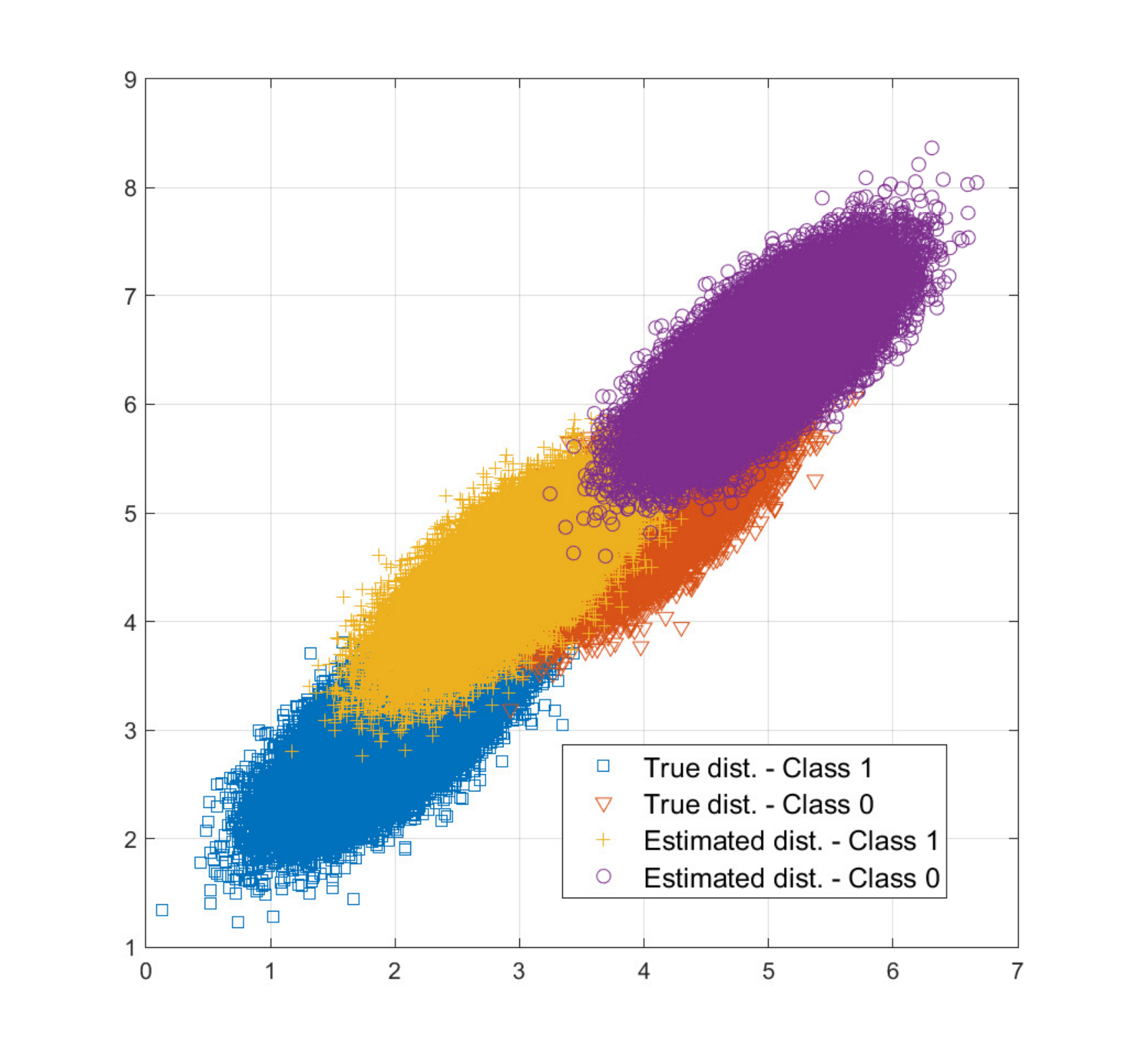}
    \caption{Samples corresponding to the true and estimated distributions in space $\mathcal{X}$.} \label{fig:Gaussian2D-samples-a}
  \end{subfigure}%
  \hspace*{\fill}   
  \begin{subfigure}{0.34\textwidth}
    \captionsetup{width=0.95\textwidth}
    \includegraphics[width=\linewidth]{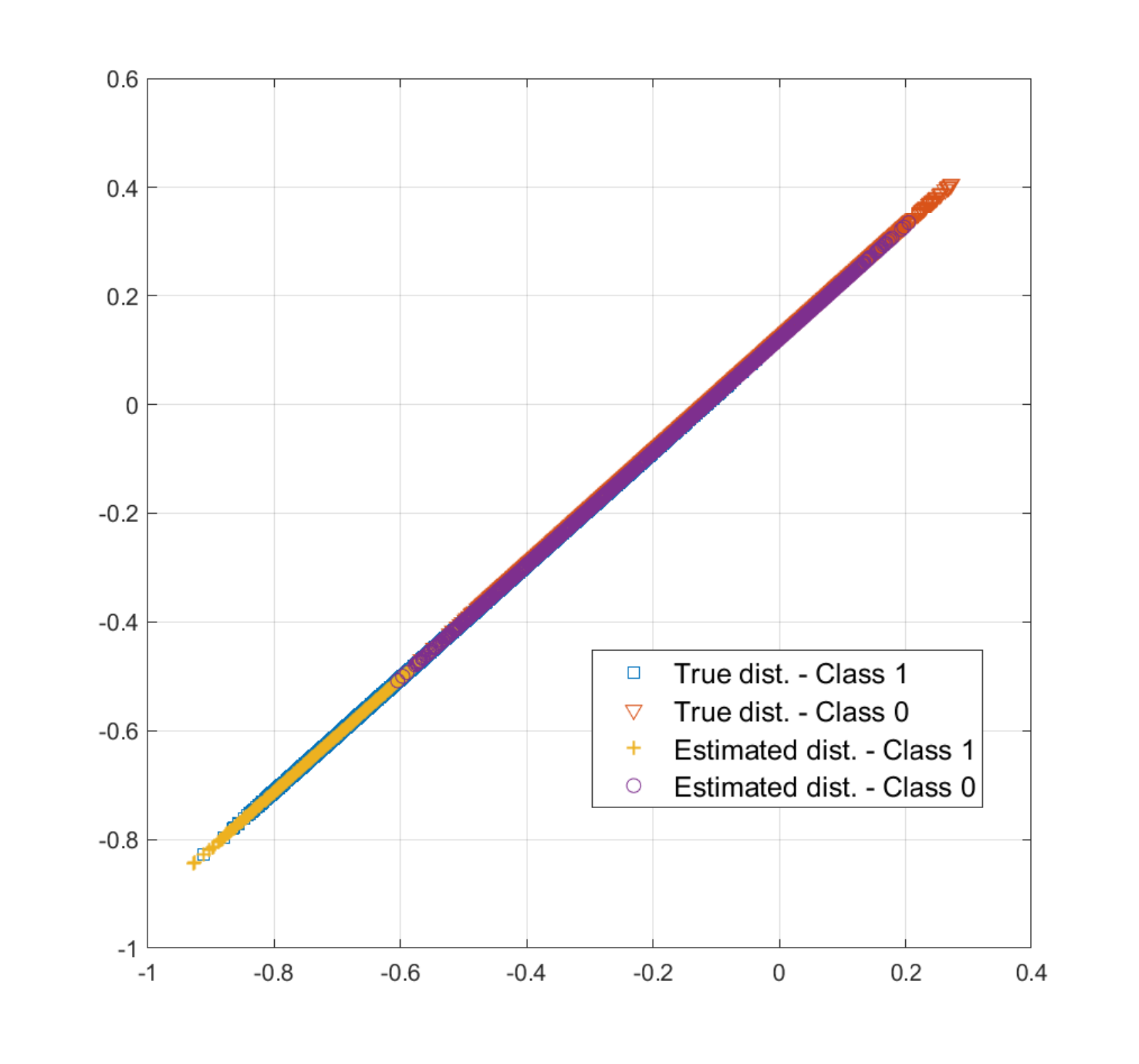}
    \caption{Mapping of the samples via the function $M_{\bm{\psi}}$ to the space $\mathcal{Z}.$} \label{fig:Gaussian2D-samples-b}
  \end{subfigure}%
  \hspace*{\fill}   
  \begin{subfigure}{0.34\textwidth}
      \captionsetup{width=0.95\textwidth}
    \includegraphics[width=\linewidth]{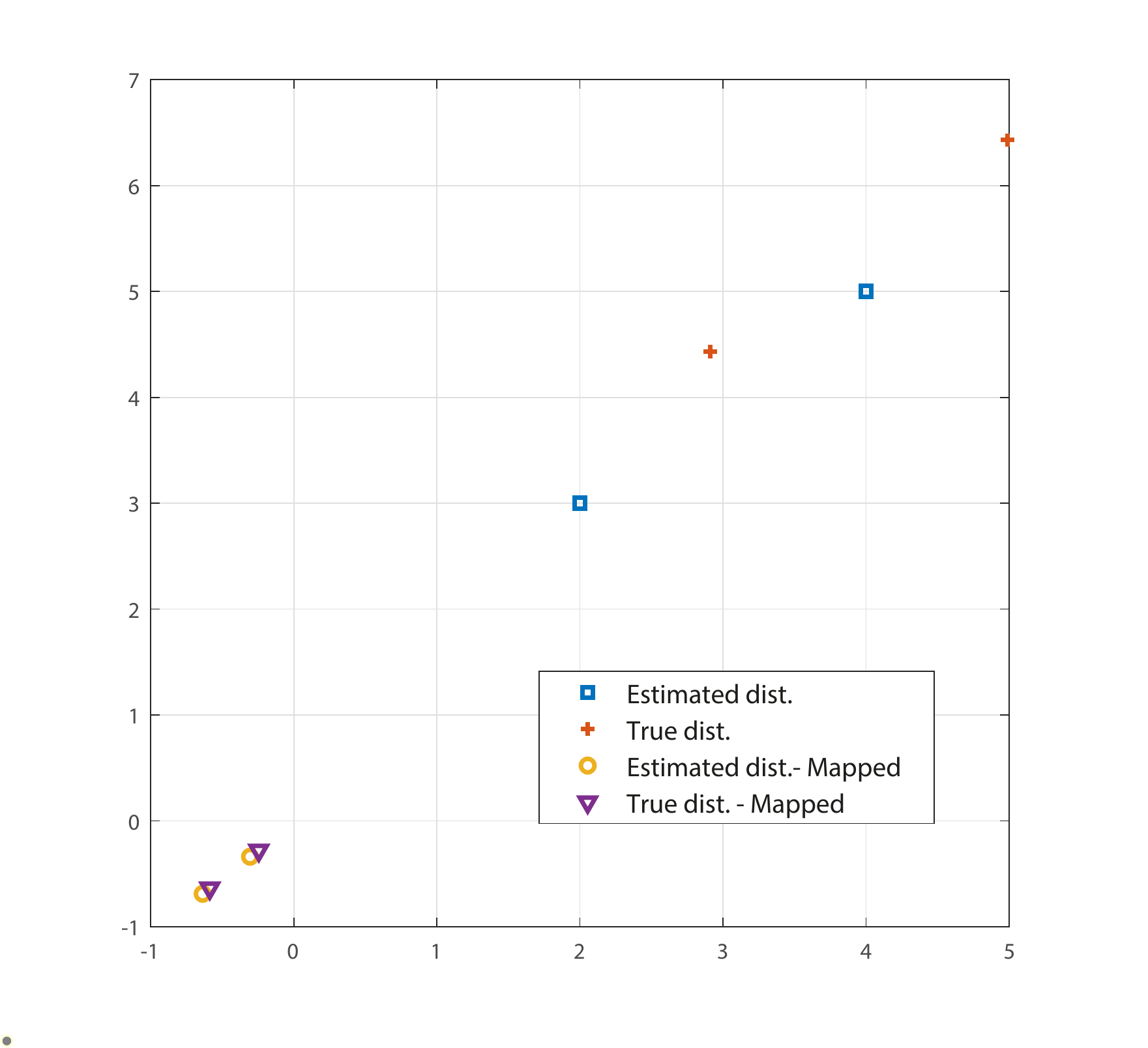}
    \caption{Position of the means in the original space $\mathcal{X}$ and the space $\mathcal{Z}$.} \label{fig:Gaussian2D-samples-c}
  \end{subfigure}
  \captionsetup{width=0.94\textwidth}
\caption{Visualization of the true and estimated distributions and their mappings to the space $\mathcal{Z}$ for the case of $2$D Gaussian {datasets}.} \label{fig:Gaussian2D-samples}
\end{figure*}
\begin{theorem}[\!\cite{tv}] 
Consider two $d$-dimensional Gaussian distributions $\mathcal{N}({{\bm{\mu}}}_1,{\bm{\Sigma}}_1)$ and $\mathcal{N}({{\bm{\mu}}}_2,{\bm{\Sigma}}_2)$ where ${{\bm{\mu}}}_1\neq {{\bm{\mu}}}_2$ and $\bm{\Sigma}_1$ and $\bm{\Sigma}_2$ are positive definite. Let $\mathbf{v}={{\bm{\mu}}}_1-{{\bm{\mu}}}_2$ and $\mathbf{\Pi}$ be a $d\times (d-1)$ matrix whose columns form a basis for the subspace orthogonal to $\mathbf{v}$. Denote the eigenvalues of $(\mathbf{\Pi}^T{\bm{\Sigma}}_1\mathbf{\Pi})^{-1}\mathbf{\Pi}^T{\bm{\Sigma}}_2\mathbf{\Pi}-\mathbf{I}_{d-1}$ by $\rho_1,\dots,\rho_{d-1}$. Then, {the} total variation between the two distribution can be bounded as
\begin{align}
    \frac{1}{200}\leq\frac{TV(\mathcal{N}({{\bm{\mu}}}_1,{\bm{\Sigma}}_1),\mathcal{N}({{\bm{\mu}}}_2,{\bm{\Sigma}}_2))}{\min(1,V)}\leq\frac{9}{2}
\end{align}
where 
$
    V\stackrel{\text{def}}{=}\max\Bigg\{\frac{|\mathbf{v}^T({\bm{\Sigma}}_1-{\bm{\Sigma}}_2)\mathbf{v}|}{\mathbf{v}^T{\bm{\Sigma}}_1\mathbf{v}},\frac{\mathbf{v}^T\mathbf{v}}{\sqrt{\mathbf{v}^T{\bm{\Sigma}}_1\mathbf{v}}},\sqrt{\sum_{i=1}^{d-1}\rho_i^2}\Bigg\}.
$
\end{theorem}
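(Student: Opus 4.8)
\emph{Proof idea.} The plan is to prove both inequalities by exploiting the invariance of total variation under invertible affine maps to reduce to a canonical pair of Gaussians, and then to treat the two effects in $V$ — the displacement of the means along $\mathbf{v}$ and the mismatch of the covariances transverse to $\mathbf{v}$ — more or less independently, joining the estimates with the triangle inequality for $TV$. As a first step I would rotate coordinates so that $\mathbf{v}=\bm{\mu}_1-\bm{\mu}_2$ is parallel to $\mathbf{e}_1$; then $\mathbf{v}^{\perp}=\mathrm{span}(\mathbf{e}_2,\dots,\mathbf{e}_d)$, the matrix $\mathbf{\Pi}$ may be taken to consist of the last $d-1$ standard basis vectors, and $\rho_1,\dots,\rho_{d-1}$ are the eigenvalues of $(\bm{\Sigma}_1)_\perp^{-1}(\bm{\Sigma}_2)_\perp-\mathbf{I}_{d-1}$, where $(\cdot)_\perp$ denotes the lower-right $(d-1)\times(d-1)$ block. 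Applying in addition a linear change of variables that is block-diagonal with respect to $\mathbb{R}^d=\mathrm{span}(\mathbf{e}_1)\oplus\mathbf{v}^\perp$, I may further assume $(\bm{\Sigma}_1)_\perp=\mathbf{I}_{d-1}$ and $(\bm{\Sigma}_2)_\perp=\mathrm{diag}(1+\rho_i)$; a bookkeeping computation then shows that the three terms of $V$ become, up to constants, the mean gap measured in standard-deviation units along $\mathbf{e}_1$, the along-$\mathbf{e}_1$ variance mismatch, and $\sqrt{\sum_i\rho_i^2}$.

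For the upper bound I would interpolate through the auxiliary Gaussian $Q=\mathcal{N}(\bm{\mu}_1,\bm{\Sigma}_2)$ and use $TV(\mathcal{N}(\bm{\mu}_1,\bm{\Sigma}_1),\mathcal{N}(\bm{\mu}_2,\bm{\Sigma}_2))\le TV(\mathcal{N}(\bm{\mu}_1,\bm{\Sigma}_1),Q)+TV(Q,\mathcal{N}(\bm{\mu}_2,\bm{\Sigma}_2))$. The second summand compares two Gaussians with identical covariance $\bm{\Sigma}_2$, so projecting onto $\mathbf{v}$ reduces it to one dimension, where it equals $2\Phi\!\big(\tfrac{\|\mathbf v\|^2}{2\sqrt{\mathbf v^T\bm\Sigma_2\mathbf v}}\big)-1\asymp\min\!\big(1,\ \|\mathbf v\|^2/\sqrt{\mathbf v^T\bm\Sigma_2\mathbf v}\big)$, which (after a case split on whether the first term of $V$ is large, in which regime $\min(1,V)=1$ makes the bound trivial) matches the second term of $V$. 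The first summand compares two Gaussians with the common mean $\bm{\mu}_1$; here I would invoke Pinsker's inequality together with the closed-form $\mathrm{KL}=\tfrac12\sum_j(\lambda_j-1-\log\lambda_j)$ in the relevant relative eigenvalues $\lambda_j$, which for $\lambda_j$ in a fixed compact subinterval of $(0,\infty)$ is $O\!\big(\sum_j(\lambda_j-1)^2\big)=O\!\big(\sum_i\rho_i^2 + (\text{along-}\mathbf v\text{ variance mismatch})^2\big)$, and otherwise is replaced by the trivial bound $TV\le 1$ while noting that $V\ge c$ in that regime. Collecting the pieces and folding them into one $\min(1,\cdot)$ of the maximum yields the constant $9/2$.

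For the lower bound I would use that $TV$ dominates the discrepancy of any measurable event and simply exhibit a good distinguishing test in each regime. When the mean term dominates $V$, the half-space $\{\langle\mathbf{x}-\bm{\mu}_2,\mathbf{v}\rangle>\tfrac12\|\mathbf v\|^2\}$ separates the two measures by $\asymp\min(1,\|\mathbf v\|^2/\sqrt{\mathbf v^T\bm\Sigma_1\mathbf v})$. When instead the transverse covariance term $\sqrt{\sum_i\rho_i^2}$ (or the along-$\mathbf v$ variance term) dominates, I would use a quadratic test $\sum_i c_i x_i^2$ with coefficients drawn from the eigendirections of the mismatch: under the two distributions this statistic is a weighted sum of independent $\chi^2_1$ random variables with different weight vectors, and a sharp anti-concentration / Berry--Esseen-type estimate for such weighted sums shows the two resulting scalar laws are $\asymp\min(1,\sqrt{\sum_i\rho_i^2})$ apart. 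Taking the better of the available tests gives the lower constant $1/200$.

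The main obstacle is the covariance-mismatch side of the lower bound — and, relatedly, the regime where the mean and covariance contributions are of comparable size. One has to handle the asymmetry between $\rho_i\downarrow-1$ (a nearly degenerate transverse direction, along which the two densities become almost mutually singular) and $\rho_i\uparrow\infty$ (a soft inflation), establish an anti-concentration inequality for weighted sums of $\chi^2_1$ variables that is tight up to a universal constant independent of the dimension $d$, and ensure that the interpolation step and the union over candidate tests each cost only an absolute constant rather than something growing with $d$. Carefully propagating these constants through the extremal configuration is precisely what pins down the explicit numbers $1/200$ and $9/2$.
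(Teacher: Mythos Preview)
The paper does not prove this theorem at all: it is quoted verbatim from the cited reference \cite{tv} (Devroye, Mehrabian, Reddad) and used only as a black box to derive Corollary~\ref{corollary:TVtwoGaussian_equal_variance}. There is therefore nothing in the paper to compare your proposal against.

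That said, your sketch is a faithful outline of the strategy in \cite{tv}: affine-invariance reduction, the upper bound via an intermediate Gaussian plus Pinsker on the closed-form KL, and the lower bound via half-space and quadratic tests. You have also correctly isolated the genuinely hard step, namely the dimension-free anti-concentration estimate for weighted $\chi^2$ sums that drives the covariance-mismatch lower bound. One point to be careful about in your reduction: a block-diagonal change of variables with respect to $\mathrm{span}(\mathbf{e}_1)\oplus\mathbf{v}^\perp$ will not in general simultaneously whiten $(\bm{\Sigma}_1)_\perp$ and diagonalize $(\bm{\Sigma}_2)_\perp$ while leaving the cross-blocks of $\bm{\Sigma}_1,\bm{\Sigma}_2$ unchanged; those off-diagonal blocks couple the along-$\mathbf{v}$ and transverse directions and must be tracked, which is part of why the original proof splits into several regimes rather than a single clean decomposition. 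This does not break your plan, but the ``bookkeeping computation'' you allude to is where most of the work and the specific constants $1/200$ and $9/2$ actually come from.
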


We note that a bound on total variation would also bound the $\mathcal{A}_\Phi$ distance following the discussion after Definition \ref{def:A_distance}. Using the above result, we can bound the total variation distance between $p_i(\mathbf{x};{{{\bm{\theta}}}^*_i})$ and $p_i(\mathbf{x};{\widehat{\bm{\theta}}}_i)$ as follows, which will provide useful insights in the {remainder} of this section about the learning process described in Section \ref{Section:Incorporating Synthetic and Real Data in a Data-driven Classification Model}.
\begin{corollary}
\label{corollary:TVtwoGaussian_equal_variance}
For two Gaussian distributions $\mathcal{N}({{\bm{\mu}}}_0,{\bm{\Sigma}})$ and $\mathcal{N}(\widehat{{{\bm{\mu}}}}_0,{\bm{\Sigma}})$ with the same positive definite covariance matrix ${\bm{\Sigma}}$, the corresponding total variation is bounded from the above by
$
   \frac{9}{2}\min\Big(1,\frac{({{\bm{\mu}}}_0-\widehat{{{\bm{\mu}}}}_0)^T({{\bm{\mu}}}_0-\widehat{{{\bm{\mu}}}}_0)}{\sqrt{({{\bm{\mu}}}_i-\widehat{{{\bm{\mu}}}}_0)^T{\bm{\Sigma}}({{\bm{\mu}}}_0-\widehat{{{\bm{\mu}}}}_0)}}\Big).
$
\end{corollary}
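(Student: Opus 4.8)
The plan is to derive the corollary as a direct specialization of the preceding theorem (from \cite{tv}) to the case $\bm{\Sigma}_1=\bm{\Sigma}_2=\bm{\Sigma}$, $\bm{\mu}_1=\bm{\mu}_0$, $\bm{\mu}_2=\widehat{\bm{\mu}}_0$. First I would set $\mathbf{v}=\bm{\mu}_0-\widehat{\bm{\mu}}_0$; under the standing assumption that the two Gaussians are distinct we have $\mathbf{v}\neq\mathbf{0}$, and since $\bm{\Sigma}$ is positive definite, $\mathbf{v}^T\bm{\Sigma}\mathbf{v}>0$, so every quantity entering the definition of $V$ is well defined. (If instead $\bm{\mu}_0=\widehat{\bm{\mu}}_0$ the two distributions coincide, the total variation is $0$, and the claimed bound holds trivially.)

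Next I would evaluate the three terms in $V$ in turn. The first term, $|\mathbf{v}^T(\bm{\Sigma}_1-\bm{\Sigma}_2)\mathbf{v}|/(\mathbf{v}^T\bm{\Sigma}_1\mathbf{v})$, vanishes because $\bm{\Sigma}_1-\bm{\Sigma}_2=\mathbf{0}$. For the third term, I would observe that for any $d\times(d-1)$ matrix $\mathbf{\Pi}$ whose columns span the orthogonal complement of $\mathbf{v}$, the matrix $\mathbf{\Pi}$ has full column rank, so $\mathbf{\Pi}^T\bm{\Sigma}\mathbf{\Pi}$ is positive definite and hence invertible; therefore $(\mathbf{\Pi}^T\bm{\Sigma}_1\mathbf{\Pi})^{-1}\mathbf{\Pi}^T\bm{\Sigma}_2\mathbf{\Pi}-\mathbf{I}_{d-1}=(\mathbf{\Pi}^T\bm{\Sigma}\mathbf{\Pi})^{-1}(\mathbf{\Pi}^T\bm{\Sigma}\mathbf{\Pi})-\mathbf{I}_{d-1}=\mathbf{0}$, so $\rho_1=\dots=\rho_{d-1}=0$ and $\sqrt{\sum_i\rho_i^2}=0$. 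Consequently $V$ collapses to its second term, $V=\mathbf{v}^T\mathbf{v}/\sqrt{\mathbf{v}^T\bm{\Sigma}\mathbf{v}}\ge 0$.

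Finally I would substitute this $V$ into the upper bound $TV(\mathcal{N}(\bm{\mu}_0,\bm{\Sigma}),\mathcal{N}(\widehat{\bm{\mu}}_0,\bm{\Sigma}))\le\frac{9}{2}\min(1,V)$ supplied by the theorem, which yields precisely the claimed expression (reading $\bm{\mu}_0$ for the $\bm{\mu}_i$ that appears under the square root in the statement, which is an evident typo). Since the argument is essentially pure substitution, there is no genuine obstacle; the only points meriting a line of care are checking that $\mathbf{\Pi}^T\bm{\Sigma}\mathbf{\Pi}$ is invertible and that $\sqrt{\sum_i\rho_i^2}$ does not depend on the particular choice of basis $\mathbf{\Pi}$ — here it is trivially basis-independent because all the $\rho_i$ are zero, but in general a change of basis $\mathbf{\Pi}'=\mathbf{\Pi}\mathbf{R}$ with $\mathbf{R}$ invertible conjugates the relevant matrix and so preserves its spectrum.
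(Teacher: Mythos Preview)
Your proposal is correct and is precisely the approach implicit in the paper: the corollary is simply the specialization of the preceding theorem from \cite{tv} to $\bm{\Sigma}_1=\bm{\Sigma}_2=\bm{\Sigma}$, whereupon the first and third terms in $V$ vanish and only the middle term survives. The paper gives no separate proof beyond ``using the above result,'' so your derivation (including the observation that $\bm{\mu}_i$ in the statement is a typo for $\bm{\mu}_0$) is exactly what is intended.
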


Regarding the specific architecture for the NNs utilized in Section \ref{Section:Incorporating Synthetic and Real Data in a Data-driven Classification Model}, {let us now} choose the mapping function $M_{\bm{\psi}}$ to be $M_{\bm{\psi}}(\mathbf{x})=\mathbf{W}_{\bm{\psi},2}\mathbf{W}_{\bm{\psi},1}\mathbf{x}$ parameterized by $\bm{\psi}=\{\mathbf{W}_{\bm{\psi},1}\in\mathbb{R}^{20\times 2},\mathbf{W}_{\bm{\psi},2}\in\mathbb{R}^{2\times 20}\}$. In particular, we have set the dimension of the space $\mathcal{Z}$ to $n_z=2$ in order to be able to readily visualize it within the $2$D coordinate system. For each $h_{\bm{\phi}_1}$ and $d_{\bm{\zeta}}$, we choose a two-layer NN with softmax activation function. Specifically, for $h_{\bm{\phi}_1}$ we have $h_{\bm{\phi}_1}\big(M_{\bm{\psi}}(\mathbf{x})\big)=\text{softmax}\big(\mathbf{V}_{\bm{\phi}_1,2}(\mathbf{V}_{\bm{\phi}_1,1}M_{\bm{\psi}}(\mathbf{x})+\mathbf{b}_{\bm{\phi}_1,1})+\mathbf{b}_{\bm{\phi}_1,2}\big)$ where $\bm{\phi}_1=\{\mathbf{V}_{\bm{\phi}_1,1}\in\mathbb{R}^{20\times {2}},\mathbf{b}_{\bm{\phi}_1,1}\in\mathbb{R}^{20},\mathbf{V}_{\bm{\phi}_1,2}\in\mathbb{R}^{2\times 20},\mathbf{b}_{\bm{\phi}_1,2}\in\mathbb{R}^{2}\}$. Similarly, $d_{\bm{\zeta}}$ is chosen to be $d_{\bm{\zeta}}\big(M_{\bm{\psi}}(\mathbf{x})\big)=\text{softmax}\big(\mathbf{U}_{\bm{\phi}_1,2}(\mathbf{U}_{\bm{\phi}_1,1}M_{\bm{\psi}}(\mathbf{x})+\mathbf{b}_{\bm{\phi}_1,1})+\mathbf{b}_{\bm{\phi}_1,2}\big)$ for $\bm{\zeta}=\{\mathbf{U}_{\bm{\phi}_1,1}\in\mathbb{R}^{20\times {2}},\mathbf{b}_{\bm{\phi}_1,1}\in\mathbb{R}^{20},\mathbf{U}_{\bm{\phi}_1,2}\in\mathbb{R}^{2\times 20},\mathbf{b}_{\bm{\phi}_1,2}\in\mathbb{R}^{2}\}$. Training of these NNs involves finding the saddle points of (\ref{eq:saddlepoints}) based on the available training and synthetic datasets which would lead to the learning-based classifier $h_{\bm{\phi}_1}$. We note that the above simple choice of the mapping function maps $p_i(\mathbf{x};\mathbf{\theta}_i^*),\ i=0,\ 1$ to Gaussian distributions in the $\mathcal{Z}$ space which allows us to utilize Corollary \ref{corollary:TVtwoGaussian_equal_variance} for analyzing the total variation distance between these mappings in the following.

{We now resort to numerical results for further illustration of this example.} {To this end}, we set ${{\bm{\mu}}}_0=[2.9,4.4]$, ${{\bm{\mu}}}_1=[5,6.4]$, $\widehat{{{\bm{\mu}}}}_0=[2,3]$, $\widehat{{{\bm{\mu}}}}_1=[4,5]$ and ${\bm{\Sigma}}=\begin{bmatrix}0.15&0.11\\0.11&0.15\end{bmatrix}$. Also, we {generate} $n_r=40$ samples from the true distribution, while $n_s=2000$ samples are generated from the estimated distribution. The Figs. \ref{fig:Gaussian2D-samples-a} and \ref{fig:Gaussian2D-samples-b} depict the samples from the true and estimated distributions and their mapping through the function $M_{\bm{\psi}}$ into the $\mathcal{Z}$ space, respectively. Furthermore, the positions of the means corresponding to the samples from the real and estimated distributions in both space $\mathcal{X}$ and $\mathcal{Z}$ are illustrated in Fig.~\ref{fig:Gaussian2D-samples-c}. An important observation in relation to the Corollary \ref{corollary:TVtwoGaussian_equal_variance} can be made by noting that the total variation between $\mathcal{N}({{\bm{\mu}}}_i,{\bm{\Sigma}})$ and $\mathcal{N}(\widehat{{{\bm{\mu}}}}_i,{\bm{\Sigma}})$ is bounded by the term $||\mathbf{v}||\frac{\mathbf{e}_v^T\mathbf{e}_v}{\sqrt{\mathbf{e}_v^T{\bm{\Sigma}}\mathbf{e}_v}}$ where $\mathbf{v}=\bm{\mu}_i-\widehat{\bm{\mu}}_i$ and $\mathbf{e}_{v}=\mathbf{v}/||\mathbf{v}||$. Assuming $\lambda_1$ and $\lambda_2$ are eigenvalues of ${\bm{\Sigma}}$ with corresponding eigenvectors $\mathbf{u}_1$ and $\mathbf{u}_2$ such that $\lambda_1>\lambda_2$, it is straightforward to show that the maximal value of $\mathbf{e}_v^T{\bm{\Sigma}}\mathbf{e}_v=\lambda_1(\mathbf{u}_1^T\mathbf{e}_v)+\lambda_2(\mathbf{u}_2^T\mathbf{e}_v)$ is achieved when $\mathbf{e}_{v}\perp\mathbf{u}_2$. Therefore, for $||\mathbf{v}||\frac{\mathbf{e}_v^T\mathbf{e}_v}{\sqrt{\mathbf{e}_v^T{\bm{\Sigma}}\mathbf{e}_v}}$ to be minimized $\mathbf{e}_v$ ought to be in the same direction of $\mathbf{u}_1$ while $||\mathbf{v}||$ become minimum. Notably, Figs. \ref{fig:Gaussian2D-samples-b} and \ref{fig:Gaussian2D-samples-c} highlight the fact that finding the saddle points in (\ref{eq:saddlepoints}) in part {corresponds} to mapping the datasets to a feature space $\mathcal{Z}$ {that} satisfy both these two criteria. 

\section{Case study 1: Channel-based Spoofing Detection for Physical Layer Security}
\label{Section:Case study 1: Spoofing detection via channel frequency response}
We now present the first case study concerning channel spoofing detection, which can be posed as a binary hypothesis testing problem. We obtain the likelihood ratio test based on the time-variant channel models. Then, we discuss parameter estimation procedures for the likelihood function corresponding to channel frequency responses (CFRs), and show how our proposed solution is applicable for achieving an enhanced spoofing detection system. 
\subsection{System Model}
The problem of channel spoofing detection arises in a wireless communication environment where a legitimate transmitter (Alice) is transmitting signals to a legitimate receiver (Bob) in the presence of an adversary (Eve). Eve aims at spoofing the Alice-Bob's channel by using the Alice's MAC address \cite{CFR1,CFR2}. Bob's goal, in this setting, is to distinguish between the signals coming from Alice and Eve based on the corresponding channel frequency responses (CFRs).  

We envision the communication parties in a $5$G propagation setting relying on MIMO-OFDM wideband communications where the number of antennas are set to $N_{Tx}$ and $N_{Rx}$ at the transmitter (Tx) and the receiver (Rx), respectively. We assume bob measures and stores channel frequency response samples at $M$ tones, across an overall system bandwidth of $W$. We consider a generalized time-variant channel model, where each measured frequency response sample is made up of three components: $1)$ specular paths ($\overline{\mathbf{h}}$), $2)$ time-varying part ${\mathbf{d}}_u$, and $3)$ noise ${\mathbf{n}}$, which are complex vectors of size $M\times1$. The specular paths model the dominant portion of the channel which remains unchanged within a coherence time. The time-varying part models the dense multipath components which accounts for the diffuse scattering between two transceivers. Finally, the noise part models the measurement noise. The measured CFR at Bob at time $t=uT$, $u\in \mathbb{N}$, is denoted by $\mathbf{h}_k$ which is a $M\times 1$ vector such that
\begin{align}
\label{eq:full-channel}
    \mathbf{h}_u=\overline{\mathbf{h}}+{\mathbf{d}}_u+{\mathbf{n}}.
\end{align}

We first introduce the dominant paths model suitable for MIMO-OFDM communication, under a frequency-dependent array response \cite{arrayresponse}. For this scenario, the $N_{Rx}\times N_{Tx}$ channel matrix associated with the $n$th subcarrier ($n=1,\dots,N_f$) is expressed as
\begin{align}
    \mathbf{H}[n]=\mathbf{A}_{R}[n]\mathbf{\Gamma}[n]\mathbf{A}^H_{T}[n].
\end{align}
The antenna steering response vectors are defined as
\begin{align}
    &\mathbf{A}_{T}[n]=[\mathbf{a}_{T,n}(\psi_{T,0}),\dots,\mathbf{a}_{T,n}(\psi_{T,K-1})],\\
    &    \mathbf{A}_{R}[n]=[\mathbf{a}_{Rx,n}(\psi_{R,0}),\dots,\mathbf{a}_{R,n}(\psi_{Rx,K-1})],
\end{align}
where $K$ is the total number of dominant paths, $\psi_{T,k}=\frac{2\pi}{\lambda_n}d\sin(\theta_{Tx,k})$, $\lambda_n=c(NT_s+f_c)/n$ denotes the signal bandwidth at the $n$th subcarrier, and $d$ refers to the distance between two antenna elements. The structure of the frequency-dependent antenna steering and response vectors $\mathbf{a}_{T,n}(\psi_{T,K-1})$
and $\mathbf{a}_{R,n}(\psi_{R,K-1})$ depends on the specific array structure. For the case of a uniform linear array
(ULA) which we consider in this work, we have
\begin{align}
    \mathbf{a}_{T,n}(\psi_{T,k})=\frac{1}{N_{Tx}}[e^{-j\frac{N_{Tx}-1}{2}\psi_{T,k}},\dots,e^{j\frac{N_{Tx}-1}{2}\psi_{T,k}}],
\end{align}
Similarly, $\mathbf{a}_{Rx,n}(\psi_{Rx,k})$ can be defined for the receiver's antennas. The path gain matrix is obtained by
\begin{align}
    \mathbf{\Gamma}[n]&=\sqrt{N_{Rx}N_{Tx}}\\&\text{diag}\bigg\{\rho_0e^{-j2\pi n\tau_0/(NT_s)},\dots,\rho_{K-1}e^{-j2\pi n\tau_{K-1}/(NT_s)}\bigg\},
\end{align}
where $h_k$ and $\tau_k$ denotes the complex channel gain and delays of the $k$th path while $T_s$ is the sampling interval. Then, $\bar{\mathbf{h}}$ is defined as concatenation of the vectorized version of $\mathbf{H}[n]$ for all the subcarriers $n=1,\dots,N$, i.e.,
\begin{align}
\label{eq:fixedpart-channel}
    \Bar{\mathbf{h}}=\big[\text{vec}\{\mathbf{H}[1]\}^T,\dots,\text{vec}\{\mathbf{H}[N]\}^T\big]^T,
\end{align}
where $\text{vec}\{\cdot\}$ denotes the column vector operator. We denote the associated parameters with the specular paths contribution, $\overline{\mathbf{h}},$ which remains constant during a coherence time $T_c$, via a $4K\times1$ vector $\bm{\theta}_{sp}$ defined as
\begin{align}
    \bm{\theta}_{sp}=[\bm{\psi}_{T},\bm{\psi}_{R},\bm{\tau},\bm{\rho}]^T,
\end{align}
where $\bm{\psi}_{T}=[ \psi_{T,0},\dots,\psi_{T,K-1}]$, $\bm{\psi}_{R}=[ \psi_{R,0},\dots,\psi_{R,K-1} ]$, $\bm{\tau}=[ \tau_0,\dots,\tau_{K-1}]$ and $\bm{\rho}=[ \rho_0,\dots,\rho_{K-1}]$.

For modeling the  variable part of the channel we first assume that the wide-sense stationary uncorrelated scattering (WSSUS) assumption holds, and use a multipath tapped delay line, $h(t,\tau)=\sum_{l=0}^{L-1} A_l(t)\delta(\tau-l\Delta\tau),$ to model the impulse response at time $t$ between two antennas. Here, $A_l(t)$ and $\Delta\tau=1/W$ denote the (complex) amplitude of the $l$th path and the delay between two consecutive paths, respectively. Sampling the impulse response at time $t=uT$, followed by taking the Fourier transform w.r.t. $\tau$ would result in a vector $\mathbf{q}_{{u}}$ whose $n$th element is denoted by
\begin{align}
\label{eq:variablepart-channel}
&\mathbf{q}_{u}[n]=\mathcal{F}\{h(kT,\tau)\}|_{f=f_0-W/2+n\Delta f}=\nonumber\\&\sum_{l=0}^{L-1}A_{u,l}e^{-j2\pi(f_0-W/2+n\Delta f)l/W}, n=1,\dots,N_f,
\end{align}
where $\Delta f$ and $A_{u,l}$ denotes the subcarrier width and the $l$th channel gain at time $u$, respectively. Following the exponential decay model which holds for the power delay profile of $\mathbf{q}_{u}$ based on various experimental observations \cite{CFR1}, $A_{u,l}$ is modeled with zero-mean Gaussian distribution with variance $\text{Var}(A_{u,l})=\alpha^2(1-e^{-2\pi\beta})e^{-2\pi\beta}$. Here, $\alpha^2$ and $\beta$ denotes the average power and the normalized coherence bandwidth, respectively. The distribution of $\mathbf{q}_u$ is given in the following lemma.
\begin{lemma}
\label{lemma:Dist-of-CFR}
The vector $\mathbf{q}_u$ has a multivariate Gaussian distribution $\mathcal{CN}(\mathbf{0},\mathbf{R}_{\mathbf{q}})$ with a Toeplitz covarinace matrix $\mathbf{R}_{\mathbf{q}}=\text{toep}(\bm{\nu}_{\mathbf{q}},\bm{\nu}_{\mathbf{q}}^H)$ assuming
\begin{align}
    \bm{\nu}_{\mathbf{q}}\overset{\bigtriangleup}{=}\big[\kappa(\bm{\theta}_{\mathbf{q}},0), \kappa(\bm{\theta}_{\mathbf{q}},\frac{1}{M}),\dots,  \kappa(\bm{\theta}_{\mathbf{q}},1-\frac{1}{M})\big],
\end{align}
where $\kappa(\bm{\theta}_{\mathbf{q}},m)\overset{\bigtriangleup}{=}\frac{\alpha^2(1-e^{-2\pi\beta})(1-e^{-2\pi L(\beta-mj)})}{(1-e^{-2\pi (\beta-mj)})}$ and $\bm{\theta}_{\mathbf{q}}=[\alpha^2,\beta,L]$.
\end{lemma}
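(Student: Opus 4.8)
The plan is to view $\mathbf{q}_u$ as a fixed linear image of the Gaussian tap-gain vector and then read off the covariance entrywise. First I would stack the tap gains into $\mathbf{A}_u\overset{\bigtriangleup}{=}[A_{u,0},\dots,A_{u,L-1}]^T$. Under the WSSUS (uncorrelated scattering) assumption the components of $\mathbf{A}_u$ are mutually uncorrelated, and since each $A_{u,l}$ is a zero-mean circularly-symmetric complex Gaussian whose variance follows the exponential power-delay profile, $\mathbb{E}[|A_{u,l}|^2]=\alpha^2(1-e^{-2\pi\beta})e^{-2\pi\beta l}$, we have $\mathbf{A}_u\sim\mathcal{CN}(\mathbf{0},\bm{\Lambda})$ with $\bm{\Lambda}=\text{diag}\big(\alpha^2(1-e^{-2\pi\beta})e^{-2\pi\beta l}\big)_{l=0}^{L-1}$. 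Equation~\eqref{eq:variablepart-channel} then reads $\mathbf{q}_u=\mathbf{F}\mathbf{A}_u$ with $\mathbf{F}\in\mathbb{C}^{N_f\times L}$, $\mathbf{F}[n,l]=e^{-j2\pi(f_0-W/2+n\Delta f)l/W}$. Because a deterministic linear map sends a circularly-symmetric complex Gaussian vector to another such vector, $\mathbf{q}_u\sim\mathcal{CN}(\mathbf{0},\mathbf{R}_\mathbf{q})$ with $\mathbf{R}_\mathbf{q}=\mathbf{F}\bm{\Lambda}\mathbf{F}^H$; this already establishes the zero-mean complex-Gaussian part of the claim.

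Next I would compute a generic entry. Using $\mathbb{E}[A_{u,l}\overline{A_{u,l'}}]=[\bm{\Lambda}]_{l,l}\,\delta_{l,l'}$,
\begin{align}
[\mathbf{R}_\mathbf{q}]_{n,n'}=\sum_{l=0}^{L-1}\alpha^2(1-e^{-2\pi\beta})\,e^{-2\pi\beta l}\,e^{-j2\pi(n-n')\Delta f\,l/W}.\nonumber
\end{align}
With the subcarrier spacing normalized so that $\Delta f/W=1/M$, this depends on $(n,n')$ only through the difference $n-n'$; hence $\mathbf{R}_\mathbf{q}$ is Hermitian Toeplitz, determined by a single sequence $\{\nu_k\}_{k=0}^{M-1}$ below the diagonal and by its conjugates above. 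It therefore remains to evaluate $\nu_k\overset{\bigtriangleup}{=}[\mathbf{R}_\mathbf{q}]_{n,n'}$ for $n-n'=k$. Setting $r=e^{-2\pi\beta}e^{-j2\pi k/M}=e^{-2\pi(\beta-(k/M)j)}$, the sum over $l$ is geometric, so
\begin{align}
\nu_k=\alpha^2(1-e^{-2\pi\beta})\sum_{l=0}^{L-1}r^{l}=\alpha^2(1-e^{-2\pi\beta})\,\frac{1-r^{L}}{1-r}=\frac{\alpha^2(1-e^{-2\pi\beta})\big(1-e^{-2\pi L(\beta-(k/M)j)}\big)}{1-e^{-2\pi(\beta-(k/M)j)}},\nonumber
\end{align}
which is exactly $\kappa(\bm{\theta}_\mathbf{q},k/M)$ with $\bm{\theta}_\mathbf{q}=[\alpha^2,\beta,L]$. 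Letting $k$ run over $0,1,\dots,M-1$ identifies the defining sequence with $\bm{\nu}_\mathbf{q}=[\kappa(\bm{\theta}_\mathbf{q},0),\kappa(\bm{\theta}_\mathbf{q},\tfrac1M),\dots,\kappa(\bm{\theta}_\mathbf{q},1-\tfrac1M)]$, and since $\nu_0=\alpha^2(1-e^{-2\pi\beta})\sum_{l}e^{-2\pi\beta l}$ is real, $\mathbf{R}_\mathbf{q}=\text{toep}(\bm{\nu}_\mathbf{q},\bm{\nu}_\mathbf{q}^H)$, as claimed.

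I do not expect a genuine obstacle here: the argument is a covariance computation followed by a geometric-series identity. The only thing to be careful about is the bookkeeping of conventions — the $l$-dependence of the tap variances (the exponential-decay PDP carries the factor $e^{-2\pi\beta l}$ rather than being constant in $l$), the normalization $\Delta f/W=1/M$ that makes the covariance Toeplitz over the $M$ measured tones, and the sign of the Fourier kernel so that the geometric ratio comes out as $e^{-2\pi(\beta-(k/M)j)}$ and matches the stated form of $\kappa$. Once these are pinned down, circular symmetry of $\mathbf{q}_u$ follows from that of $\mathbf{A}_u$ and the linearity of \eqref{eq:variablepart-channel}, and the Hermitian–Toeplitz structure follows from $\mathbf{R}_\mathbf{q}$ being a Hermitian matrix whose entries depend on $n-n'$ alone.
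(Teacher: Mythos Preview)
Your proposal is correct and follows essentially the same route as the paper: both arguments use that $\mathbf{q}_u$ is a linear image of the independent zero-mean complex Gaussian tap gains $A_{u,l}$ (you package this as $\mathbf{q}_u=\mathbf{F}\mathbf{A}_u$ and $\mathbf{R}_\mathbf{q}=\mathbf{F}\bm{\Lambda}\mathbf{F}^H$, the paper just computes $\mathbb{E}[\mathbf{q}_u[m]\mathbf{q}_u[n]^*]$ directly), observe that the resulting covariance depends only on the index difference, and close out with the same finite geometric series to obtain $\kappa$. The only cautionary note is the sign bookkeeping you already flag: in your displayed entry the exponent reads $e^{-j2\pi(n-n')\Delta f\,l/W}$, which with $k=n-n'$ gives the ratio $e^{-2\pi(\beta+(k/M)j)}$ rather than $e^{-2\pi(\beta-(k/M)j)}$; this is purely a matter of whether $k$ indexes rows minus columns or the reverse (equivalently a conjugation), and once fixed your $\nu_k$ matches the paper's $\kappa(\bm{\theta}_\mathbf{q},k/M)$ and the Hermitian--Toeplitz form $\text{toep}(\bm{\nu}_\mathbf{q},\bm{\nu}_\mathbf{q}^H)$.
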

\begin{proof}
See Appendix \ref{Appen:lemma:Dist-of-CFR}.
\end{proof}
Next, the contribution of measurement noise $\mathbf{n}$ is modelled with a zero-mean complex multivariate Gaussian random variable as $\mathbf{n}\sim\mathcal{CN}(\mathbf{0},\sigma^2\mathbf{I})$ where $\sigma^2$ denotes the variance of the noise. This can be incorporated in the above lemma by defining $\bm{\nu}_{\mathbf{q},\mathbf{n}}\overset{\bigtriangleup}{=}\bm{\nu}_{\mathbf{q}}+[\sigma^2,0,\dots,0]$ and $\mathbf{R}_{\mathbf{q},\mathbf{n}}=toep(\bm{\nu}_{\mathbf{q},\mathbf{n}},\bm{\nu}_{\mathbf{q},\mathbf{n}}^H)$. We then follow the Kronecker model to obtain the covariance matrix of the CFR, which holds when the diffuse spectrum contribution in the angular domains is independent from that in the frequency domain \cite{ImpactofIncomplete,Rimax}. Under the Kronecker model, the covariance matrix of the CFR can be decomposed as $\mathbf{R}=\mathbf{I}_{N_{Rx}}\otimes\mathbf{I}_{N_{Tx}}\otimes\mathbf{R}_{\mathbf{q},\mathbf{n}}$. Therefore, the distribution of the CFR in (\ref{eq:full-channel}) within the above model can be given as $\mathbf{h}_u\sim \mathcal{CN}(\overline{\mathbf{h}},\mathbf{R})$. We denote the parameters associated with the covariance matrix by $\bm{\theta}_{vn}=[\alpha,\beta,L,\sigma]$, which relates to the variable part of the CFR and noise. As mentioned earlier, the mean $\overline{\mathbf{h}}$ solely depends on the specular paths parameters $\bm{\theta}_{sp}$.
\subsection{Channel spoofing detection problem}
\label{section:LRT-Spoofing}
Channel-based spoofing detection is generally studied \cite{CFR1,CFR2} in the ``snapshot” scenario where Bob receives a new message claiming to be sent
by Alice, and he has to check whether the claim is true. To this end, we assume that Bob is able to measure and store a noisy version of the CFR corresponding to a transmitting terminal.
Based on the CFRs associated with the incoming messages, the goal in this scenario is to determine whether a message at time $t=(u+1)T$ belongs to Alice or Eve given a reference message from Alice\footnote{In the remaining of this section, we use $A$ or $E$ in the superscript of vector or scalar to indicate that it belongs to Alice or Eve, respectively.}, $\mathbf{h}^A_k$, at time $t=uT$. In this setup, we use the terms message and CFR interchangeably. One can pose the spoofing detection as a binary classification problem for which two hypotheses can be made:
\begin{align}
\label{eq:Spoofin-hypothesisTesting}
    &\mathcal{H}_0: \mathbf{h}_{u+1}=\mathbf{h}^A_{u+1},\\
    &\mathcal{H}_1: \mathbf{h}_{u+1}=\mathbf{h}^E_{u+1}.
\end{align}
 Under the null hypothesis, $\mathcal{H}_0$, the message at time $t=(u+1)T$ belongs to Alice, while under the
alternative hypothesis $\mathcal{H}_1$ a spoofing attack has occurred, i.e., the
message belongs to Eve. 

For the data acquisition phase, we consider a setting where Bob spots a finite number of snapshots from a coherence time and stores the observed CFRs. Furthermore, in order to label the incoming CFRs, we use a heuristic method given by
\begin{align}
\label{eq:imperfectlabeling-spoofing}
    \lVert \mathbf{h}_{u+1}-\mathbf{h}^A_{u}\lVert^2\underset{\mathcal{H}_0}{\overset{\mathcal{H}_1}{\gtreqless}} \eta,
\end{align}
in the lieu of the channel parameters $\bm{\theta}_{vn}$ and $\bm{\theta}_{sp}$. This method can be viewed as an imperfect labeling mechanism which decides in favor of $\mathcal{H}_0$ if the Euclidean distance between an incoming CFR and the reference CFR is smaller than a predefined threshold $\eta$.

From a statistical perspective, likelihood ratio test is the main approach for deciding between the two hypothesis, which relies on the knowledge of unknown channel parameters as obtained in the following. The likelihood ratio test for the snapshot scenario at time $t=(u+1)T$ is defined as
\begin{align}
\label{eq:LRT_Diff}
    \mathds{L}\big(\mathbf{h}_{u+1}|\mathbf{h}^A_u\big)\overset{\bigtriangleup}{=}\frac{p(\mathbf{h}_{u+1}-\mathbf{h}^A_u|\mathcal{H}_0)}{p(\mathbf{h}_{u+1}-\mathbf{h}^A_u|\mathcal{H}_1)}\underset{\mathcal{H}_0}{\overset{\mathcal{H}_1}{\gtreqless}}\zeta,
\end{align}
for a predefined threshold $\zeta$, where the conditional probability distribution of $\mathbf{h}_{k+1}-\mathbf{h}^A_k$ serves as the likelihood function under each behavior. In the following, we obtain closed-form expressions for these likelihood functions assuming the statistical dependence on the reference CFR for each hypothesis is specified via the conditional distributions $p(\mathbf{q}_{u+1}^A|\mathbf{q}_{u}^A)$ and $p(\mathbf{q}_{u+1}^E|\mathbf{q}_{u}^A)$ under $\mathcal{H}_0$ and $\mathcal{H}_1$, respectively. Specifically, the dependence of $\mathbf{q}_{u+1}^A$ on $\mathbf{q}_{u}^A$ is characterized through the corresponding channel gains in terms of an order-$1$ auto-regressive model (AR-$1$) \cite{CFR1}, i.e.,
\begin{align}
\label{eq:AR1}
    A^A_{u+1,l}=a^AA^A_{u,l}+\sqrt{(1-({a^A})^2)\text{Var}(A^A_{u+1,l})}w_{u+1,l}
\end{align}
where $a^A$ denotes the \textit{similarity parameter}, and $w_{u+1,l}\sim \mathcal{CN}(0,1)$ is independent of $A_{u,l}$. Similarly, the $l$th path gain corresponding to $\mathbf{q}_{u+1}^E$ and $\mathbf{q}_{u}^A$ are related with and AR-$1$ model with similarity parameter $a^E$.

\begin{lemma}
\label{lemma:Dist_Null_Hypothesis}
Under the null hypothesis, $p(\mathbf{q}_{u+1}-\mathbf{q}^A_u|\mathcal{H}_0)=\mathcal{CN}(\mathbf{0},\mathbf{R}_{q,\mathcal{H}_0})$ for
\begin{flalign}
    &\mathbf{R}_{q,\mathcal{H}_0}=\text{toep}(\bm{\nu}_{\mathcal{H}_0},\bm{\nu}_{\mathcal{H}_0}^H),\\  &\bm{\nu}_{\mathcal{H}_0}\overset{\bigtriangleup}{=}\Big[2(1-a^A)\kappa(\bm{\theta}_{\mathbf{q}}^A,0),2(1-a^A)\kappa\big(\bm{\theta}_{\mathbf{q}}^A,\frac{1}{M}\big),\dots\nonumber\\&\ \ \ \ \ \ \ \  \ \ \ \ \ \ \  \ \ \ \ \  \ \ \ \ \ \ \ \ \ \  ,2(1-a^a)\kappa\big(\bm{\theta}_{\mathbf{q}}^A,\frac{M-1}{M}\big)\Big],
\end{flalign}
where $\bm{\theta}_{\mathbf{q}}^A\overset{\bigtriangleup}{=}[\alpha^A,\beta^A,L^A]$ and the $\kappa$ function defined in Lemma \ref{lemma:Dist-of-CFR}.
\end{lemma}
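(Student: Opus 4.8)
The plan is to reduce the statement to the per-path gain increments and then re-use, almost verbatim, the covariance computation already carried out in the proof of Lemma~\ref{lemma:Dist-of-CFR}. Set $\Delta_l\overset{\bigtriangleup}{=}A^A_{u+1,l}-A^A_{u,l}$ for $l=0,\dots,L-1$. Applying the representation \eqref{eq:variablepart-channel} to the two Alice snapshots (under $\mathcal{H}_0$ the message at time $(u+1)T$ is Alice's) and subtracting,
\[
(\mathbf{q}_{u+1}-\mathbf{q}^A_u)[n]=\sum_{l=0}^{L-1}\Delta_l\,e^{-j2\pi(f_0-W/2+n\Delta f)l/W},\qquad n=1,\dots,N_f .
\]
Hence the difference vector is the image of $[\Delta_0,\dots,\Delta_{L-1}]^T$ under exactly the deterministic Fourier-type matrix that maps the path gains to $\mathbf{q}_u$ in Lemma~\ref{lemma:Dist-of-CFR}, so once the law of $[\Delta_0,\dots,\Delta_{L-1}]^T$ is pinned down, its covariance follows from the same algebra as there.

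First I would determine that law. Subtracting $A^A_{u,l}$ from both sides of the AR-$1$ recursion \eqref{eq:AR1} gives
\[
\Delta_l=(a^A-1)A^A_{u,l}+\sqrt{(1-(a^A)^2)\,\text{Var}(A^A_{u+1,l})}\;w_{u+1,l}.
\]
Since $A^A_{u,l}\sim\mathcal{CN}(0,\text{Var}(A_{u,l}))$, $w_{u+1,l}\sim\mathcal{CN}(0,1)$ is independent of $A^A_{u,l}$, and — by the WSSUS assumption together with joint Gaussianity of the model — the $L$ paths are mutually independent, each $\Delta_l$ is a zero-mean circularly-symmetric complex Gaussian and $\{\Delta_l\}_{l=0}^{L-1}$ is an independent family. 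Using wide-sense stationarity, $\text{Var}(A^A_{u+1,l})=\text{Var}(A^A_{u,l})$, and the vanishing cross term $\mathbb{E}[A^A_{u,l}\,w^{*}_{u+1,l}]=0$,
\[
\text{Var}(\Delta_l)=\big[(a^A-1)^2+(1-(a^A)^2)\big]\text{Var}(A^A_{u,l})=2(1-a^A)\,\text{Var}(A^A_{u,l}).
\]

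Finally, because $\mathbf{q}_{u+1}-\mathbf{q}^A_u$ is a fixed linear image of the jointly proper complex Gaussian vector $[\Delta_0,\dots,\Delta_{L-1}]^T$, it is itself zero-mean circularly-symmetric complex Gaussian; its covariance entries equal $\sum_{l=0}^{L-1}\text{Var}(\Delta_l)\,e^{-j2\pi\Delta f(m-n)l/W}$, which depend on $(m,n)$ only through $m-n$, so the covariance matrix is Toeplitz. Substituting the exponential power-delay profile $\text{Var}(A^A_{u,l})\propto e^{-2\pi\beta^A l}$ and collapsing the resulting geometric series reproduces $2(1-a^A)\kappa(\bm{\theta}^A_{\mathbf{q}},\cdot)$ exactly as in the proof of Lemma~\ref{lemma:Dist-of-CFR}, i.e.\ $\bm{\nu}_{\mathcal{H}_0}=2(1-a^A)\bm{\nu}_{\mathbf{q}}$ with $\bm{\theta}^A_{\mathbf{q}}=[\alpha^A,\beta^A,L^A]$, which is the claimed $\mathbf{R}_{q,\mathcal{H}_0}=\text{toep}(\bm{\nu}_{\mathcal{H}_0},\bm{\nu}_{\mathcal{H}_0}^H)$. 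The main (and essentially only) obstacle is the middle step: one must carefully justify, from the modelling assumptions, that $A^A_{u,l}$ and $w_{u+1,l}$ are uncorrelated (so the cross term drops) and that $\mathbb{E}[\Delta_l^2]=0$ (properness of the complex Gaussian), and then invoke the elementary identity $(a^A-1)^2+1-(a^A)^2=2(1-a^A)$; everything else is bookkeeping that mirrors the proof of Lemma~\ref{lemma:Dist-of-CFR}.
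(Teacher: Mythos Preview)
Your argument is correct, but it proceeds by a different decomposition than the paper's. The paper works entirely at the CFR level: it first computes the cross-correlation $\mathbb{E}\big[A^A_{u+1,l}(A^A_{u,l})^{*}\big]=a^A\,\text{Var}(A^A_{u,l})$ from the AR-$1$ recursion, lifts this to $\mathbb{E}\big[\mathbf{q}^A_{u+1}[m](\mathbf{q}^A_{u}[n])^{*}\big]=a^A\kappa(\bm{\theta}^A_{\mathbf{q}},(n-m)/M)$, and then expands $\text{Cov}\big[\mathbf{q}^A_{u+1}[m]-\mathbf{q}^A_{u}[m],\,\mathbf{q}^A_{u+1}[n]-\mathbf{q}^A_{u}[n]\big]$ into four terms to obtain the factor $2(1-a^A)$. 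You instead push the subtraction down to the path level, form the increments $\Delta_l=A^A_{u+1,l}-A^A_{u,l}$, read off $\text{Var}(\Delta_l)=2(1-a^A)\,\text{Var}(A^A_{u,l})$ from the scalar identity $(a^A-1)^2+1-(a^A)^2=2(1-a^A)$, and then observe that $\mathbf{q}_{u+1}-\mathbf{q}^A_u$ is the \emph{same} Fourier-type linear map applied to $\{\Delta_l\}$ that Lemma~\ref{lemma:Dist-of-CFR} applies to $\{A_{u,l}\}$, so the Toeplitz covariance follows by direct substitution. Your route is a bit more economical --- one scalar variance computation plus a wholesale reuse of Lemma~\ref{lemma:Dist-of-CFR} --- whereas the paper's route makes the cross-snapshot correlation $a^A\kappa(\cdot)$ explicit, which is convenient because that same quantity is reused in the proof of Lemma~\ref{lemma:Dist-Alternate-Hypothesis}.
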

\begin{proof}
See Appendix \ref{Appen:lemma:Dist_Null_Hypothesis}.
\end{proof}
\begin{lemma}
\label{lemma:Dist-Alternate-Hypothesis}
Under the alternative hypothesis, $p(\mathbf{q}_{u+1}-\mathbf{q}^A_u|\mathcal{H}_1)=\mathcal{CN}(\mathbf{0},\mathbf{R}_{\mathbf{q},\mathcal{H}_1})$ for
\begin{align}
    &\mathbf{R}_{\mathbf{q},\mathcal{H}_1}=\text{toep}(\bm{\nu}_{\mathcal{H}_1},\bm{\nu}_{\mathcal{H}_1}^H),\\  &\bm{\nu}_{\mathcal{H}_1}\overset{\bigtriangleup}{=}[\kappa^{'}(a^E,\theta_{\mathbf{q}}^A,\theta_{\mathbf{q}}^E,0),\kappa^{'}(a^E,\theta_{\mathbf{q}}^A,\theta_{\mathbf{q}}^E,\frac{1}{M}),\dots,\nonumber\\&\ \ \ \ \ \ \ \ \ \ \ \ \ \ \ \ \ \ \ \ \ \ \ \  \ \ \ \ \ \ \ \ \ \ \ \kappa^{'}(a^E,\theta_{\mathbf{q}}^A,\theta_{\mathbf{q}}^E,\frac{M-1}{M})],\\
    &\kappa^{'}(a^E,\theta_{\mathbf{q}}^A,\theta_{\mathbf{q}}^E,m)\overset{\bigtriangleup}{=}\kappa(\theta_{\mathbf{q}}^E,m)-2a^E\kappa(\theta_{\mathbf{q}}^A,m)+\kappa(\theta_{\mathbf{q}}^A,m),
\end{align}
 where $\theta_{\mathbf{q}}^A\overset{\bigtriangleup}{=}[\alpha^A,\beta^A,L^A]$, $\theta_{\mathbf{q}}^E\overset{\bigtriangleup}{=}[\alpha^E,\beta^E,L^E]$, and the $\kappa$ function is defined in Lemma \ref{lemma:Dist-of-CFR}.
\end{lemma}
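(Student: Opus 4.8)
The plan is to reuse the mechanism behind Lemma~\ref{lemma:Dist_Null_Hypothesis}, the only new ingredient being a non-vanishing cross-covariance between Eve's and Alice's diffuse components. From the AR-$1$ link across the two hypotheses, $A^E_{u+1,l}=a^E A^A_{u,l}+\sqrt{(1-(a^E)^2)\,\text{Var}(A^E_{u+1,l})}\,w_{u+1,l}$ with $w_{u+1,l}\sim\mathcal{CN}(0,1)$ independent of $A^A_{u,l}$, the vector $\mathbf{q}^E_{u+1}-\mathbf{q}^A_u$ is a fixed linear image of the collection $\{A^A_{u,l}\}_l\cup\{w_{u+1,l}\}_l$, every member of which is zero-mean circularly-symmetric complex Gaussian (the tap gains through the exponential-decay model underlying Lemma~\ref{lemma:Dist-of-CFR}, the innovations by construction). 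Hence $\mathbf{q}^E_{u+1}-\mathbf{q}^A_u$ is itself zero-mean complex Gaussian with vanishing pseudo-covariance, so $p(\mathbf{q}_{u+1}-\mathbf{q}^A_u|\mathcal{H}_1)=\mathcal{CN}(\mathbf{0},\mathbf{R}_{\mathbf{q},\mathcal{H}_1})$ and it only remains to identify $\mathbf{R}_{\mathbf{q},\mathcal{H}_1}$.

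I would then expand
\[
\mathbf{R}_{\mathbf{q},\mathcal{H}_1}=\mathbb{E}\big[\mathbf{q}^E_{u+1}(\mathbf{q}^E_{u+1})^H\big]+\mathbb{E}\big[\mathbf{q}^A_{u}(\mathbf{q}^A_{u})^H\big]-\mathbf{C}-\mathbf{C}^H,\qquad\mathbf{C}\overset{\bigtriangleup}{=}\mathbb{E}\big[\mathbf{q}^E_{u+1}(\mathbf{q}^A_{u})^H\big].
\]
The first two terms are supplied directly by Lemma~\ref{lemma:Dist-of-CFR} applied to Eve's and to Alice's diffuse components: Hermitian Toeplitz matrices with $(n,n')$-entries $\kappa(\bm{\theta}^E_{\mathbf{q}},m)$ and $\kappa(\bm{\theta}^A_{\mathbf{q}},m)$, where $m=(n-n')/M$. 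For the cross term I would substitute the AR-$1$ relation into $\mathbf{q}^E_{u+1}[n]=\sum_l A^E_{u+1,l}e^{-j2\pi(f_0-W/2+n\Delta f)l/W}$: the innovation part is independent of every $A^A_{u,l'}$ and so never appears in $\mathbf{C}$, and the uncorrelated-scattering property $\mathbb{E}[A^A_{u,l}\overline{A^A_{u,l'}}]=0$ ($l\neq l'$) collapses the resulting double tap-sum to a single one, which is precisely $a^E$ times the tap-power sum that produced $\kappa(\bm{\theta}^A_{\mathbf{q}},\cdot)$ in Lemma~\ref{lemma:Dist-of-CFR}. Thus $\mathbf{C}=a^E\,\text{toep}(\bm{\nu}_{\mathbf{q}},\bm{\nu}_{\mathbf{q}}^H)$ with $\bm{\nu}_{\mathbf{q}}$ built from $\bm{\theta}^A_{\mathbf{q}}$, which is Hermitian Toeplitz so $\mathbf{C}^H=\mathbf{C}$. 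Collecting terms gives $[\mathbf{R}_{\mathbf{q},\mathcal{H}_1}]_{n,n'}=\kappa(\bm{\theta}^E_{\mathbf{q}},m)-2a^E\kappa(\bm{\theta}^A_{\mathbf{q}},m)+\kappa(\bm{\theta}^A_{\mathbf{q}},m)=\kappa'(a^E,\bm{\theta}^A_{\mathbf{q}},\bm{\theta}^E_{\mathbf{q}},m)$, and since a linear combination of Hermitian Toeplitz matrices is again Hermitian Toeplitz, reading off the first row yields $\mathbf{R}_{\mathbf{q},\mathcal{H}_1}=\text{toep}(\bm{\nu}_{\mathcal{H}_1},\bm{\nu}_{\mathcal{H}_1}^H)$ as claimed.

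The step I expect to require the most care is the cross-covariance $\mathbf{C}$: one must verify that only the $a^E A^A_{u,l}$ component of $A^E_{u+1,l}$ contributes to it --- the innovation being statistically independent of all of Alice's tap gains --- and, when Eve and Alice are allowed different tap counts or power/decay parameters, that taps present only in Eve's channel add nothing to $\mathbf{C}$ (their innovations are independent of Alice and the matching Alice tap is zero), which is exactly why $\bm{\theta}^E_{\mathbf{q}}$ enters $\kappa'$ only through its first summand while $\bm{\theta}^A_{\mathbf{q}}$ enters through the other two. Everything else (Gaussianity, circular symmetry, the Toeplitz/Hermitian bookkeeping) is routine and mirrors the proofs of Lemmas~\ref{lemma:Dist-of-CFR} and~\ref{lemma:Dist_Null_Hypothesis}, with the specialization $a^E=a^A$, $\bm{\theta}^E_{\mathbf{q}}=\bm{\theta}^A_{\mathbf{q}}$ recovering the latter.
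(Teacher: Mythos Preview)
Your proposal is correct and follows essentially the same approach as the paper's proof: establish zero-mean complex Gaussianity from the linear/AR-$1$ structure, pull the two auto-covariance blocks from Lemma~\ref{lemma:Dist-of-CFR}, compute the cross term via $\mathbb{E}[A^E_{u+1,l}\overline{A^A_{u,l}}]=a^E\text{Var}(A^A_{u,l})$ (innovation drops by independence, off-diagonal taps drop by WSSUS), and read off the Toeplitz structure. The paper carries this out entrywise (diagonal then off-diagonal), whereas you phrase it at the matrix level and use the Hermitian-Toeplitz property of $\mathbf{C}$ to collapse $\mathbf{C}+\mathbf{C}^H$ to $2\mathbf{C}$ in one stroke---a clean shortcut, but not a different argument.
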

\begin{proof}
See Appendix \ref{Appen:lemma:Dist-Alternate-Hypothesis}.
\end{proof}
The above two lemmas enables us to obtain the likelihood functions for (\ref{eq:LRT_Diff}). Regarding the null hypothesis $\mathcal{H}_0$, using the Kronecker model for the covariance matrix \cite{ImpactofIncomplete} and the fact that the measurement noise is independent from the other CFR's components, we obtain the covariance matrix of $\mathbf{h}_{u+1}-\mathbf{h}^A_{u}$ as $\mathbf{R}_{\mathcal{H}_0}=\mathbf{I}_{N_{Rx}}\otimes\mathbf{I}_{N_{Tx}}\otimes\mathbf{R}_{\mathbf{q},\mathcal{H}_0}+2(\sigma^A)^2\mathbf{I}_M$ where $\mathbf{R}_{\mathbf{q},\mathcal{H}_0}$ is given in Lemma \ref{lemma:Dist_Null_Hypothesis}. Furthermore, as a snapshot is captured in one coherence time, the specular paths contribution of the CFRs remains the same ($\overline{\mathbf{h}}^A$) between two consecutive times in this case. Therefore, under $\mathcal{H}_0$ the likelihood function is $\mathcal{CN}(\mathbf{0},\mathbf{R}_{\mathcal{H}_0})$. Similarly, for the alternate hypothesis $\mathcal{H}_1$, the likelihood function can be obtained as $\mathcal{CN}(\overline{\mathbf{h}}^E-\overline{\mathbf{h}}^A,\mathbf{R}_{\mathcal{H}_1})$ where $\mathbf{R}_{\mathcal{H}_1}=\mathbf{I}_{N_{Rx}}\otimes\mathbf{I}_{N_{Tx}}\otimes\mathbf{R}_{\mathbf{q},\mathcal{H}_1}+(\sigma^A)^2\mathbf{I}_M+(\sigma^E)^2\mathbf{I}_M$ and $\mathbf{R}_{\mathbf{q},\mathcal{H}_1}$ is given in Lemma \ref{lemma:Dist-Alternate-Hypothesis}. 

\subsection{Parameter estimation}
\label{Section:Spoofing-paramEst}
In order to utilize the likelihood ratio test in (\ref{eq:LRT_Diff}), Bob requires the knowledge of the channel parameters $\bm{\theta}_{sp}$, $\bm{\theta}_{vn}$ corresponding to Alice-Bob and Eve-Bob channels along with the similarity parameters. In practice, these parameters should be estimated based on the data collected from the observed snapshots. Recall from Lemma \ref{lemma:Dist-of-CFR} that a CFR associated with a terminal has a Gaussian distribution of the from $\mathcal{CN}\big(\overline{\mathbf{h}}_{\bm{\theta}_{sp}},\mathbf{R}_{\bm{\theta}_{vn}}\big)$. The MLE estimates of the parameters are obtained via
\small
\begin{subequations}
\begin{align}
    &\hat{\bm{\theta}}_{sp}, \hat{\bm{\theta}}_{vn}=\argmax_{\bm{\theta}_{sp},\bm{\theta}_{vn}} \mathcal{L}\big(\mathbf{h}|\bm{\theta}_{sp},\mathbf{R}_{\bm{\theta}_{vn}}\big),\label{eq:Loglikelihood-CFR0}\\
    &\mathcal{L}(\mathbf{h}|\bm{\theta}_{sp},\mathbf{R}_{\bm{\theta}_{vn}}\big)=\\&-M\ln\pi-\ln{\det{\mathbf{R}_{\bm{\theta}_{vn}}}}-\big(\mathbf{h}-\overline{\mathbf{h}}_{\bm{\theta}_{sp}}\big)^H\mathbf{R}^{-1}_{\bm{\theta}_{vn}}\big(\mathbf{h}-\overline{\mathbf{h}}_{\bm{\theta}_{sp}}\big),\label{eq:Loglikelihood-CFR}
\end{align}
\end{subequations}
\normalsize
which amounts to jointly maximizing the arguments of some nonlinear objective function. Besides, it can be proved that (\ref{eq:Loglikelihood-CFR}) is not a convex function of $\bm{\theta}_{sp}$, and as a result there is no unique solution set for the optimization problem in (\ref{eq:Loglikelihood-CFR0}). In practice, solving such problem is far from trivial, especially since the number of nonlinear parameters ($\bm{\theta}_{sp}$) is large and a multidimensional exhaustive search is not feasible. As a workaround, the authors in \cite{ImpactofIncomplete,Rimax} propose a suboptimal procedure to break the problem into two sub-problems and estimate $\bm{\theta}_{sp}$ and $\bm{\theta}_{vn}$ in a separate manner. In our problem, there is also similarity parameters $a^A$ and $a^E$ which similar to $\bm{\theta}_{vn}$ appear in the covariance matrix of a Gaussian distribution, as obtained in Lemmas \ref{lemma:Dist_Null_Hypothesis} and \ref{lemma:Dist-Alternate-Hypothesis}. Therefore, based on the approach taken in \cite{ImpactofIncomplete,Rimax}, we break the parameter estimation problem into three sub-problems. Each sub-problem involves numerically maximizing the objective function of the form (\ref{eq:Loglikelihood-CFR}) w.r.t. $\bm{\theta}_{sp}$ or $\bm{\theta}_{vn}$ or similarity parameters via an iterative local optimization technique, such as Gauss-Newton algorithm. In particular, the maximization processes are done sequentially and in an alternating manner between the three sets of parameters towards convergence. In the following, we elaborate on each sub-problem for the specific channel model we described earlier.
\subsubsection{Estimating the specular path parameters \texorpdfstring{$\bm{\theta}_{sp}$}{}}
\label{Section:Estimating the specular path parameters}
The main goal here is to obtain an estimate of $\bm{\theta}_{sp}$ which maximizes (\ref{eq:Loglikelihood-CFR}) for a given estimate of $\bm{\theta}_{vn}$. In the following, we use the $N$-exponential basis function defined as
\begin{align}
    \mathbf{U}_N^\mathbf{v}=\begin{bmatrix}e^{-j\big(-\frac{N-1}{2}\big)\mathbf{v}[1]}&\dots&e^{-j\big(-\frac{N-1}{2}\big)\mathbf{v}[n]}\\
    \vdots&\ddots&\vdots\\
    e^{-j\big(\frac{N-1}{2}\big)\mathbf{v}[1]}&\dots&e^{-j\big(\frac{N-1}{2}\big)\mathbf{v}[n]}
    \end{bmatrix},
\end{align}
for a vector $\mathbf{v}$ of length $N$. Partial derivative of $ \mathbf{U}_N^\mathbf{v}$ w.r.t $\mathbf{v}$ is readily computed as
$
     \mathbf{D}_N^\mathbf{v}=\frac{\partial  \mathbf{U}_N^\mathbf{v}}{\partial \mathbf{v}}=-j\Xi_N  \mathbf{U}_N^\mathbf{v}
$
where $\Xi_N=\text{diag}([-(N-1)/2,\dots,(N-1)/2])$. Furthermore, we recall that for arbitrary matrices $
\mathbf{A}\in\mathbb{C}^{N\times P},\ \mathbf{B}\in\mathbb{C}^{M\times P},\  \mathbf{Q}^{P\times P}=\text{diag}(\mathbf{q})$ and a vector $\mathbf{q}\in \mathbb{C}^{P\times 1}$, one can write
$
    \text{vec}\{\mathbf{B}\mathbf{Q}\mathbf{A}^T\}=(\mathbf{A}\odot\mathbf{B})\mathbf{q}
$. Utilizing this result along with the exponential basis function we can rewrite the specular path contribution introduced in (\ref{eq:fixedpart-channel}) for the CFR model as
\begin{align}
\overline{\mathbf{h}}=\big(\mathbf{U}_{N_{Rx}}^{\bm{\psi}_{T}}\odot\mathbf{U}_{N_{Tx}}^{\bm{\psi}_{R}}\odot\mathbf{U}_{N_f}^{\bm{\tau}}\big)\bm{\rho},
\end{align}
which greatly simplifies the calculation of the first and second derivatives of $\overline{\mathbf{h}}$ w.r.t $\bm{\theta}_{sp}$. Specifically, the Jacobian matrix for the above model is obtained via
   $ \mathbf{J}(\bm{\theta}_{sp})=\mathbf{J}_{{\bm{\psi}_T}}\odot\mathbf{J}_{\bm{\psi}_R}\odot\mathbf{J}_\mathbf{\rho}\odot\mathbf{J}_{\bm{\tau}}$
where the Jacobian matrix's components are given by
\begin{subequations}
\begin{align}
    \mathbf{J}_{{\bm{\psi}_T}}&=\begin{bmatrix}\mathbf{D}_{N_{Tx}}^{\bm{\psi}_T}&\mathbf{U}_{N_{Tx}}^{\bm{\psi}_T}&\mathbf{U}_{N_{Tx}}^{\bm{\psi}_T}&\mathbf{U}_{N_{Tx}}^{\bm{\psi}_T}&\mathbf{U}_{N_{Tx}}^{\bm{\psi}_T}\end{bmatrix},\\
    \mathbf{J}_{\bm{\psi}_R}&=\begin{bmatrix}\mathbf{U}_{N_{Rx}}^{\bm{\psi}_R}&\mathbf{D}_{N_{Rx}}^{\bm{\psi}_R}&\mathbf{U}_{N_{Rx}}^{\bm{\psi}_R}&\mathbf{U}_{N_{Rx}}^{\bm{\psi}_R}&\mathbf{U}_{N_{Rx}}^{\bm{\psi}_R}\end{bmatrix},\\
    \mathbf{J}_{\bm{\tau}}&=\begin{bmatrix}\mathbf{U}_{N_f}^{\bm{\tau}}&\mathbf{U}_{N_f}^{\bm{\tau}}&\mathbf{D}_{N_f}^{\bm{\tau}}&\mathbf{U}_{N_f}^{\bm{\tau}}&\mathbf{U}_{N_f}^{\bm{\tau}}\end{bmatrix},\\
    \mathbf{J}_{\bm{\rho}}&=\begin{bmatrix}{\bm{\rho}}^T&{\bm{\rho}}^T&{\bm{\rho}}^T&\mathbf{1}^T&\mathbf{1}^Tj\end{bmatrix}.
\end{align}
\end{subequations}
\normalsize
The first order partial derivative of the log likelihood function (\ref{eq:Loglikelihood-CFR}) given an observation $\mathbf{h}$ with respect to the parameters $\bm{\theta}_{sp}$ is denoted by $\mathbf{q}_{\bm{\theta}_{sp}}\big(\mathbf{h}|\mathbf{R}_{\bm{\theta}_{vn}}\big)$. For a given Jacobian matrix $\mathbf{J}(\bm{\theta}_sp)$, one can compute \cite{Rimax}

\begin{align}
\label{eq:first-order-specular-Jacobian}
    \mathbf{q}_{\bm{\theta}_{sp}}\big(\mathbf{h}|\mathbf{R}_{\bm{\theta}_{vn}}\big)=2\Re{\big\{\mathbf{J}^H(\bm{\theta}_{sp})\mathbf{R}^{-1}_{\bm{\theta}_{vn}}(\mathbf{h}-\overline{\mathbf{h}}_{\bm{\theta}_{sp}}\big\}}.
\end{align}
Furthermore, the negative covariance matrix of the above first order derivative, i.e., 
\begin{align}
   -\mathbb{E}\bigg[\frac{\partial \mathcal{L}(\mathbf{h}|\bm{\theta}_{sp},\mathbf{R}_{\bm{\theta}_{vn}}\big)}{\partial \bm{\theta}_{sp}}\Big(\frac{\partial \mathcal{L}(\mathbf{h}|\bm{\theta}_{sp},\mathbf{R}_{\bm{\theta}_{vn}}\big)}{\partial \bm{\theta}_{sp}}\Big)^T\bigg],
\end{align}
is called the Fisher information matrix (FIM), $\mathbf{F}\big({\bm{\theta}_{sp}}|\mathbf{R}_{\bm{\theta}_{vn}}\big)$, which can be expressed in terms of the Jacobian matrix $\mathbf{J}(\bm{\theta}_{sp})$ as \cite{Rimax}
\begin{align}
\label{eq:FIM-Jacobian}
     \mathbf{F}\big({\bm{\theta}_{sp}}|\mathbf{R}_{\bm{\theta}_{vn}}\big)=2\Re\{{\mathbf{J}^H(\bm{\theta}_{sp})\mathbf{R}^{-1}_{\bm{\theta}_{vn}}\mathbf{J}(\bm{\theta}_{sp})}\}.
\end{align}

Having obtained (\ref{eq:first-order-specular-Jacobian}) and (\ref{eq:FIM-Jacobian}), a local optimization technique can be utilized to obtain an iterative rule for estimation of $\bm{\theta}_{sp}$. To this end, we employ the Gauss-Newton algorithm as \begin{align}
\label{eq:GaussNewton-fixedparams}
    \hat{\bm{\theta}}^{i+1}_{sp}=\hat{\bm{\theta}}^{i}_{sp}+\zeta\ \mathbf{F}^{-1}\big(\hat{\bm{\theta}}^{i}_{sp}|\mathbf{R}_{\bm{\theta}_{vn}}\big)\mathbf{q}_{\hat{\bm{\theta}}^{i}_{sp}}\big(\mathbf{h}|\mathbf{R}_{\bm{\theta}_{vn}}\big)
\end{align} 
for a step length $\zeta$ which should be chosen such that $\mathcal{L}(\mathbf{h}|\bm{\theta}^{i+1}_{sp},\mathbf{R}_{\bm{\theta}_{vn}}\big)>\mathcal{L}(\mathbf{h}|\bm{\theta}^{i}_{sp},\mathbf{R}_{\bm{\theta}_{vn}}\big)$.

\subsubsection{Estimating the variable part and noise parameters \texorpdfstring{$\bm{\theta}_{vn}$}{}}
\label{section:Estimating the variable part and noise parameters}For a given estimate of $\bm{\theta}_{sp}$, the goal here is to estimate ${\bm{\theta}_{vn}}$. To this end, we assume $N$ number of CFRs denoted with $\{\mathbf{h}_i\}_{i=1}^N$ are available. For each CFR, the approach presented in \ref{Section:Estimating the specular path parameters} is utilized to estimate the corresponding specular paths parameters as $\{\hat{\bm{\theta}}_{sp,i}\}_{i=1}^N$. In order to remove the contribution of the specular paths from the CFRs, we from an $M\times N$ matrix $\mathbf{H}$ whose $i$th column amounts to $\mathbf{h}_i-\overline{\mathbf{h}}(\overline{\bm{\theta}}_{sp})$ where $\overline{\bm{\theta}}_{sp}$ denotes the average value of $\hat{\bm{\theta}}_{sp,i}$'s. In the following, we propose an estimation procedure for $\bm{\theta}_{vn}$ using $\mathbf{H}$. We first note that all the parameters in $\bm{\theta}_{vn}$ are continuous except for the number of diffuse spectrum paths $L$ which is an integer value. As a result, the objective function is not continuous in $L$ and the gradient of (\ref{eq:Loglikelihood-CFR}) does not exist w.r.t. $L$. In this vein, we take a sub-optimal approach towards estimating $L$ by separating it from the rest of the parameters in $\bm{\theta}_{vn}$. The authors in \cite{CFR-separation} have proposed an eigenvalue ratio method for estimating the number of harmonics present in the signals from $N$ available observations. For our specific case, we denote the MLE of the covariance of $\mathbf{H}$ by $\mathbf{C}_{\mathbf{H}}$, and the corresponding eigenvalues by $\mathbf{e}_i,\ i=1,\dots,M$. The propose heuristic approach is to choose $\hat{L}$ such that
$
    \frac{\sum_{i=1}^{\hat{L}}\mathbf{e}_i}{\sum_{i=1}^{M}\mathbf{e}_i}\geq \eta,
$
for a predefined value of$\eta$. 
   \begin{figure*}[b]
   \small
    \begin{subequations}
   \begin{align}
   & \frac{\partial \bm{\nu}_{\mathbf{q},\mathbf{n}}}{\partial\sigma^2}=\begin{bmatrix}1,0,\dots,0\end{bmatrix},\label{eq:longeq1}\\
   & \frac{\partial \bm{\nu}_{\mathbf{q},\mathbf{n}}}{\partial\alpha^2}=\begin{bmatrix}1-e^{-2\pi L\beta},\frac{(1-e^{-2\pi\beta})\big(1-f^L(\frac{1}{M})\big)}{1-f(\frac{1}{M})},\dots,\frac{(1-e^{-2\pi\beta})\big(1-f^L(1-\frac{1}{M})\big)}{1-f(1-\frac{1}{M})}\end{bmatrix},\\
   &\frac{\partial \bm{\nu}_{\mathbf{q},\mathbf{n}}}{\partial\beta}=\Big[\begin{matrix}2\pi\alpha^2 Le^{-2\pi\beta L},\frac{2\pi e^{-2\pi\beta}\big(f^L(\frac{1}{M})-1\big)}{f(\frac{1}{M})-1}+\frac{2L\pi f^L(\frac{1}{M})(e^{-2\pi\beta}-1)}{f(\frac{1}{M})-1}-\frac{2\pi f(\frac{1}{M})\big(f^L(\frac{1}{M})-1\big)(e^{-2\pi \beta}-1)}{\big(f(\frac{1}{M})-1\big)^2},\end{matrix}\nonumber\\
   &\ \ \ \ \ \ \ \ \ \ \ \ \ \ \ \begin{matrix}\dots,\frac{2\pi e^{-2\pi\beta}\big(f^L(1-\frac{1}{M})-1\big)}{f(1-\frac{1}{M})-1}+\frac{2L\pi f^L(1-\frac{1}{M})(e^{-2\pi\beta}-1)}{f(1-\frac{1}{M})-1}-\frac{2\pi f(1-\frac{1}{M})\big(f^L(1-\frac{1}{M})-1\big)(e^{-2\pi \beta}-1)}{\big(f(1-\frac{1}{M})-1\big)^2}\end{matrix}\Big].\label{eq:longeq3}
\end{align}
 \end{subequations}
\end{figure*}

We plug the estimated value of $L$ in the parameter vector to obtain $\bm{\theta}_{vn}=[\sigma,\alpha,\beta,\hat{L}]$. Then, the log-likelihood function for the zero-mean CFRs can be written as
\small
\begin{align}
    \mathcal{L}(\mathbf{H}|{\bm{\theta}_{vn}})=-MN\ln\pi-N\ln{\det{\mathbf{R}_{\bm{\theta}_{vn}}}}-\Tr\big(\mathbf{H}^H\mathbf{R}^{-1}_{\bm{\theta}_{vn}}\mathbf{H}\big).\label{eq:Loglikelihood-CFR2}
\end{align}
\normalsize
 The first-order gradient of $ \mathcal{L}(\mathbf{H}|{\bm{\theta}_{vn}})$ w.r.t. to each parameter can be computed as \cite{Rimax}
\small
\begin{align}
\label{eq:First-Order-Gradiernt-variablepart}
     \frac{\partial  \mathcal{L}(\mathbf{H}|{\bm{\theta}_{vn}})}{\partial{\bm{\theta}_{vn}}[i]}=N\Tr\Big(\mathbf{R}^{-1}_{\bm{\theta}_{vn}}\frac{\partial \mathbf{R}_{\bm{\theta}_{vn}}}{\partial{\bm{\theta}_{vn}}[i]}\mathbf{R}^{-1}_{\bm{\theta}_{vn}}(\widehat{\mathbf{R}}-\mathbf{R}_{\bm{\theta}_{vn}})\Big).
\end{align}
\normalsize
for $i=1,2,3$. Subsequently, the $(i,j)$th element of the FIM corresponding to $\mathcal{L}(\mathbf{H}|{\bm{\theta}_{vn}})$ equals \cite{Rimax}
\begin{align}
\label{eq:FIM-variablepart}
     -\mathbb{E}\Big[\frac{\partial^2  \mathcal{L}(\mathbf{H}|{\bm{\theta}_{vn}})}{\partial{\bm{\theta}_{vn}}[i]\partial{\bm{\theta}_{vn}}(j)}\Big]=N\Tr\Big(\mathbf{R}^{-1}_{\bm{\theta}_{vn}}\frac{\partial \mathbf{R}_{\bm{\theta}_{vn}}}{\partial{\bm{\theta}_{vn}}[i]}\mathbf{R}^{-1}_{\bm{\theta}_{vn}}\frac{\partial \mathbf{R}_{\bm{\theta}_{vn}}}{\partial{\bm{\theta}_{vn}}(j)}\Big).
\end{align}
To obtain explicit expressions for (\ref{eq:First-Order-Gradiernt-variablepart}) and (\ref{eq:FIM-variablepart}), one needs to compute partial derivatives terms, i.e., $\frac{\partial \mathbf{R}_{\bm{\theta}_{vn}}}{\partial{\bm{\theta}_{vn}}[i]}$. Considering the Toeplitz structure of the covariance model described in Lemma \ref{lemma:Dist-of-CFR}, we can write
\begin{align}
   & \frac{\partial \mathbf{R}_{\mathbf{q},\mathbf{n}}(\bm{\theta}_{vn})}{\partial{\bm{\theta}_{vn}}[i]}=toep\Big(\frac{\partial \bm{\nu}_{\mathbf{q},\mathbf{n}}}{\partial{\bm{\theta}_{vn}}[i]},\frac{\partial \bm{\nu}^H_{\mathbf{q},\mathbf{n}}}{\partial{\bm{\theta}_{vn}}[i]}\Big),\\
   &\frac{\partial \mathbf{R}_{\bm{\theta}_{vn}}}{\partial{\bm{\theta}_{vn}}[i]}=\mathbf{I}_{N_{Rx}}\otimes\mathbf{I}_{N_{Tx}}\otimes\frac{\partial \mathbf{R}_{\mathbf{q},\mathbf{n}}(\bm{\theta}_{vn})}{\partial{\bm{\theta}_{vn}}[i]},
   \end{align}
where the partial derivative for each parameter is obtained in (\ref{eq:longeq1})-(\ref{eq:longeq3}) for $f(m)=e^{-2\pi(\beta-jm)}$. Plugging this in (\ref{eq:First-Order-Gradiernt-variablepart}) and (\ref{eq:FIM-variablepart}) leads to computation of first-order gradient and the FIM of the likelihood function. Then, an iterative approach like the Gauss-Newton algorithm  can be employed for estimating ${\bm{\theta}_{vn}}$ in a similar fashion to the case of $\bm{\theta}_{sp}$ in (\ref{eq:GaussNewton-fixedparams}).

\subsubsection{Estimating the similarity parameters $a^A$ and $a^E$} In order to estimate the parameters $a^A$ and $a^E$, one can use the likelihood functions obtained in Lemmas \ref{lemma:Dist_Null_Hypothesis} and \ref{lemma:Dist-Alternate-Hypothesis}, respectively, while using the estimates obtained in Sections \ref{Section:Estimating the specular path parameters} and \ref{section:Estimating the variable part and noise parameters} for the values of all the other parameters. Specifically, as these parameters appear in the covariance matrix of a Gaussian distribution, a similar estimation procedure to that of $\bm{\theta}_{sp}$ can be employed here as well. In fact, the expressions for the first-order gradient and the FIM of the likelihood function in this case is the same as those in (\ref{eq:First-Order-Gradiernt-variablepart}) and (\ref{eq:FIM-variablepart}), respectively, except for the fact that there is only one parameter to estimate in this case. For example, for estimation of $a^A$, by considering the Toeplitz structure of the covariance model described in Lemma \ref{lemma:Dist_Null_Hypothesis}, we can write
\begin{align}
   & \frac{\partial \mathbf{R}_{\mathbf{q},\mathcal{H}_0}(a^A)}{\partial a^A}=toep\Big(\frac{\partial \bm{\nu}_{\mathcal{H}_0}}{\partial a^A},\frac{\partial \bm{\nu}^H_{\mathcal{H}_0}}{\partial a^A}\Big),
   \\&\frac{\partial \mathbf{R}_{\mathcal{H}_0}(a^A)}{\partial a^A}=\mathbf{I}_{N_{Rx}}\otimes\mathbf{I}_{N_{Tx}}\otimes\frac{\partial \mathbf{R}_{\mathbf{q},\mathcal{H}_0}(a^A)}{\partial a^A},
   \end{align}
   where the partial derivative w.r.t. $a^A$ can be obtained as
   \begin{align}
          \frac{\partial \bm{\nu}_{\mathcal{H}_0}}{\partial a^A}=-2\Big[& \frac{(\alpha^A)^2(1-e^{-2\pi\beta^A})(1-f^{L^A}(0))}{1-f(0)},\\\nonumber& \frac{(\alpha^A)^2(1-e^{-2\pi\beta^A})(1-f^{L^A}(\frac{1}{M}))}{1-f(\frac{1}{M})},\dots,\\\nonumber& \frac{(\alpha^A)^2(1-e^{-2\pi\beta^A})(1-f^{L^A}(1-\frac{1}{M}))}{1-f(1-\frac{1}{M})}\Big],
   \end{align}
   \normalsize
for $f(m)=e^{-2\pi(\beta-jm)}$. Subsequently, using the first-order gradient and the FIM of the likelihood function, we utilize the Gauss-Newton algorithm to estimate $a^A$. Similar approach can be taken for estimating $a^E$ using the covariance model described in Lemma \ref{lemma:Dist-Alternate-Hypothesis} which we omit here for brevity.

\subsection{\textsf{\textsc{HyPhyLearn}} for channel spoofing detection}
\label{section:HyPhyLearnforspoofing}
We propose to utilize \textsf{\textsc{HyPhyLearn}} described in Algorithm \ref{Alg:mainALG} for solving the spoofing detection problem which can be seen as a binary instance of the problem formulation described in Section \ref{section:Problem statement} with two behaviors, as described in (\ref{eq:LRT_Diff}). Besides, statistical parametric models are available for each behavior, the high complexity of which makes one to resort to suboptimal parameter estimation procedure. As mentioned in Section \ref{section:LRT-Spoofing} the data corresponding to the Alice and Eve are collected in the snapshot setting, and subsequently (imperfectly) labeled according to (\ref{eq:imperfectlabeling-spoofing}). Then, using these collected CFRs, the underlying parameters of each likelihood function in (\ref{eq:LRT_Diff}) is estimated using the sub-optimal parameter estimation procedure described in Section \ref{Section:Spoofing-paramEst}. Next, the estimated parameters are plugged in the available parametric models $\mathcal{CN}(\mathbf{0},\mathbf{R}_{\mathcal{H}_0})$ and $\mathcal{CN}(\overline{\mathbf{h}}^E-\overline{\mathbf{h}}^A,\mathbf{R}_{\mathcal{H}_1})$, which subsequently are used to generate synthetic CFRs. Finally, the collected and synthetic CFRs are incorporated in the Step $4$ of Algorithm \ref{Alg:mainALG} to train the learning-based classifier which can be utilized as a spoofing detector.

\section{Case study 2: Multi-user detection}
\label{section:Case study 2: Multi-user detection}
An important problem in multipoint-to-point digital communication
networks (e.g., radio networks, local-area networks, and uplink satellite channels)
is the optimum centralized demodulation of the information sent simultaneously
by several users through a Gaussian multiple-access channel. Even though the
users may not employ a protocol to coordinate their transmission epochs, effective
sharing of the channel is possible because each user modulates a different
signature signal waveform that is known by the intended receiver (Code
Division Multiple Access (CDMA)). In this section, we consider the uplink of a cellular communication system where $K$ users are asynchronously sharing a channel to communicate with a base station (BS). The problem of multi-user detection in this setting amounts to inferring the information associated with each user from this multiple access channel.
\subsection{Multi-user detection problem}
\label{section: Multi-user detection problem}
Consider the uplink of an asynchronous direct-sequence (DS) CDMA system shared by $K$ users,
employing long spreading codes, bandlimited chip pulses
and operating over a frequency-selective fading channel. The
baseband equivalent of the received signal may be written as
\begin{align}
    r(t)=\sum_{p=0}^{P-1}\sum_{k=0}^{K-1}A_kb_k(p)s'_{k,p}(t-\tau_k-pT_b)^*c_k(t)+w(t),
\end{align}
where $P$ is the number of transmitted packets and $s'_{k,p}(t)$ denotes the $k$th user signature waveform. Furthermore, $T_b$ is the bit-interval duration, $A_k$ and $\tau_k$ denote the respective
complex amplitude and timing offset of $k$th user, and $b_k(p)$
is the $k$th user’s information bit in the $p$th
signaling interval, whereas $w(t)$ is the complex envelope of the additive noise term, which is
assumed to be a zero-mean, wide-sense stationary complex white Gaussian process. Moreover, $c_k(t)$ is
the impulse response modeling the channel effects between
the BS and the $k$th user. Note that the channel
impulse responses $c_k(t)$ are assumed to be
time-invariant over each transmitted frame \cite{Buzzi} under the assumption that the channel coherence
time exceeds the packet duration $BT_b$. Regarding the $k$th user signature waveform, we have
\begin{align}
\label{eq:singature}
    s'_{k,p}(t-\tau_k-pT_b)=\sum_{n=0}^{N-1}\beta_{k,p}^{(n)}h_{SRRC}(t-nT_c),
\end{align}
where $\{\beta^{(n)}_{k,p}\}_{n=0}^{N-1}$ is the pseudo-noise (PN) code employed by
user $k$ for spreading its data bit on the $p$th symbol interval, $N$ is
the processing gain, and $T_c=T_b/N$ is the chip interval. Furthermore, $h_{SRRC}(t)$ denotes the square root
raised-cosine waveform as the bandlimited chip pulse which following \cite{Buzzi} is time-limited to $[0, 4Tc]$.

In the BS, chip-matched filtering and chip-rate sampling is done in order to convert the received signal to discrete time domain. To this end, $r(t)$ is convolved with chip-matched filter $h_{SRRC}(4T_c-t)$ followed by a sampler at a rate $2/T_c$ (Nyquist rate). The convolution operation results in
\begin{align}
    y(t)&=r(t)*h_{SRRC}(4T_c-t)\nonumber\\&=\sum_{p=0}^{P-1}\sum_{k=0}^{K-1}b_k(p)h_{k,p}(t-pT_b,\tau_k)+n(t),
\end{align}
where $h_{k,p}(t,\tau_k)=A_ks_{k,p}(t-\tau_k)^*c_k(t)$ is called the effective signature waveform for $s_{k,p}(t)=\sum_{n=0}^{N-1}\beta_{k,p}^{(n)}h_{RC}(t-nT_c)$, and $h_{RC}(t)$ represents a raised cosine chip waveform time-limited
to $[0, 8Tc)$. As $h_{k,p}(t-pT_b,\tau_k)$ has a
time domain support of $[pT_b, (p+2)T_b+7T_c]$, during the $p$th symbol interval
$\mathcal{I}_p=[pT_b, (p + 1)T_b]$, the contribution from
at most three bits for each user, i.e., the $p$th, the $p -1$th
and the $p-2$th ones, is observed assuming that
$\tau_k+T_m<T_b$ where $T_m$ stands for the maximum
delay spread. Therefore, sampling
the waveform $y(t)$ at rate $M/T_c$, the MN-dimensional vector
$y(p)$ collecting the data samples of the interval $\mathcal{I}_p$ can be
expressed as
\begin{align}
    &\mathbf{y}(p)=\sum_{k=0}^{K-1}[b_k(p-2)\mathbf{h}_{k,p-2}(p)+b_k(p-1)\mathbf{h}_{k,p-1}(p)\nonumber\\&\ \ \ \ \ \ \ \ \ \ \ \ \ \ \ \ \ \ \ \ \ \ \ \ \ \ \ \ \ \ \ \ \ \ \ \ +b_k(p)\mathbf{h}_{k,p}(p)]+\mathbf{n}(p)
\end{align}
where $\mathbf{h}_{k,p-i}(p)$ and $\mathbf{n}(p)$ comprise the $MN$ samples
of $h_{k,p-i}(t-(p-i)T_b,\tau_k)$, $i\in\{0,1,2\}$ and $n(t)$, respectively, during $\mathcal{I}_p$. We set $M=2$ in the following discussion. A compact representation of $\mathbf{y}(p)$ can be obtained by relying on the notion of effective chip pulse defined as $g_k(t,\tau_k)=A_kh_{RC}(t-\tau_k)^*c_k(t)$ which is supported on the interval $[0, T_b +8T_c]$. Noting that $h_{k,p}(t,\tau_k)=\sum_{i=0}^{N-1}\beta_{k,p}^{n}g_k(t-nT_c,\tau_k)$, and defining $\mathbf{g}_k\in\mathbb{C}^{MN+8M-1\times1}$ as
\begin{align}
    &\mathbf{g}_k=\big[g_k(T_c/M,\tau_k),g_k(2T_c/M,\tau_k),\dots,\nonumber\\&\ \ \ \ \ \ \ \ \ \ \ \ \ \ \ \ \ \ \ \ \ \ \ \ \ \ g_k(T_b+(8M-1)T_c/M,\tau_k)\big]^T,
\end{align}
one can write $\mathbf{h}_{k,p-i}(p)=\mathbf{C}_{k,p-i}(p)\mathbf{g}_k$, where $\mathbf{C}_{k,p-i}(p)$ is a $MN \times(MN + 8M -1)$ dimensional matrix as a function of
$\beta_{k,p}^n$, and obtained in details in ($9$)-($11$) of \cite{Buzzi}. Then, we have
\begin{align}
\label{eq:DS-CDMA-final-representation}
    \mathbf{y}(p)=\sum_{k=0}^{K-1}\mathbf{A}_k(p)\mathbf{g}_k+\mathbf{n}(p)=\mathbf{A}(p)\mathbf{g}+\mathbf{n}(p),
\end{align}
for $\mathbf{A}_k(p)=b_k(p-2)\mathbf{C}_{k,p-2}(p)+b_k(p-1)\mathbf{C}_{k,p-1}(p)+b_k(p)\mathbf{C}_{k,p}(p)$, $\mathbf{A}(p)=[\mathbf{A}_0(p),\dots,\mathbf{A}_{K-1}(p)]$, and $\mathbf{g}=[\mathbf{g}_0^T,\dots,\mathbf{g}_{K-1}^T]^T$. 

The multiuser detection problem can be cast as $2^K$-ary classification problem where the goal is to find the vector of information bits $\mathbf{b}=[b_0(p),\dots,b_{K-1}(p)]$ given a observation vector $\mathbf{y}'(p)$. Assuming all the vectors $\mathbf{b}\in\{0,1\}^K$ are a priori equiprobable the minimum distance rule gives the maximum a posteriori decision \cite{poorCDMA}. Mathematically, the multiuser detection is equivalent to solving the minimization problem $\argmin_{\mathbf{b}\in\{0,1\}^K}\ \mathbf{y}'(p)-\sum_{k=0}^{K-1}\mathbf{A}_k(p)\mathbf{g}_k$. However, the complexity of such detector is exponential in the number of users \cite{poorCDMA} and in practice sub-optimal methods like minimum mean square error (MMSE) detector \cite{poorCDMA} is utilized for performance evaluation. We note that the multiuser detection methods relies on the channel parameters and the spreading codes corresponding to each user, and we assume true knowledge of both are not available at the BS. Specifically, we consider a case where a mismatch exists between the true spreading codes \cite{CDMAmismatch}, utilized by the users, and the corresponding ones available at the BS. Besides, we assume that BS has access to $N_T$ number of training data from the $k$th user. The channel parameters are then estimated based on the available training data using the procedure described in the following section.

\subsection{Parameter estimation}
\label{section:CDMA-parameterEstimation}
The performance of multi-user detection relies heavily on the estimation of the channel parameters. We assume the channel impulse response (CIR), $c_k(t)$, takes the form of a time-invariant
multipath channel with $L$ paths, i.e., $c_k(t)=\sum_{l=0}^{L-1}\alpha_{k,l}\delta(t-\tau'_{k,l})$, which is parameterized by the complex path gains $\alpha_{k,l}$ and the corresponding path delays $\tau'_{k,l}$. The joint ML estimate of these parameters requires an exhaustive search over the continuous $K$-dimensional space $[0,T_b)^K$ which is computationally
prohibitive. It is shown in \cite{Buzzi0} that even using the conventional grid search based
scheme to find a near-ML solution NP-hard. As a workaround, alternative sub-optimal estimation methods of low-complexity are proposed
for practical settings. Notably, the authors in \cite{Buzzi} propose a two-step approach which first
estimates the the channel impulse response (CIR) using
the Least Squares (LS) criterion, and then extracts the underlying channel parameters. In particular, given the knowledge of the spreading codes and information bits for all the users in the training dataset, the overall CIR $\mathbf{g}$ may be directly estimated by
invoking the LS estimation procedure
\begin{align}
    \hat{\mathbf{g}}&=\argmin_{\mathbf{x}}\sum_{i=1}^{N_T-1}||\mathbf{y}(p)-\mathbf{A}(p)\mathbf{x}||^2\\&=\bigg[\sum_{p=0}^{N_T-1}\mathbf{A}^H(p)\mathbf{A}(p)\bigg]^{-1}\bigg[\sum_{p=0}^{N_T-1}\mathbf{A}^H(p)\mathbf{y}(p)\bigg].
\end{align}
Relying on $\hat{\mathbf{g}}$ the authors in \cite{Buzzi} propose an ad-hoc algorithm to estimate the channel parameters. Specifically, the explicit parameters to be estimated include delays $\tau_{k,l}=\tau'_{k,l}+\tau_k$, amplitudes $a_{k,l}=A_k|\alpha_{k,l}|$ and the phases $\phi_{k,l}=\arg(a_{k,l})$ for $k=0,\dots,K-1$ and $l=0,\dots,L-1$. We provide an overview of the above ad-hoc parameter estimation procedure in Appendix \ref{Appen:A heuristic approach for channel parameter estimation for CDMA system} for completeness.
\subsection{\textsf{\textsc{HyPhyLearn}} for multiuser detection}
\label{section:CDMA-proposedSolution}
We utilize \textsf{\textsc{HyPhyLearn}} to solve the problem of multiuser detection described in Section \ref{section: Multi-user detection problem} as a $2^K$-ary classification problem. In particular, due to the available statistical parametric models for each class on one hand, and lack of an estimation procedure for the underlying channel parameters which is optimal in some sense on the other hand, the multiuser detection can be framed within the problem formulation setting described in Section \ref{section:Problem statement}. We use the available data corresponding to the users in the suboptimal estimation method described in Section \ref{section:CDMA-parameterEstimation} to obtain the estimates of the channel parameters for $K$ users $\hat{{\bm{\tau}}}=[\hat\tau_{0,0},\dots,\hat\tau_{0,L-1},\dots,\hat\tau_{K-1,L-1}]$, $\hat{\mathbf{a}}=[\hat a_{0,0},\dots,\hat a_{0,L-1},\dots,\hat a_{K-1,K-1}]$ and $\hat{\mathbf{\Phi}}=[\hat\phi_{0,0},\dots,\hat\phi_{0,L-1},\dots,\hat\phi_{K-1,L-1}]$. Using these estimates along with the imperfect knowledge of spreading codes for the training data, one can utilize the parametric model (\ref{eq:DS-CDMA-final-representation}) to generate a synthetic data example corresponding to sample information bits $\mathbf{b}$. This synthetic data sample is subsequently added to the synthetic dataset along with its corresponding label $\mathbf{b}$. Then, the synthetic dataset is incorporated with the available training dataset according to the Step $4$ of the Algorithm \ref{Alg:mainALG} to find the learning-based classifier. Specifically, this classifier has $2^K$ output neurons, each corresponding to a specific information bits vector, which enables it to to serve as a data detection method for the $K$-user system.

\section{Numerical Results}
\label{section:Nuemrial Results}
In this section, we numerically evaluate the performance of our proposed solution, \textsf{\textsc{HyPhyLearn}}, described in Algorithm~\ref{Alg:mainALG} for the two case studies described in Sections \ref{Section:Case study 1: Spoofing detection via channel frequency response} and \ref{section:Case study 2: Multi-user detection}. {This involves comparing} the resulting performance against that of the existing statistical classifiers and other hybrid classification methods, and {highlighting} the superiority of our proposed solution for the problems under study.

\subsection{Spoofing detection problem}
In the Alice--Eve--Bob setting, we begin with a scenario where the coherence time of the Alice--Bob and {the} Eve--Bob channel are very large, and therefore the corresponding channel parameters are fixed between the training {and testing} stages. As mentioned in Section \ref{section:LRT-Spoofing}, the {training} data {in this problem} are collected by observing finite number of {snapshots by Bob.} The training CFRs from each snapshot are subsequently labeled using the heuristic test (\ref{eq:imperfectlabeling-spoofing}). The number of received antennas and transmit antennas at Alice and Bob is set to $2$. Also, following the discussion in \cite{CFR2} we assume Eve also uses the same number of antennas to impersonate Alice. The number of subcarriers is set to {$N_f=20$}, which makes the total number of samples associated with each CFR equal $M=80$. We assume the Alice--Bob parameters are $\sigma^2_A=20$, $\alpha^2_A=200$, $\beta_A=0.02$ and {$a^A=0.85$}, while $\sigma^2_E=26$, $\alpha^2_E=250$, $\beta_E=0.08$ and {$a^E=0.65$} are used for the Eve--Bob channel. Furthermore, {we set} $L_A=20$ and $L_E=16$ as the {number of} diffuse spectrum virtual paths, while the number of specular paths are set to $4$ for both channels in accordance with the experimental measurements reported in \cite{Rappaport-model}. 

Fig. \ref{Fig:Spoofing1} {illustrates} the spoofing detection performance {of different methods} for the above scenario averaged over $10^5$ CFRs from each Alice-Bob and Eve-Bob channel at the test stage, where the x-axis denotes {the number of snapshots} observed during the training stage. In particular, we have evaluated the performance of \textsf{\textsc{HyPhyLearn}} for this problem, as described in Section \ref{section:HyPhyLearnforspoofing}, and compared it with other classifiers designed based on {the} likelihood {ratio test} with plug-in estimates or existing ML algorithms. {By} looking at the resulting spoofing detection accuracy, {it can be seen} that the performance of the ML algorithms based on support vector machine (SVM) and {Gaussian mixture model (GMM)} is limited in this case due to limited {(and mislabeled)} training data. We note that the GMM is used as a classifier here by assigning labels to the clusters using the available labels corresponding to the reference CFRs. Specifically, we have used {the} radial basis function kernel \cite{MLMurphy} for the SVM and {two components for the GMM} for these simulations. Furthermore, one can see that {the} LRT method obtained in Section \ref{section:LRT-Spoofing} can improve upon the performance of these ML algorithms {by plugging the estimated parameters, as in Section \ref{Section:Spoofing-paramEst}, in the statistical parametric models}. {In these experiments, we also} use the {shrinkage method} \cite{shrink1} which improves the covariance matrix estimation for each likelihood function. {For this method,} a performance gain {can be observed for} this approach in comparison to the no shrinkage case, assuming the shrinkage parameter $\alpha$ is {clarivoyantly chosen} to maximize the spoofing detection accuracy over the test dataset. This method is labeled as `LRT ({best shrinkage})' in Fig. \ref{Fig:Spoofing1}. However, in practice the parameter $\alpha$ has to be estimated from the training data, which---as shown in the figure {with label `LRT (shrinkage)'}---could deteriorate the LRT performance as the available data includes mislabeled samples.

Furthermore, we evaluate the performance of an existing hybrid classification approach known as fine tuning \cite{meroune,TLSurvey-wirelesscomm} in DTL literature for this problem. In this method, we first generate $5\times10^{5}$ synthetic data {samples} using the {available likelihood parametric functions with plugged-in estimates}. Then, a neural network with $3$ hidden layers of $400$ neurons each is trained to classify the synthetic data for this example. The training data are used afterwards to refine the weights of this neural network. Notably, \textsf{\textsc{HyPhyLearn}} is shown to outperform {the aforementioned} existing classification methods by relying on both available and synthetic data and jointly using them in a learning-based classifier.

{For the sake of comparison, we have also considered a variation of \textsf{\textsc{HyPhyLearn}} that relies on a generative adversarial network (GAN) for generating synthetic data, i.e., it disregards the available physic-based models. We have observed that the performance of this approach is impacted in the limited data regime as GANs rely merely on the available training data for generating further synthetic data of similar distribution. In fact, for this example, we have verified that \textsf{\textsc{HyPhyLearn}} based on GAN needs to be trained on $20000$ data samples in order to achieve the same level of spoofing detection accuracy as \textsf{\textsc{HyPhyLearn}} based on physics-based models with $4000$ samples. Regrading the specifics of GAN, we have used a DNN of two hidden layers with $200$ neurons each as the generator, and a DNN with three hidden layers with $300$ neurons each as the discriminator}. {In our implementation of \textsf{\textsc{HyPhyLearn}}}, the number of generated synthetic data {samples} is set to  $4\times10^{5}$. {We have also used NNs with $3$ hidden layers of $400$ neurons each for $\mathbf{M}_{\bm{\psi}}$ and $h_{\bm{\phi}_1}$, while a NN with one hidden layer of $40$ neurons each is used for ${d}_{\mathbf{\zeta}}$. For all hidden layers, {the} ReLU activation function is used}. Furthermore, Adam optimizer \cite{MLMurphy} with a learning rate of $0.0001$ is used for training in this example. We also note that the optimal Bayes decision rule, which relies on the knowledge of the true parameters, results in the spoofing detection accuracy of $0.996$. 
   \begin{figure}
        \centering
         \includegraphics[width=8.2cm]{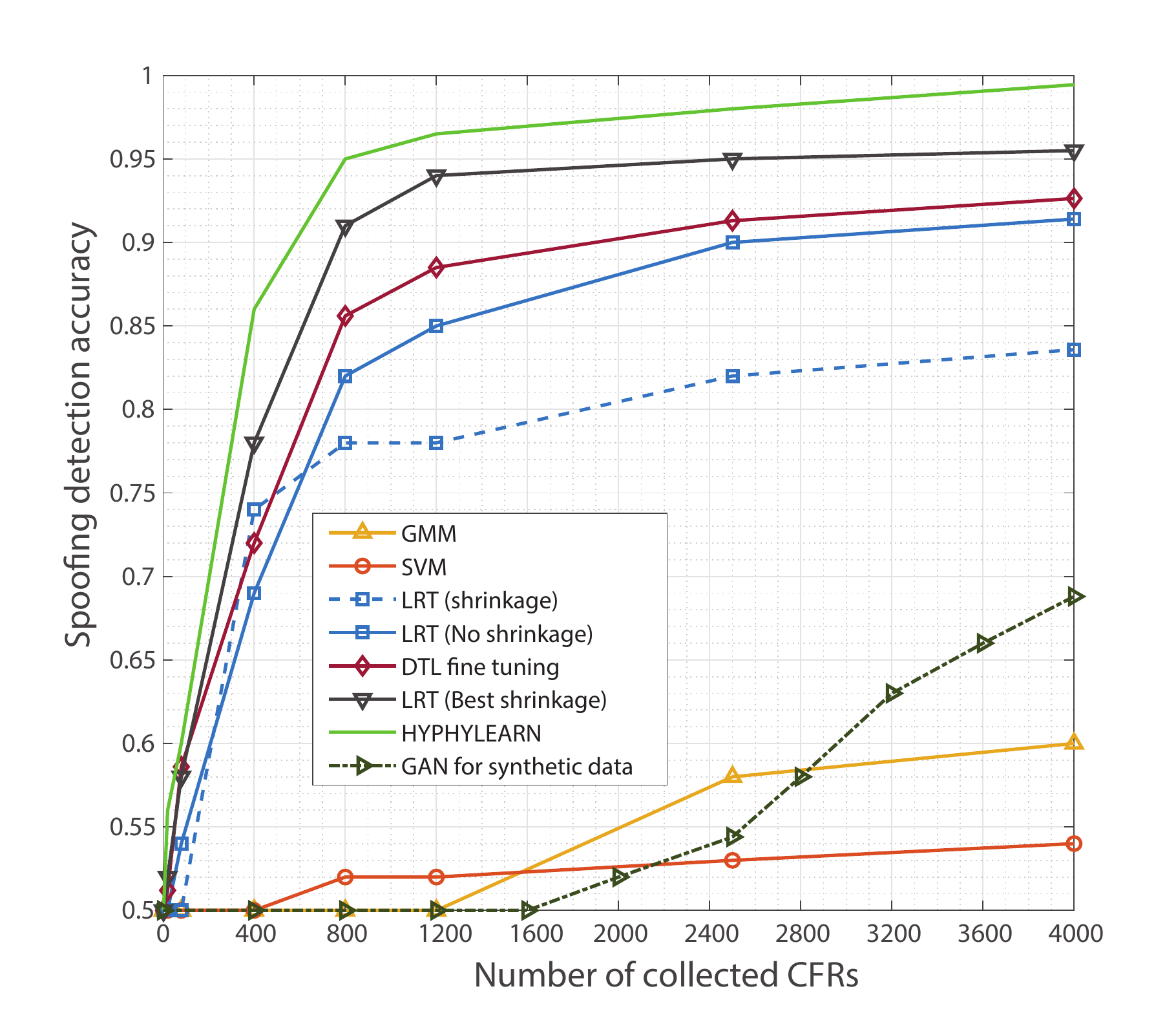} 
        \caption{Spoofing detection accuracy for different classification algorithms as a number of available training data for the case {when} training and test stage belong to the same coherence time.}
        \label{Fig:Spoofing1}
 \end{figure}
 
 Next, we consider a more realistic scenario where the channels' variations cause the training and test stage to not fall in the same coherence time. In this case, Bob uses the heuristic test (\ref{eq:imperfectlabeling-spoofing}) for some time as it does not have access to the channel parameters in this period. Afterwards, it uses the data collected in the previous coherence times to estimate the channel parameters for the current one. Fig. \ref{Fig:coherence} depicts this setting where the training stage consists of $n_c$ coherence times corresponding to {the Alice--Bob channel}. Furthermore, in contrast to Alice, Eve's transmissions are assumed to be intermittent due to the uncertainty associated with Eve's behaviour. During {each coherence time corresponding to the Alice-Bob channel}, it is assumed that Bob collects $100$ training data. Then, the estimation technique described in Section \ref{Section:Spoofing-paramEst} is utilized to estimate the channel parameters under each coherence time. Fig. \ref{Fig:Spoofing2} demonstrates the system performance as a function of number of coherence times in the training stage. Regarding the physical {setup}, we have used the same system parameters as those in Fig. \ref{Fig:Spoofing1}, and assumed that the coherence time of the Alice--Bob channel is {$4$ times that of the Eve--Bob channel} for illustrative purpose. For DTL fine-tuning approach and \textsf{\textsc{HyPhyLearn}}, the number of synthetic data generated for each behavior in a coherence time is {set to $20000$}. For these two learning-based approaches, the training specifications for are chosen to be the same as the ones used in Fig. \ref{Fig:Spoofing1}. The performance comparison again highlights the superiority of \textsf{\textsc{HyPhyLearn}} in comparison to the existing statistical and data-driven methods. 
    \begin{figure}
        \centering
         \includegraphics[width=7.6cm]{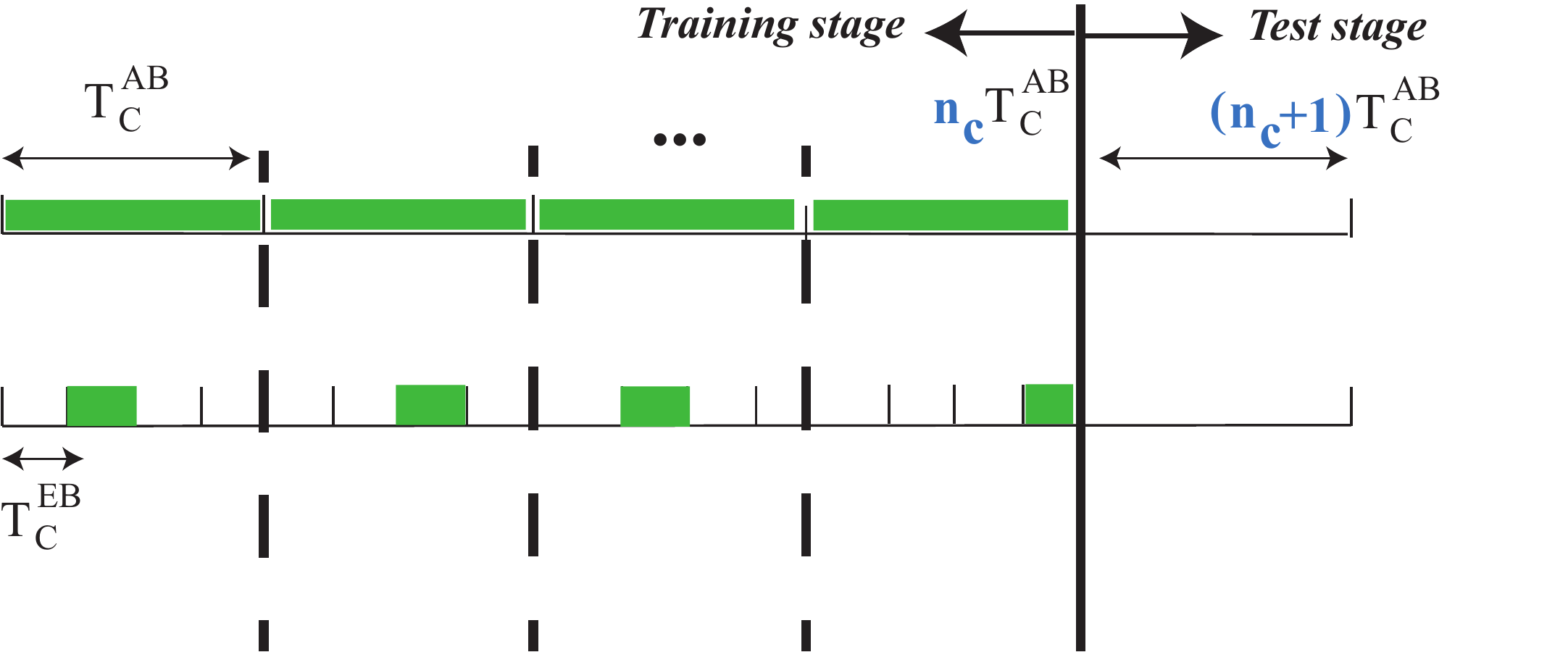} 
        \caption{Training and testing stages for the spoofing detection problem. $T_C^{AB}$ and $T_C^{EB}$ denote the coherence time corresponding to the Alice-Bob and Eve-Bob channels, respectively. The green bar indicates the time interval within which a snapshot is observed by Bob.}
        \label{Fig:coherence}
 \end{figure}
     \begin{figure}
        \centering
         \includegraphics[width=8cm]{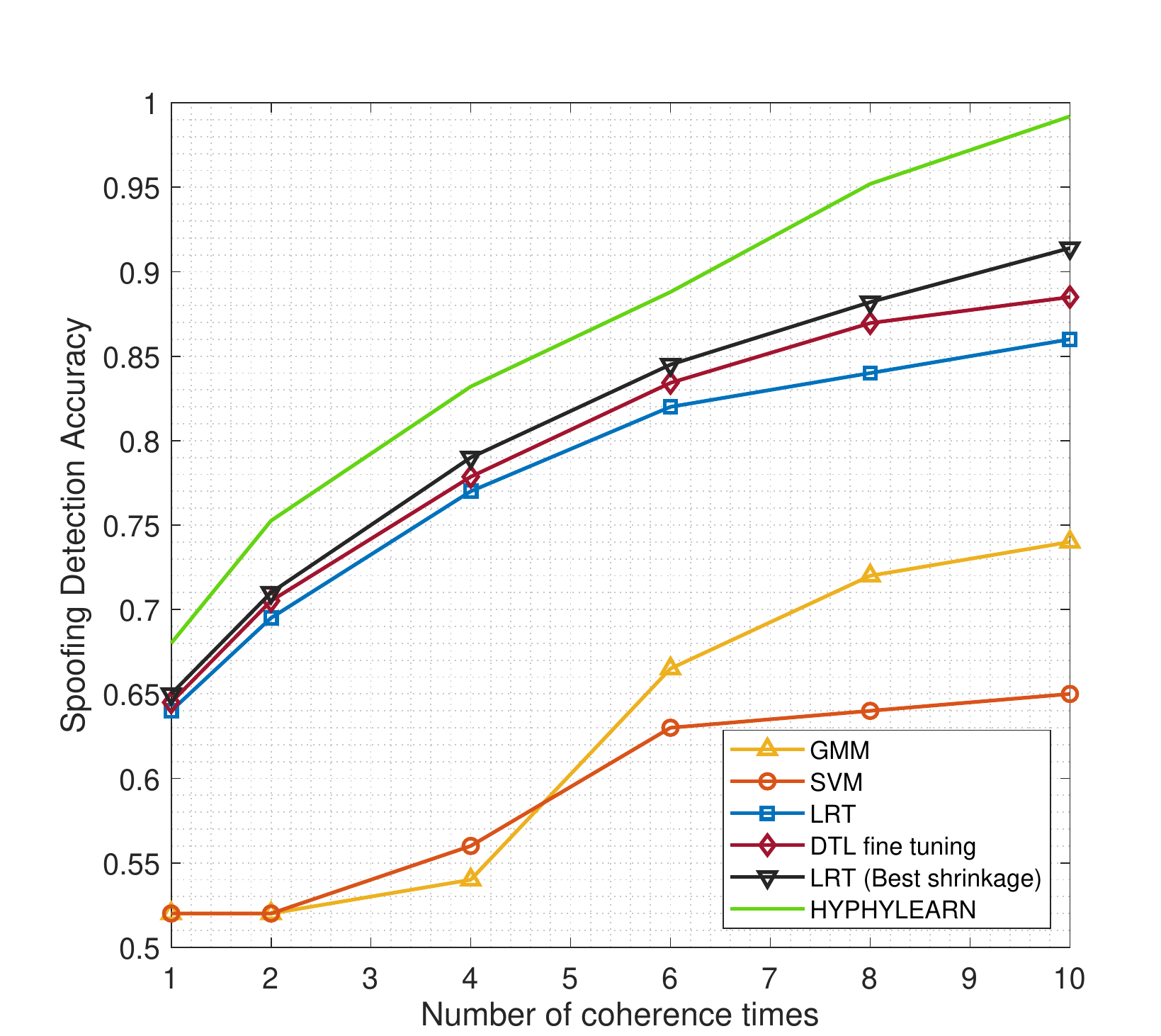} 
        \caption{Spoofing detection accuracy for the case where Bob collects training data during certain number of coherence times before employing a classification algorithm.}
        \label{Fig:Spoofing2}
 \end{figure}
\subsection{Multi-user detection problem}
In this section we present {results of} numerical simulations
to investigate the effectiveness of \textsf{\textsc{HyPhyLearn}} described in Section \ref{section:CDMA-proposedSolution} for the MUD problem. We choose the simulation parameters based on the setting described in \cite{Buzzi} and consider a system with processing gain of $N=32$ where the number of users is either $K=3$ or $K=5$. Golden codes of length $32$ are used by the BS as the pseudo-noise code in (\ref{eq:singature}) and the users' amplitudes ($A_k$'s) are set to $2$. {In addition}, a chip interval of length $T_c=0.001$ and a sampling rate of $2/T_c$ is employed. A near-far
ratio (NFR) of $10$ dB is assumed, which means the users' amplitude are randomly
unbalanced around $2$ with a variance of
$\pm5$ dB. The fading channel between the users and the BS consists of $3$ paths, which makes the total number of unknown parameters in Section \ref{section:CDMA-parameterEstimation} to be $9K$. We further consider a setting where {the BS might not have access to the perfect knowledge of the pseudo-noise sequences for all the users} at the time of detection, which would lead to a mismatched situation. To account for this phenomenon, we introduce a parameter $\rho$ {that} in order to quantify the averaged error in the {pseudo-noise sequences at the BS} while decoding.

As the performance metric, we consider {the} bit error rate (BER) at {the} BS while decoding the users' information bits, which is of major interest in digital communication systems.  As the MUD
algorithm we employ the minimum mean square error (MMSE) decoder introduced in \cite{Madhow}, which is shown to outperform other existing detection methods including matched filter receiver and box-constrained maximum likelihood detector \cite{Buzzi}. As mentioned in Section \ref{section:Case study 2: Multi-user detection}, MUD can be also solved by a classifier aiming at distinguishing between $2^K$ different classes each representing a unique decoded sequence of information bits. In this case, BER is directly related to the classification accuracy of the trained classifier. For the asynchronous system discussed in Section \ref{section:Case study 2: Multi-user detection}, 
the interval $\mathcal{I}_{2p} = [pT_b, (p+2)T_b]$ contains most of the energy content of the information
symbol $b_{k}(p)$. Therefore, it is sufficient for the MUD detector to process the data in the interval $\mathcal{I}_{2p}$ in order to
obtain estimates of the symbols $b_k(p)$, $\forall k = 0,\dots,K-1$.

 \begin{figure*}[b]
   \small
 \begin{subequations}
\begin{align}
2R_{\mathcal{Z}_{{\bm{\psi},{\bm{\theta}^*}}}}(\mathcal{H}_\Phi)&+2R_{\mathcal{Z}_{\bm{\psi},{\hat{\bm{\theta}}}}}(\mathcal{H}_\Phi)+3\sqrt{(\log{2\delta})/2N_r}+3\sqrt{(\log{2\delta})/2N_s}\geq\label{eq:lemma_A_empirical_line0}\\&
    \sup_{h_\phi\in\mathcal{H}_\Phi}\bigg|\int_{A_\phi}p_{{\bm{\psi},{\bm{\theta}^*}}}(\mathbf{z})d\mathbf{z}-\sum_{i=1}^{N_r}\mathds{1}_{\{h_\phi(\mathbf{z}_{r,i})=1\}}\bigg|+\sup_{h_\phi\in\mathcal{H}_\Phi}\bigg|\int_{A_\phi}p_{\bm{\psi},{\hat{\bm{\theta}}}}(\mathbf{z})d\mathbf{z}-\sum_{i=1}^{N_s}\mathds{1}_{\{h_\phi(\mathbf{z}_{s,i})=1\}}\bigg|\geq\\&
      \sup_{h_\phi\in\mathcal{H}_\Phi}\bigg|\int_{A_\phi}p_{{\bm{\psi},{\bm{\theta}^*}}}(\mathbf{z})d\mathbf{z}-\sum_{i=1}^{N_r}\mathds{1}_{\{h_\phi(\mathbf{z}_{r,i})=1\}}-\bigg(\int_{A_\phi}p_{\bm{\psi},{\hat{\bm{\theta}}}}(\mathbf{z})d\mathbf{z}-\sum_{i=1}^{N_s}\mathds{1}_{\{h_\phi(\mathbf{z}_{s,i})=1\}}\bigg)\bigg|\geq\label{eq:lemma_A_empirical_line2}\\&
      \sup_{h_\phi\in\mathcal{H}_\Phi}\bigg|\int_{A_\phi}p_{{\bm{\psi},{\bm{\theta}^*}}}(\mathbf{z})d\mathbf{z}-\int_{A_\phi}p_{\bm{\psi},{\hat{\bm{\theta}}}}(\mathbf{z})d\mathbf{z}\bigg|-\sup_{h_\phi\in\mathcal{H}_\Phi}\bigg|\sum_{i=1}^{N_r}\mathds{1}_{\{h_\phi(\mathbf{z}_{r,i})=1\}}-\sum_{i=1}^{N_s}\mathds{1}_{\{h_\phi(\mathbf{z}_{s,i})=1\}}\bigg|\label{eq:lemma_A_empirical_line3}=\\&
      d_{\mathcal{A}_\Phi}\big(p_{{\bm{\psi},{\bm{\theta}^*}}}(\mathbf{z}),p_{\bm{\psi},{\hat{\bm{\theta}}}}(\mathbf{z})\big)-\hat{d}_{\mathcal{A}_\Phi}(\mathcal{Z}_{{\bm{\psi},{\bm{\theta}^*}}},\mathcal{Z}_{\bm{\psi},{\hat{\bm{\theta}}}}),\label{eq:lemma_A_empirical_line4}
\end{align}
 \end{subequations}
 \end{figure*}
 
We present simulation results {for} the performance of the MMSE detector in the above setting in Fig. \ref{Fig:CDMA1}, and compare it with our proposed approach in Section \ref{section:CDMA-proposedSolution}. Specifically, {the} parameter estimation procedure {for \textsf{\textsc{HyPhyLearn}}} is done under two different levels of model mismatch, i.e., {$\rho=0.2$} and $\rho=0.25$. Furthermore, the number of {training} data available from each user $N_T$ is set to $40$. As a general observation, Fig. \ref{Fig:CDMA1} demonstrates that the performance of {all the detectors} is deteriorated as the number of users and the value of $\rho$ is increased. The perfect MMSE is referred to the case where the true pseudo-noise sequences are assumed to be known as part of the implementation of the decoder. In particular, huge performance gap between the perfect MMSE and the MMSE {decoder} indicates the high sensitivity of {the MMSE} detector to the mismatch. On the other hand, it is also highlighted that our proposed approach can achieve a substantial gain over a wide range of {SNRs} by dealing with the mismatch problem. For \textsf{\textsc{HyPhyLearn}}, the number of generated synthetic data is set to $10^{6}$ for this example. We have also used NNs with $4$ hidden layers of $300$ neurons each for $\mathbf{M}_{\bm{\psi}}$ and $h_{\bm{\phi}_1}$ here. Also, a shallow NN with one hidden layer of $40$ neurons is used for ${d}_{\mathbf{\zeta}}$, while ReLU activation function is used for all the hidden layers. During training, Adam optimizer with a learning rate of $0.0001$ is utilized as the stochastic gradient descent algorithm.
\begin{figure} 
        \centering
         \includegraphics[width=8.5cm]{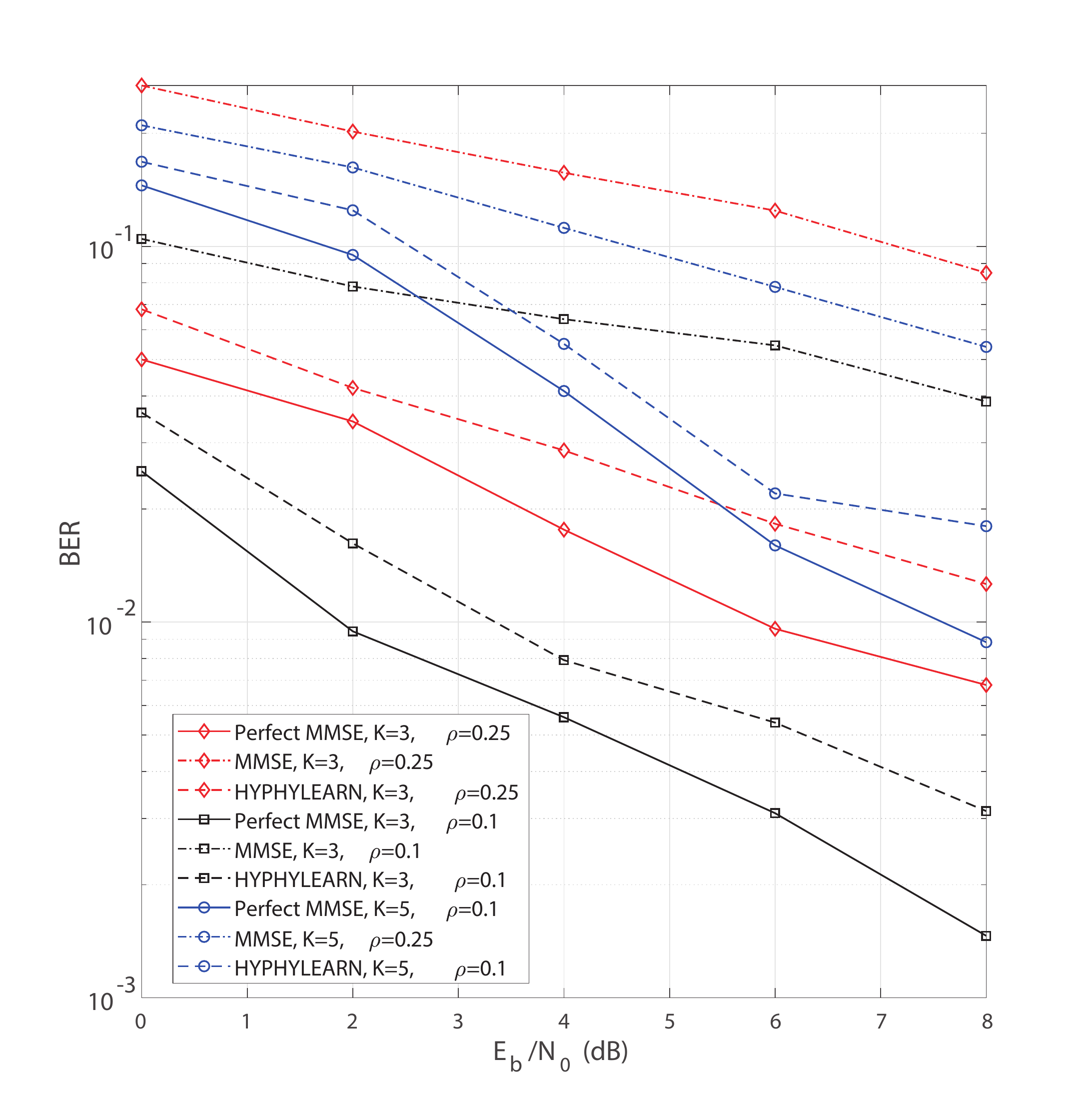} 
        \caption{BER performance of the MMSE multi-user detector and \textsf{\textsc{HyPhyLearn}} as a function of SNR. The results are provided for two different parameters, i.e., the number of users ($K$) and the mismatch parameter ($\rho$).}
        \label{Fig:CDMA1}
 \end{figure}
 \normalsize
 
In Fig. \ref{Fig:CDMA2}, the BER performance of the multi-user detectors is investigated as a function of number of available training {data}. For this example, SNR at the BS is assumed to be fixed at the BS according to $8$ dB. It is demonstrated that increasing the number of data {samples} does not lead to substantial performance improvements in the case of MMSE method. This is attributed to the aforementioned mismatch phenomenon in the pseudo-noise sequences which prevents the MMSE detector from benefiting {from} the larger amount of data considerably. Furthermore, it is further shown that the performance gap between \textsf{\textsc{HyPhyLearn}} and the perfect MMSE shrinks as {the} number of data increases. However, the degree to which this gap decreases is higher for the case of $\rho=0.1$ in comparison to that of $\rho=0.25$. Indeed, \textsf{\textsc{HyPhyLearn}} gets more benefit from the data at lower levels of mismatch where the parameter estimates enjoy higher levels of accuracy.
  \begin{figure}
        \centering
         \includegraphics[width=8.5cm]{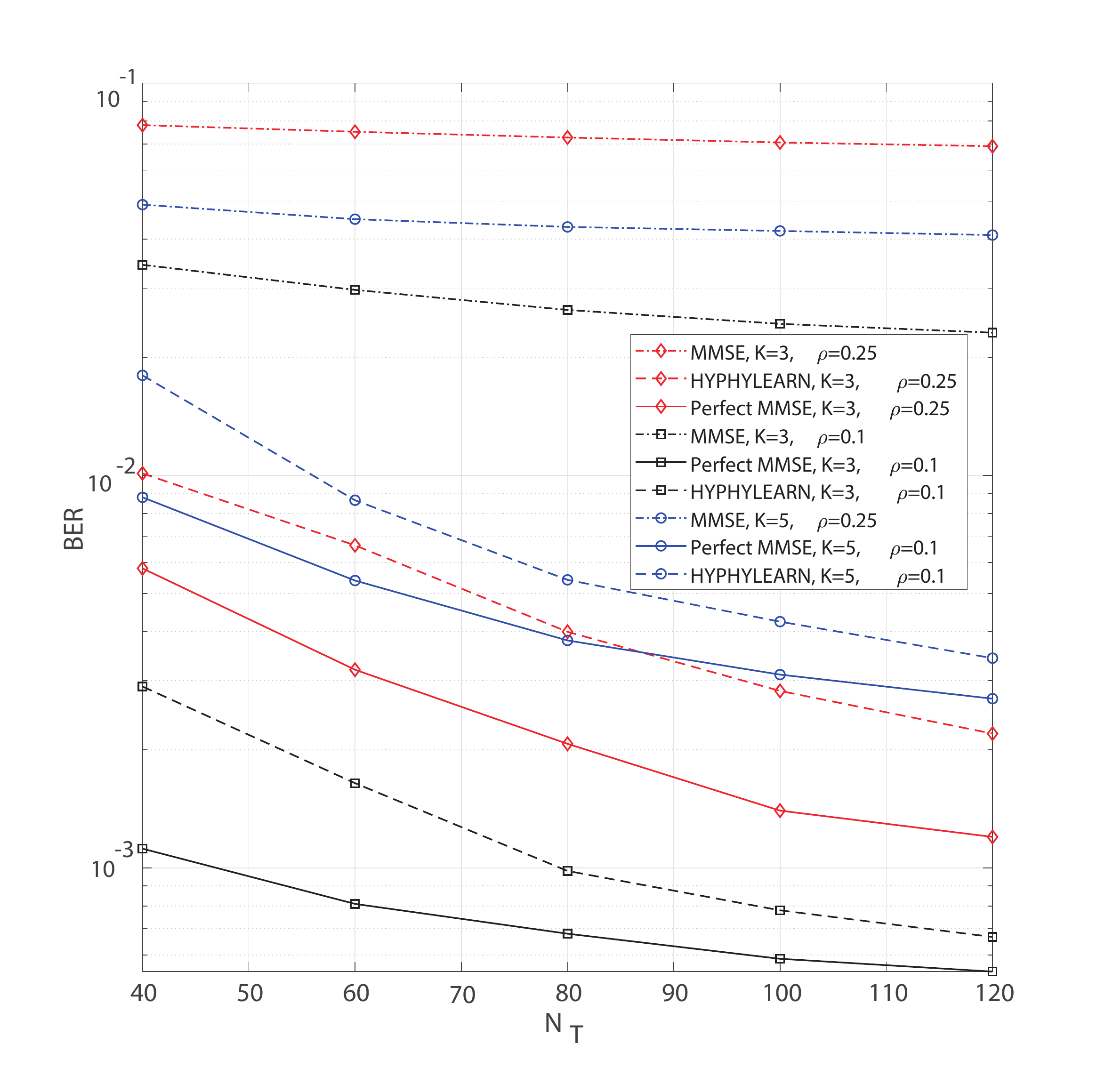} 
        \caption{BER performance of the MUD as a function of {the} number of {training} data available at each user.}
        \label{Fig:CDMA2}
 \end{figure}
\section{Conclusions}
\label{section:Conclusions}
We have considered the problem of hypothesis testing in the context of parametric classification where there is known model for each hypotheses but the corresponding parameters are unknown. Towards designing a classifier in this setting, we have taken into account several practical considerations including the assumptions that available training data are limited and there could be labeling errors associated with them. Furthermore, the model under each hypothesis is assumed to be complex such that the MLEs of its parameters is computationally intractable. In this vein, we have proposed to use sub-optimal parameter estimation algorithms for this purpose and generate synthetic data leveraging the model knowledge. Then, we have utilized the domain adversarial framework for learning a classifier using these synthetic data and the empirical training data. We have shown the applicability of our proposed approach in two tangible communication scenarios, i.e., spoofing detection and multiuser detection problems where the rich and complex models are available for the real data. We have finally shown through numerical results the superiority of our proposed approach in designing a classifier under the aforementioned practical limitations w.r.t to the existing statistical and machine learning methods.  
\appendices
  \section{Proof of Lemma \ref{lemma:Rademachar-A_distance}}
  \label{Appen:lemma:Rademachar-A_distance}
  We apply Lemma \ref{lemma:technical1} to the distributions $p_{{\bm{\psi},{\bm{\theta}^*}}}(\mathbf{z})$ and $p_{\bm{\psi},{\hat{\bm{\theta}}}}(\mathbf{z})$ for the functions of the form  $\mathds{1}_{\{h_{\bm{\phi}}(\mathbf{z})=1\}}$ where $h_{\bm{\phi}}\in \mathcal{H}_{\bm{\Phi}}$.
The resulting inequality for $p_{{\bm{\psi},{\bm{\theta}^*}}}(\mathbf{z})$, for instance, would be $2R_{\mathcal{Z}_{{\bm{\psi},{\bm{\theta}^*}}}}(\mathcal{H}_\Phi)+3\sqrt{(\log{2\delta})/2N_r}\geq
   \int_{A_{\bm{\phi}}}p_{{\bm{\psi},{\bm{\theta}^*}}}(\mathbf{z})d\mathbf{z}-\sum_{i=1}^{N_r}\mathds{1}_{\{h_\phi(\mathbf{z})=1\}}$ where ${A}_{\bm{\phi}}=\{\mathbf{z}|h_{\bm{\phi}}(\mathbf{z})=1,\mathbf{z}\in\mathcal{Z},h_{\bm{\phi}}\in\mathcal{H}_{\bm{\Phi}}\}$. By summing the corresponding sides of the resulting inequalities, we can write (\ref{eq:lemma_A_empirical_line0})-(\ref{eq:lemma_A_empirical_line4}) at the bottom of this page where (\ref{eq:lemma_A_empirical_line2}) and (\ref{eq:lemma_A_empirical_line3}) follows from the inequalities $|C|+|D|\geq|C-D|\geq|C|-|D|$.
       
    \section{Proof of Theorem \ref{theorem:mainbound}}
  \label{Appen:theorem:mainbound}
  Starting from adding and subtracting the terms, $ \mathbb{P}_{\bm{\psi},{\hat{\bm{\theta}}}}[{e}_{\bm{\phi}_1}]$ to one side of $\mathbb{P}_{{\bm{\psi},{\bm{\theta}^*}}}[{e}_{\bm{\phi}_1}]= \mathbb{P}_{{\bm{\psi},{\bm{\theta}^*}}}[{e}_{\bm{\phi}_1}]$, we get
\small
   \begin{subequations}
\begin{flalign}
&\mathbb{P}_{{\bm{\psi},{\bm{\theta}^*}}}[{e}_{\bm{\phi}_1}]=  \mathbb{P}_{{\bm{\psi},{\bm{\theta}^*}}}[{e}_{\bm{\phi}_1}]+  \mathbb{P}_{\bm{\psi},{\hat{\bm{\theta}}}}[{e}_{\bm{\phi}_1}]- \mathbb{P}_{\bm{\psi},{\hat{\bm{\theta}}}}[{e}_{\bm{\phi}_1}]\leq\\&
   \mathbb{P}_{\bm{\psi},{\hat{\bm{\theta}}}}[{e}_{\bm{\phi}_1}]+\big|\mathbb{P}_{\bm{\psi},{\hat{\bm{\theta}}}}[{e}_{\bm{\phi}_1}]-\mathbb{P}_{\bm{\psi},{\hat{\bm{\theta}}}}[{e}_{\bm{\phi}_1}]\big|\leq\\&
   \mathbb{P}_{\bm{\psi},{\hat{\bm{\theta}}}}[{e}_{\bm{\phi}_1}]+
    \frac{1}{2}d_{{\mathcal{B}}_\Phi}(p_{{\bm{\psi},{\bm{\theta}^*}}}(\mathbf{z}),p_{\bm{\psi},{\hat{\bm{\theta}}}}(\mathbf{z}))\label{eqn:line-2}\leq \\& \mathbb{P}_{\bm{\psi},{\hat{\bm{\theta}}}}[{e}_{\bm{\phi}_1}]+
    \frac{1}{2}\hat{d}_{\mathcal{A}_\Phi}(\mathcal{Z}_r,\mathcal{Z}_s)+R_{\mathcal{Z}_r}(\mathcal{H}_\Phi)+R_{\mathcal{Z}_s}(\mathcal{H}_\Phi)\\&+\frac{3}{2}\sqrt{(\log{2/\delta})/2N_r}+\frac{3}{2}\sqrt{(\log{2/\delta})/2N_s)}\label{eqn:line-3},
\end{flalign}
\end{subequations}
\normalsize
where (\ref{eqn:line-2}) stems from the definition of $d_{\mathcal{B}_\Phi}$. Also, (\ref{eqn:line-3}) is a result of Lemma \ref{lemma:Rademachar-A_distance} and noting that $d_{\mathcal{A}_{\Phi}}$ is an upper bound for $d_{\mathcal{B}_{\Phi}}$.
\section{Proof of Lemma \ref{lemma:Dist-of-CFR}}
\label{Appen:lemma:Dist-of-CFR}
Note that $\mathbf{d}_k$ in (\ref{eq:variablepart-channel}) is a linear combination of $L$ Gaussian random variables $A_{u,l}\sim\mathcal{CN}\big(\mathbf{0},\text{Var}(A_{u,l})\big)$ where $\mathbb{E}[A_{u,l_1}A_{u,l_2}]$ for $\forall l_1\neq l_2$ under WSSUS assumption. Therefore, $\mathbf{d}_k$ is also Gaussian with mean
\small
\begin{align}
    &\mathbb{E}\big[\mathbf{q}_u[m]\big]=\sum_{l=0}^{L-1} \mathbb{E}\big[A_{u,l}e^{-j2\pi(f_0-W/2+m\Delta f)l/W}\big]=\nonumber\\&\sum_{l=0}^{L-1} \mathbb{E}\big[A_{u,l}\big]e^{-j2\pi(f_0-W/2+m\Delta f)l/W}=0,
\end{align}
\normalsize
and variance
\begin{align}
    &\text{Var}\big[\mathbf{q}_u[m]\big]=\sum_{l=0}^{L-1}  \text{Var}\big[A_{u,l}e^{-j2\pi(f_0-W/2+m\Delta f)l/W}\big]=\nonumber\\&\sum_{l=0}^{L-1}  \text{Var}\big[A_{u,l}\big]=\alpha^2(1-e^{-2\pi\beta L}).
\end{align}
The diagonal elements of $\mathbf{R}$ equal to $\text{Var}\big[\mathbf{q}_u[m]\big]$. For the $(m,n)$th element ($m\neq n$), on the other hand, we can write
\begin{subequations}
\begin{align}
    &\text{Cov}[\mathbf{q}_u[m],\mathbf{q}_u[n]]=\mathbb{E}\big[\mathbf{q}_u[m]\mathbf{q}_u[n]^*\big]\label{eq:cov-for-the-CFR__1}\\
    &=\sum_{l=0}^{L-1}  \mathbb{E}\big[A_{u,l}A_{u,l}\big]e^{-j2\pi[(f_0-W/2+m\Delta f)l-(f_0-W/2+n\Delta f)l]/W}\label{eq:cov-for-the-CFR__2}\\
    &=\sum_{l=0}^{L-1}  \text{Var}\big[A_{u,l}A_{u,l}\big]e^{j2\pi(n-m)\Delta fl/W}\\&=\sum_{l=0}^{L-1}  \sigma^2_T(1-e^{-2\pi\beta})e^{-2\pi\beta L}e^{j2\pi(n-m)\Delta fl/W}\label{eq:cov-for-the-CFR__3}\\
    &=\frac{\alpha^2(1-e^{-2\pi\beta})(1-e^{-2\pi L(\beta-\frac{(n-m)j}{M})})}{(1-e^{-2\pi (\beta-\frac{(n-m)j}{M})})}.\label{eq:cov-for-the-CFR__4}
\end{align}
\label{eq:cov-for-the-CFR}
\end{subequations}
As $\text{Cov}[\mathbf{q}_u[m],\mathbf{q}_u[n]]$ only depends on the difference $n-m$, and it equals to complex conjugate of $\text{Cov}[\mathbf{q}_u[n],\mathbf{q}_u[m]]$ the proof is completed.
\section{Proof of Lemma \ref{lemma:Dist_Null_Hypothesis}}
\label{Appen:lemma:Dist_Null_Hypothesis}
Noting that $\mathbf{q}_{u+1}=\mathbf{h}^A_{u+1}$ under $\mathcal{H}_0$ along side with Lemma \ref{lemma:Dist-of-CFR}, we conclude that  $\mathbf{q}_{u+1}-\mathbf{q}^A_{u}$ is normally distributed with zero mean.
Regarding the covariance matrix derivation, we first note that 
\begin{subequations}
\begin{align}
    &\mathbb{E}\big[A^A_{u+1,l}A^A_{u,l}\big]=\mathbb{E}\big[a^AA^A_{u,l}A^A_{u,l}\big]+\\&\ \ \ \ \ \ \ \ \ \ \ \ \mathbb{E}\big[\sqrt{(1-(a^A)^2)\text{Var}(A_{u+1,l})}u_{k+1,l}A^A_{u,l}\big]\\
    &=a^A(\alpha^A)^2(1-e^{-2\pi\beta^A})e^{-2\pi\beta^A l}.
\end{align}
\label{eq:lemma-null-temporal}
\end{subequations}
As an immediate result of the above equation and (\ref{eq:cov-for-the-CFR__2}), we can write $
    \mathbb{E}\big[\mathbf{q}^A_{u+1}[m]\mathbf{q}^A_{u}[n]^*\big]=a^A\kappa\bigg(\theta_{\mathbf{q}}^A,\frac{n-m}{M}\bigg)$. Then, the diagonal elements of $\mathbf{R}_{\mathbf{q},\mathcal{H}_0}$, we have
    \small
\begin{subequations}
\begin{flalign}
    &\text{Var}\big[\mathbf{q}^A_{u+1}[m]-\mathbf{q}^A_{u}[m]\big]=2\text{Var}[\mathbf{q}^A_{u}[m]]-2\mathbb{E}\big[\mathbf{q}^E_{u+1}[m]\mathbf{q}^A_{u}[m]^*\big]\\
    &=2\kappa(\theta_{\mathbf{q}}^A,0)-2a^A\kappa(\theta_{\mathbf{q}}^A,0)=2(1-a^A)\kappa(\theta_{\mathbf{q}}^A,0).&
\end{flalign}
\end{subequations}
\normalsize
 For the off-diagonal elements ($m\neq n)$ we can write
 \small
\begin{subequations}
\begin{flalign}
    &\text{Cov}\big[\mathbf{q}^A_{u+1}[m]-\mathbf{q}^A_{u}[m],\mathbf{q}^A_{u+1}[n]-\mathbf{q}^A_{u}[n]\big]\\
    &=\mathbb{E}[\mathbf{q}^A_{u+1}[m]\mathbf{q}^A_{u+1}[n]^*]-\mathbb{E}[\mathbf{q}^A_{u+1}[m]\mathbf{q}^A_{u}[n]^*]\\&-\mathbb{E}[\mathbf{q}^A_{u+1}[n]\mathbf{q}^A_{u}[m]^*]+\mathbb{E}[\mathbf{q}^A_{u}[m]\mathbf{q}^A_{u}[n]^*]\\&=\kappa\bigg(  \theta_{\mathbf{q}}^A,\frac{n-m}{M}\bigg)-a^A\kappa\bigg(\theta_{\mathbf{q}}^A,\frac{n-m}{M}\bigg)-a^A\kappa\bigg(\theta_{\mathbf{q}}^A,\frac{n-m}{M}\bigg)\\&+\kappa\bigg(\theta_{\mathbf{q}}^A,\frac{n-m}{M}\bigg)=2(1-a^A)\kappa\bigg(\theta_{\mathbf{q}}^A,\frac{n-m}{M}\bigg).
\end{flalign}
\end{subequations}
\normalsize
Finally, as the values of the off-diagonal elements only depend on the difference between the indices, the Toeplitz structure of $\mathbf{R}_{q,\mathcal{H}_0}$ is deduced.
\section{Proof of Lemma \ref{lemma:Dist-Alternate-Hypothesis}}
\label{Appen:lemma:Dist-Alternate-Hypothesis}
Similar to the null hypothesis, normality of $\mathbf{q}_{u+1}-\mathbf{q}^A_u|\mathcal{H}_1$ with a zero mean is deduced from Lemma \ref{lemma:Dist-of-CFR}. Now, considering (\ref{eq:lemma-null-temporal}) with a similarity parameter $a^E$ along with (\ref{eq:cov-for-the-CFR__2}) we can write
\begin{align}
    \mathbb{E}\big[\mathbf{d}^E_{u+1}[m]\mathbf{d}^A_{u}[n]^*\big]=a^E\kappa\bigg(\theta_{\mathbf{q}}^A,\frac{n-m}{M}\bigg).
\end{align}
Then, the diagonal element of $\mathbf{R}_{\mathbf{q},\mathcal{H}_1}$ can be computed through
\begin{subequations}
\begin{align}
    &\text{Var}\big[\mathbf{q}^E_{u+1}[m]-\mathbf{q}^A_{u}[m]\big]
    =\text{Var}\big[\mathbf{q}^E_{u+1}[m]\big]+\text{Var}\big[\mathbf{q}^A_{u}[m]\big]\\&-2\mathbb{E}\big[\mathbf{q}^E_{u+1}[m]\mathbf{q}^A_{u}[m]^*\big]=\kappa(\theta_{\mathbf{q}}^E,0)+\kappa(\theta_{\mathbf{q}}^A,0)\\&-2a^E\kappa(\theta_{\mathbf{q}}^A,0)=\kappa'(a^E,\theta_{\mathbf{q}}^A,\theta_{\mathbf{q}}^E,0).
\end{align}
\end{subequations}
Similarly for the off-diagonal elements we can write
\small
\begin{subequations}
\begin{align}
    &\text{Cov}\big[\mathbf{q}^E_{u+1}[m]-\mathbf{q}^A_{u}[m],\mathbf{q}^E_{u+1}[n]-\mathbf{q}^A_{u}[n]\big]=\mathbb{E}[\mathbf{q}^E_{u+1}[m]\mathbf{q}^E_{u+1}[n]^*]\\
    &-\mathbb{E}[\mathbf{q}^E_{u+1}[m]\mathbf{q}^A_{u}[n]^*]-\mathbb{E}[\mathbf{q}^E_{u+1}[n]\mathbf{q}^A_{u}[m]^*]+\mathbb{E}[\mathbf{q}^A_{u}[m]\mathbf{q}^A_{u}[n]^*]\\&=\kappa\bigg(\theta_{\mathbf{q}}^E,\frac{n-m}{M}\bigg)-2a^E\kappa\bigg(\theta_{\mathbf{q}}^A,\frac{n-m}{M}\bigg)
    +\kappa\bigg(\theta_{\mathbf{q}}^A,\frac{n-m}{M}\bigg)\\&=
    \kappa'(a^E,\theta_{\mathbf{q}}^A,\theta_{\mathbf{q}}^E,\frac{n-m}{M}),
\end{align}
\end{subequations}
\normalsize
which imposes a Toeplitz structure for $\mathbf{R}_{\mathbf{q},\mathcal{H}_1}$.

\section{A heuristic approach for channel parameter estimation for CDMA system}
\label{Appen:A heuristic approach for channel parameter estimation for CDMA system}
In this appendix, we present an overview of the channel parameter estimation technique described in \cite{Buzzi} for completeness. {The} estimation process start with finding the parameters of a single path, i.e., it initially assumes $L=1$. Then, it forms an $MN+8M-1\times 1$ vector $\mathbf{m}_k=\widehat{\mathbf{g}}_k\circ \widehat{\mathbf{g}}^*_k$ for the $k$th user given $\widehat{\mathbf{g}}_k$. Next, it obtains the sliding window correlation between the entries
of $\mathbf{m}_k$ and the samples of the raised cosine waveform given by
\small
\begin{align}
    \mathbf{q}_k(l)=\sum_{i=1}^{8M-1}\mathbf{m}_k    (l+i-1)|h_{RC}(iT_c/M)|^2,\ l=1,\dots,MN+1.
\end{align}
\normalsize
It is argued in \cite{Buzzi} the index of the maximum element of $\mathbf{q}_k$ denoted by $i_k$ gives information on the $k$th user's delay. Subsequently, the entries of $\mathbf{m}_k$ contributing to this peak are denoted by $\mathbf{p}_k=[\mathbf{m}_k(i_k+1),\dots,\mathbf{m}_k(i_k+8M)]$. Next, an interval $[(i_k-2)T_c/M+Tc/(10M),(i_k+2)T_c/M-Tc/(10M)]$ with a predefined resolution of $T_c/10M$ is spanned. Then, an $n'$ is found as the index for which $\gamma^T_{n'}\mathbf{p}_k>\max_{n\neq n'}\gamma^T_n\mathbf{p}_k$, where 

\small
\begin{align}
&\gamma_n=\Big[\big|h_{RC}\big(\frac{T_c}{M}+\frac{nT_c}{10M}\big)\big|^2,\big|h_{RC}\big(\frac{2T_c}{M}+\frac{nT_c}{10M}\big)\big|^2,\dots,\nonumber\\&\ \ \ \ \ \ \ \ \ \ \ \ \ \ \ \ \ \ \ \ \ \ \ \ \ \ \ \ \ \ \ \ \ \ \ \ \ \ \ \  \big|h_{RC}\big(\frac{8T_c}{M}+\frac{nT_c}{10M}\big)\big|^2\Big],
\end{align}
\normalsize
and {$n\in[-19,\dots,19]$}. In this way, the delay can be estimated by $\widehat{\tau}_{k,0}=i_k\frac{T_c}{M}+n'\frac{T_c}{10M}$ with an approximation error of $T_c/(10M)$. Regarding estimation of the phase and the amplitude, first the following vectors of length $8M$ are obtained:

\small
\begin{flalign}
&\mathbf{\Psi}_{k,0}=\bigg[h_{RC}\Big(\frac{(i_k+1)T_c}{M}-\widehat{\tau}_{k,0}\Big),\dots,\nonumber\\&\ \ \ \ \ \ \ \ \ \ \ \ \ \ \ \ \ \ \ \ \ \ \ \ \ \ \ \ \ \ \ \ h_{RC}\Big(\frac{(i_k+8M)T_c}{M}-\widehat{\tau}_{k,0}\Big)\bigg],\\
&\mathbf{g}'_k=[\widehat{\mathbf{g}}_k(i_k+1),\widehat{\mathbf{g}}_k(i_k+2),\dots,\widehat{\mathbf{g}}_k(i_k+8M)].
\end{flalign}
\normalsize
Then, $\widehat{a}_{k,0}$ and $\widehat{\phi}_{k,0}$ are obtained as the magnitude and phase of the complex quantity $\frac{\mathbf{\Psi}_{k,0}^H\mathbf{g}'_k}{||\mathbf{\Psi}_{k,0}||^2}$. For estimating the parameters of a multipath channel ($L\geq2$), the estimation procedure in \cite{Buzzi} relies on a recursive adoption of the single path estimation algorithm. In short, first the above single path estimation algorithm is applied in order to estimate the parameters corresponding to the strongest path. Then, the contribution of this path is subtracted from the estimated CIR $\widehat{\mathbf{g}}$ and the result is denoted by $\widehat{\mathbf{g}}_1$. Next, the single-path estimation method is applied to $\widehat{\mathbf{g}}_1$ which leads to formation of $\widehat{\mathbf{g}}_2$. {Iterating this procedure $L$ times results in estimating all the channel parameters.}

\end{document}